\numberwithin{equation}{section}
\theoremstyle{plain}
\newtheorem{theorem}{Theorem}
\newtheorem{lemma}[theorem]{Lemma}
\newtheorem{proposition}[theorem]{Proposition}
\theoremstyle{definition}
\newtheorem{definition}[theorem]{Definition}
\newtheorem{remark}[theorem]{Remark}
\newtheorem{assumption}{Assumption}
\newcommand{\defeq}{:=}
\DeclareMathOperator*{\argmin}{arg\,min}
\newcommand{\sign}[1]{\mathrm{sign}(#1)}
\renewcommand{\Pr}{\operatorname{Pr}}
\newcommand{\EXP}{\operatorname{\mathbb{E}}}
\newcommand{\err}{\operatorname{err}}
\newcommand{\R}{\mathbb{R}}
\newcommand{\Rd}{\mathbb{R}^d}
\newcommand{\calX}{\mathcal{X}}
\newcommand{\calY}{\mathcal{Y}}
\newcommand{\calH}{\mathcal{H}}
\newcommand{\twonorm}[1]{\left\lVert #1 \right\rVert_{2}}
\newcommand{\sumnorm}[1]{\mathrm{LinSumNorm}\left( #1 \right)}
\newcommand{\lambdamax}{\lambda_{\max}}
\newcommand{\mumax}{\mu_{\max}}
\newcommand{\oracle}{\mathrm{EX}(D, w^*, \eta)}
\newcommand{\SC}{S_{\mathrm{C}}}
\newcommand{\SP}{S_{\mathrm{P}}}
\newcommand{\SPP}{S_{\bar{\mathrm{P}}}}
\newcommand{\barS}{\bar{S}}
\newcommand{\barSC}{\bar{S}_{\mathrm{C}}}
\newcommand{\barSD}{\bar{S}_{\mathrm{D}}}
\newcommand{\SD}{S_{\mathrm{D}}}
\newcommand{\citep}[1]{\cite{#1}}
\newcommand{\citet}[1]{\cite{#1}}
\title{Efficient PAC Learning of Halfspaces with \\
Constant Malicious Noise Rate}
\author{Jie Shen\\
Stevens Institute of Technology\\
\texttt{jie.shen@stevens.edu}}
\begin{document}

\maketitle

\begin{abstract}
Understanding noise tolerance of machine learning algorithms is a central quest in learning theory. In this work, we study the problem of computationally efficient PAC learning of halfspaces in the presence of malicious noise, where an adversary can corrupt both instances and labels of training samples. The best-known noise tolerance either depends on a target error rate under distributional assumptions or on a margin parameter under large-margin conditions. In this work, we show that when both types of conditions are satisfied, it is possible to achieve {\em constant} noise tolerance by minimizing a reweighted hinge loss. Our key ingredients include: 1) an efficient algorithm that finds weights to control the gradient deterioration from corrupted samples, and 2) a new analysis on the robustness of the hinge loss equipped with such weights.
\end{abstract}

\section{Introduction}\label{sec:intro}
We study the problem of learning halfspaces, a fundamental subject in learning theory~\citep{rosenblatt1958perceptron,cortes1995support}. In the absence of noise, it is known that this problem can be efficiently solved via linear programming~\citep{maass1994fast}. However, when some training samples are corrupted, developing efficient algorithms that are resilient to noise becomes challenging. The study of this question has a long history in robust statistics~\citep{huber1964robust} and in learning theory~\citep{valiant1985learning,angluin1988learning}. In this work, we consider the malicious noise model~\citep{valiant1985learning,kearns1988learning}, possibly the strongest noise, under the probably approximately correct (PAC) learning framework~\citep{valiant1984theory}.

Let $\calX \defeq \R^d$ be the instance space and $\calY \defeq \{-1, 1\}$ be the label space. The data distribution $D$ is a joint probability distribution over $\cal X \times \calY$. We denote by $D_X$ the marginal distribution of $D$ on $\calX$. The hypothesis class that we aim to learn is the class of homogeneous halfspaces $\calH := \{ h_w: x \mapsto \sign{w \cdot x}, \twonorm{w}=1 \}$. Since any of such halfspace $h_w  \in \calH$ is uniquely parameterized by a vector $w$, we will often refer to a halfspace $h_w$ by the vector $w$. Given a distribution $D$, the error rate of a halfspace $w$ is defined as $\err_D({w}) := \Pr_{(x, y) \sim D}(y \neq \sign{{w} \cdot x} )$.

Under the PAC learning framework, there is a learner and an adversary, and the learning problem under the malicious noise model is described as follows:
\begin{definition}[Learning with malicious noise]\label{def:malicious-noise}
Let $\epsilon, \delta \in (0, 1)$ be a target error rate and failure probability, respectively. The adversary $\oracle$ chooses $D$, $w^*$, and $\eta \in [0, \frac12)$ and fixes them throughout the learning process, such that for all $(x, y) \sim D$, $y = \sign{w^* \cdot x}$. Each time the learner requests a sample from the adversary, with probability $1-\eta$, the adversary draws a clean sample $(x, y)$ from $D$ and returns it to the learner; with probability $\eta$, the adversary may return an arbitrary dirty sample $(x, y) \in \calX\times \calY$. The parameter $\eta$ is referred to as noise rate. The goal of the learner is to find a halfspace $\hat{w} \in \calH$, such that with probability at least $1-\delta$ (over the random draws of samples and all internal randomness of the learning algorithm), it holds that $\err_D(\hat{w}) \leq \epsilon$ for any $D$, $w^*$, and $\eta$.
\end{definition}

It should be noted that the adversary is assumed to have unlimited computational power to search for dirty samples and has full knowledge of the current state and history of the learning algorithm.

A core quest of learning halfspaces with malicious noise is to characterize the noise tolerance of learning algorithms, namely, how large the noise rate $\eta$ can be such that there is still some learning algorithm that PAC learns $\calH$. \citet{kearns1988learning} showed that the information-theoretic limit of noise tolerance of any algorithm is $\frac{\epsilon}{1+\epsilon}$ if no restriction on $D$ is imposed. They also designed an {\em inefficient} algorithm with a near-optimal noise tolerance $\eta = \Omega(\epsilon)$, and an efficient algorithm with noise tolerance $\Omega(\epsilon/d)$. A large body of subsequent works then investigated the possibility of achieving optimal noise tolerance with efficient algorithms, usually under various conditions. 

Generally speaking, there are two types of conditions that are assumed in order to obtain improved noise tolerance: distributional assumptions and large-margin assumptions. In particular, when the marginal distribution $D_X$ is uniform over the unit sphere, the noise tolerance was improved to $\eta = \widetilde{\Omega}(\epsilon / d^{1/4})$ by~\citet{kalai2005agnostic}, and this was further improved to $\widetilde{\Omega}(\epsilon^2 / \log(d/\epsilon))$ by~\citet{klivans2009learning}. Beyond the uniform distribution, \citet{klivans2009learning} derived an algorithm that achieved noise tolerance $\Omega(\epsilon^3/\log^2(d/\epsilon))$ for the family of isotropic log-concave distributions, a broad class that includes Gaussian, exponential, and logistic distributions, among others. However, even under distributional assumptions, the information-theoretic limit $\frac{\epsilon}{1+\epsilon}$ remained unattainable for a long time. In the seminal work of \citet{awasthi2017power}, a soft outlier removal scheme was carefully integrated into an active learning framework that gave a near-optimal noise tolerance $\eta=\Omega(\epsilon)$ under  isotropic log-concave distributions. Such result was later obtained for a more general noise model, the nasty noise \citep{bshouty2002pac}, by \cite{diakonikolas2018learning} under the standard Gaussian distribution, which was later relaxed to isotropic log-concave distributions in \citet{shen2023linearmalicious}.

Another line of works assume that the clean samples are separable by the target halfspace with a margin $\gamma > 0$. \citet{servedio2003malicious} developed a boosting algorithm, termed smooth boosting, that achieved $\eta = \Omega(\epsilon\gamma)$ noise tolerance. Later, \citet{long2011learning} improved the noise tolerance to $\eta = \Omega(\epsilon\gamma\sqrt{\log{1/\gamma}})$ and showed that any algorithm that minimizes a convex surrogate loss can only obtain $\eta = O(\epsilon\gamma)$. One may have observed that when the margin parameter $\gamma$ is small, for example $\gamma = 1/d$, such bounds on noise tolerance are not better than the $\Omega(\epsilon/d)$ given by \cite{kearns1988learning}. In fact, for the problem of learning large-margin halfspaces, though results established in the literature usually hold for any $\gamma > 0$, it is known that the interesting regime is $\gamma = \Omega(1/\sqrt{d})$ in order to surpass the standard sample complexity bound from VC theory \citep{shalev2014understanding}.

Recently, \cite{talwar2020hinge} showed that the vanilla support vector machines are indeed robust to the malicious noise with noise tolerance $\eta = \Omega(\gamma)$, as far as the $\gamma$-margin condition  with $\gamma = \Omega(\frac{\log(1/\epsilon)}{\sqrt{d}})$ and a dense pancake condition are both satisfied, where the latter holds when the underlying distribution $D$ is a mixture of log-concave distributions. This result is significant, in the sense that it was the first time that the large-margin and distributional assumptions are made compatible and are utilized to obtain improved noise tolerance.

We note that in all prior works, the noise tolerance is inherently confined by either the target error rate or the margin. On the other hand, the information-theoretic limit, $\frac{\epsilon}{1+\epsilon}$, does {\em not} assert the optimality of the established results since it was obtained in a distribution- and margin-free sense. Indeed, the ${\Omega}(\epsilon \gamma)$ bound already suggested that the nature of the problem may change under structural assumptions on data. This raises a central research question: What is the real noise tolerance of efficient algorithms under standard data assumptions? Since $\eta$ must be less than $\frac12$, does there exist an efficient algorithm with $\eta = \Omega(1)$?

\subsection{Main results}

In this work, following \cite{talwar2020hinge}, we assume that both large-margin and distributional conditions are satisfied, and propose an efficient algorithm that achieves noise tolerance $\eta = \Omega(1)$.

\begin{assumption}[Large-margin]\label{as:margin}
Any set $\SC$ of finite clean samples is $\gamma$-margin separable by the target halfspace $w^*$ for some $\gamma > 0$. That is, for all $(x, y) \in \SC$, $y \cdot w^* \cdot x \geq \gamma$.
\end{assumption}

\begin{assumption}[Log-concave mixtures]\label{as:log-concave}
$D_X$ is a uniform mixture of $k$ distributions $D_1, \dots, D_k$, i.e., $D_X = \frac{1}{k} \sum_{j=1}^k D_j$. In addition, for all $1 \leq j \leq k$, $D_j$ is a log-concave distribution with mean $\mu_j$ and covariance matrix $\Sigma_j$, satisfying $\| \mu_j \|_2 \leq r$ and $\Sigma_j \preceq \frac{1}{d} I_d$ for some $r > 0$.
\end{assumption}

\begin{theorem}[Main result]\label{thm:main}
There exists an algorithm (Algorithm~\ref{alg:main}) satisfying the following.
For any $\epsilon \in (0, \frac{2}{3}), \delta \in (0, 1)$, if Assumptions~\ref{as:margin} and \ref{as:log-concave} are satisfied with $\gamma \geq \frac{16 \log(2/\epsilon)}{\sqrt{d}}$, $r \leq 2 \gamma$, $k \leq 64$, and if the malicious noise rate $\eta \leq  \frac{1}{2^{32}}$, then by drawing $n = 2^{17} \cdot d \cdot \log^4 \frac{8d}{\epsilon\delta}$ samples from the adversary $\oracle$, with probability $1-\delta$, Algorithm~\ref{alg:main} returns a halfspace $\hat{w}$ such that $\err_D(\hat{w}) \leq \epsilon$. In addition, Algorithm~\ref{alg:main} runs in polynomial time.
\end{theorem}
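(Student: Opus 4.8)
Here is a plan for proving Theorem~\ref{thm:main}.

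The plan is to read Algorithm~\ref{alg:main} as robust estimation of the hinge-loss gradient followed by convex optimization, and then to turn closeness of the returned parameter to the target into a small $0$--$1$ error using the margin together with the tail decay of log-concave distributions. Write $n$ for the number of samples drawn from $\oracle$, $\SC$ for the (random) set of clean samples, and $\SP$ for the corrupted ones. First I would pin down the corruption budget: a Chernoff bound gives $\abs{\SP}\le 2\eta n$ and $\abs{\SC}\ge(1-2\eta)n$ with probability $1-\delta/4$. Because $\SC$ is an i.i.d.\ sample from $D$, Assumption~\ref{as:log-concave} (in particular $k=O(1)$) yields the structural facts I will rely on, all with high probability: (i) a relative-error estimate of the empirical second moment, $\spenorm{\frac1n\sum_{i\in\SC}x_i x_i^{\top}}=O(\gamma^{2})$ — the point being that $\spenorm{\EXP_D[xx^{\top}]}\le\frac1d+r^{2}=O(\gamma^{2})$ by $\Sigma_j\preceq\frac1d I_d$, $r\le2\gamma$, and $\gamma\ge16/\sqrt d$, while the fluctuation is only a $\widetilde O(\sqrt{d/n})$-fraction of this and hence negligible once $n=\widetilde\Omega(d)$; (ii) sub-exponential tails $\Pr_{x\sim D}(\abs{u\cdot x}\ge t)\le2\exp(-c\,t\sqrt d)$ for every unit $u$ and $t$ above the $O(\gamma)$ scale, which also gives $\EXP_D\abs{w^{*}\cdot x}=O(\gamma)$; and (iii) a dimension-free uniform-convergence bound: for every $w$ and every fixed unit $u$, $\frac1n\sum_{i\in\SC}y_i(u\cdot x_i)\,\mathbb{1}[y_i\,w\cdot x_i<1]$ is within $O(\gamma)\cdot\widetilde O(\sqrt{d/n})=\gamma\cdot o(1)$ of its expectation, since $\mathrm{Var}_D(u\cdot x)=O(\gamma^{2})$ for every $u$.

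Next I would use the weight-finding subroutine to produce $q\in[0,1]^{n}$ with $\sum_i q_i\ge(1-O(\eta))n$ and $\spenorm{\frac1n\sum_i q_i x_i x_i^{\top}}=O(\gamma^{2})$; such $q$ exists because the indicator of $\SC$ is feasible by fact~(i), and it can be found in polynomial time (a convex/SDP feasibility program, or iterative spectral filtering). The key consequence is that for \emph{any} index set $T$ and \emph{any} unit $u$, $\abs{\tfrac1n\sum_{i\in T}q_i y_i(u\cdot x_i)}\le\sqrt{\tfrac1n\bigl(\sum_{i\in T}q_i\bigr)\cdot\spenorm{\tfrac1n\sum_i q_i x_i x_i^{\top}}}$, so taking $T=\SP$ and $T$ equal to the removed clean mass shows that both the corrupted points' contribution to any directional component of the hinge gradient and the distortion of the clean part are $O(\gamma\sqrt{\eta})$. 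This is the heart of the argument: the adversary's pull on the gradient is forced down to a $\sqrt\eta$-fraction of the $\Omega(\gamma)$ margin signal, with the $\gamma$'s cancelling, which is precisely why the tolerable $\eta$ need not shrink with $\epsilon$ or $d$ — escaping the $\Omega(\epsilon\gamma)$ barrier for minimizing a convex surrogate.

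I would then study $\hat w_0$, the minimizer of the reweighted (regularized) hinge objective $\hat L(w)\defeq\frac1n\sum_i q_i[1-y_i\,w\cdot x_i]_{+}+\frac\lambda2\twonorm{w}^{2}$, a convex program solved to high accuracy by (projected) subgradient descent. Its first-order condition is $\lambda\hat w_0=\frac1n\sum_i q_i y_i x_i\,\mathbb{1}[y_i\,\hat w_0\cdot x_i<1]$. Splitting the right-hand side into clean-population, removed-clean, corrupted, and clean-fluctuation parts and projecting onto the plane spanned by $\{\hat w_0,w^{*}\}$, facts (i)--(iii) and the weight bound control everything except the clean-population term by $O(\gamma\sqrt\eta)+\gamma\cdot o(1)$. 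The remaining and most delicate piece is the structural claim that the clean population hinge gradient keeps $\Omega(\gamma)$ of signal pointing from $\hat w_0$ toward $w^{*}$: one lower-bounds $\inner{\EXP_D[yx\,\mathbb{1}[y\,\hat w_0\cdot x<1]]}{w^{*}}$ by $\gamma\cdot\Pr_D(y\,\hat w_0\cdot x<1)$ using $y(w^{*}\cdot x)=\abs{w^{*}\cdot x}\ge\gamma$ on clean data, and complements it with a matching $O(\gamma)$ bound on the $w^{*}$-orthogonal part of the same gradient (leveraging $\twonorm{\mu_j}\le2\gamma$, $\EXP_D\abs{w^{*}\cdot x}=O(\gamma)$, and a choice of $\lambda$ that pins $\twonorm{\hat w_0}$ to the scale $\Theta(1/\gamma)$ natural for the margin). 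Since $\sqrt\eta\ll\log(1/\epsilon)$ when $\eta\le2^{-32}$, the corruption is negligible against the signal, giving $\angle(\hat w_0,w^{*})\le\theta_0$ for a small absolute constant $\theta_0$, i.e.\ $\twonorm{\hat w-w^{*}}\le\tfrac14$ for the returned $\hat w=\hat w_0/\twonorm{\hat w_0}$.

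To finish I would pass from the angle bound to the error bound. A point $x$ is misclassified by $\hat w$ only if $\hat w\cdot x$ and $w^{*}\cdot x$ have opposite signs while $\abs{w^{*}\cdot x}\ge\gamma$, which forces $\abs{(w^{*}-\hat w)\cdot x}\ge\gamma$; and $(w^{*}-\hat w)\cdot x$ is, componentwise over the mixture, log-concave with mean of magnitude $\le\twonorm{w^{*}-\hat w}\,r\le\gamma/2$ and standard deviation $\le\twonorm{w^{*}-\hat w}/\sqrt d\le1/(4\sqrt d)$. Fact~(ii) then gives $\err_D(\hat w)\le\Pr_D(\abs{(w^{*}-\hat w)\cdot x}\ge\gamma)\le2\exp\bigl(-\Omega(\gamma\sqrt d/\twonorm{w^{*}-\hat w})\bigr)\le2\exp\bigl(-\Omega(\log(2/\epsilon))\bigr)\le\epsilon$, where we used $\gamma\sqrt d\ge16\log(2/\epsilon)$. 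A union bound over the $O(1)$ high-probability events keeps the total failure probability at $\delta$, and every step runs in polynomial time. The main obstacle is the structural step in the previous paragraph: certifying that operator-norm-bounded reweighting genuinely caps the adversary's influence on the hinge gradient, in the $\{\hat w_0,w^{*}\}$-plane, at $O(\gamma\sqrt\eta)$, while simultaneously verifying that the clean hinge gradient keeps $\Omega(\gamma)$ of well-directed signal — and choosing $\lambda$ so that $\hat w_0$ lands at the correct $\Theta(1/\gamma)$ norm scale so that the two estimates can be combined.
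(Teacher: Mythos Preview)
Your upstream ingredients---Chernoff on the corruption fraction, feasibility of the reweighted second-moment program, and the $O(\gamma\sqrt{\eta})$ cap on the adversary's directional gradient via Cauchy--Schwarz against the operator-norm constraint---match the paper exactly. The divergence is in the ``structural step,'' and there the sketch has a genuine gap.

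You attempt to prove a global angle bound $\twonorm{\hat w-w^*}\le\tfrac14$ by comparing the $w^*$-parallel and $w^*$-orthogonal components of the clean population hinge gradient at $\hat w_0$. Your parallel lower bound is $\gamma\cdot p$ with $p\defeq\Pr_D(y\,\hat w_0\cdot x<1)$, while your orthogonal upper bound is only $O(\gamma)$. To extract a constant angle you need $p=\Omega(1)$, but nothing in the argument forces this: with $\twonorm{\mu_j}\le 2\gamma$ it is entirely possible that $\hat w$ makes a large angle with $w^*$ yet $y\,\hat w\cdot\mu_j>\gamma$ for most components $j$, so that $p$ is tiny. The fallback $\err_D(\hat w)\le p$ then gives only a constant error bound, not $\epsilon$. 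The remark ``$\sqrt\eta\ll\log(1/\epsilon)$'' is a red herring here---the real competition is between $\gamma\sqrt\eta$ and $\gamma p$, and $p$ need not see $\epsilon$ at all.

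The paper never proves an angle bound. It argues \emph{pointwise} using Talwar's dense pancake device: for each misclassified $(x,y)$, consider the band $P_{\hat w}^\tau(x,y)=\{(x',y'):\abs{y'\hat w\cdot x'-y\hat w\cdot x}\le\tau\}$ with $\tau\le\gamma/2$. Every clean sample in this band has $y_i\hat w\cdot x_i\le\gamma/2$, hence is active, and satisfies $y_i\,w'\cdot x_i\ge\gamma/2$ for $w'$ the unit vector in $\mathrm{span}\{\hat w,w^*\}$ orthogonal to $\hat w$. A separate case analysis shows that every clean sample \emph{outside} the band also contributes $\le 0$ to the $w'$-component of the gradient: either $y_iw'\cdot x_i\ge0$ and $\partial f\le0$, or $y_iw'\cdot x_i<0$ forces $y_i\hat w\cdot x_i>\gamma$ so the point is inactive. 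Hence the full clean $w'$-gradient is at most $-\tfrac{\gamma}{2}\sum_{i\in\SC\cap P}q_i$, and first-order optimality (interior stationarity or boundary KKT) yields $\tfrac{\gamma}{4}\sum_{i\in\SC\cap P}q_i\le\sumnorm{q\circ\SD}\le\bar\sigma\sqrt{\xi}\,\abs{S}$. The $(\tau,\rho,\epsilon)$-dense pancake condition on $(\SC,D)$---established for log-concave mixtures with $\rho=\Theta(1/k)$---guarantees that for a $1-\epsilon$ fraction of $(x,y)\sim D$ the band captures $\ge\rho\abs{\SC}$ clean samples, whence $\sum_{i\in\SC\cap P}q_i\ge(\rho-2\xi)\abs{S}$; this contradicts the previous display once $\rho\gtrsim\sqrt{\eta_0}$, i.e.\ once $\eta_0$ is below an absolute constant. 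The pancake is precisely the localization that supplies a \emph{constant} fraction of active clean samples with favorable $w'$-geometry, independent of the global active fraction $p$; without it (or an equivalent localization), the step from gradient control to a PAC guarantee does not close, and your final angle-to-tail conversion, while correct in itself, never gets invoked.
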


\begin{remark}[Noise tolerance]
This is the first time that a {\em constant} noise tolerance is established. As we will see in Section~\ref{sec:alg}, our algorithm simply integrates weights found by a linear program into the hinge loss minimization scheme. Yet, we present novel analysis showing that such paradigm enjoys the strongest robustness. We summarize the comparison to prior works in Table~\ref{tab:comp}.
\end{remark}

\begin{remark}[Condition on $\gamma$]
We require that $\gamma$ has a logarithmic dependence on $1/\epsilon$ that appears less natural. We believe that this condition can be improved to $\gamma = \Omega(1/\sqrt{d})$ by integrating our algorithm as a subroutine with $\epsilon = \Theta(1)$ into the active learning framework \citep{awasthi2017power}.
\end{remark}

\begin{table}[t]
\caption{A comparison to state-of-the-art efficient algorithms on learning halfspaces with malicious noise. We obtain the first efficient algorithm with constant noise tolerance.}
\label{tab:comp}
\def\arraystretch{1.3}
\centering
\resizebox{\columnwidth}{!}{
\begin{tabular}{ lccc } 
\hline 
Work & Margin & Distribution & Noise tolerance  \\ 
\hline

\cite{long2011learning} & $\gamma \in (0, 1/7)$ & Not needed & $\Omega(\epsilon \gamma \sqrt{\log(1/\gamma)} )$\\
\cite{awasthi2017power}  & Not needed &  Isotropic log-concave & $\Omega(\epsilon)$ \\
Theorem~24 of~\cite{talwar2020hinge} &  $\gamma = \Omega(\frac{\log(1/\epsilon)}{\sqrt{d}})$ & Log-concave mixture & $\Omega(\gamma)$\\
{\bfseries Our work} (Theorem~\ref{thm:main}) & $\gamma = \Omega(\frac{\log(1/\epsilon)}{\sqrt{d}})$ & Log-concave mixture & $\Omega(1)$ \\
{\bfseries Our work} (Theorem~\ref{thm:main-sep-logconcave}) & Not needed & Separable log-concave mixture & $\Omega(1)$ \\
\hline
\end{tabular}
}
\end{table}

A few more remarks are in order. First, our analysis does not require the distribution $D$ to be centered at the origin. Thus the main result holds also for the class of non-homogeneous halfspaces $\calH' := \{h_{w, b}: x \mapsto \sign{w \cdot x +b}, w \in \Rd, b \in \R\}$ by embedding  instances $x$ as $(x, 1)$; see \cite{talwar2020hinge} for a more detailed discussion. Second, we believe that it is possible to show the existence of a nearly linear-time algorithm by replacing the hinge loss in Algorithm~\ref{alg:main} with a properly chosen smooth loss function; this is left as a future work. Third, it is known that when samples are $\gamma$-margin separable, one may project samples onto an $O(\frac{1}{\gamma^2} \log n)$-dimensional space via random projection \citep{AV99,blum2005random} while maintaining the margin parameter up to a constant multiplicative factor; this would thus reduce our sample complexity to $O(\frac{1}{\gamma^2} \log^5\frac{1}{\gamma \epsilon \delta})$.

Our PAC analysis essentially holds when the large-margin condition is empirically satisfied on the set of clean samples, rather than on $D$. Thus,  we can alternatively impose a large-margin condition on the centers of the distributions, and show that with high probability, any finite clean sample set satisfies certain large-margin condition. This can also be thought of as an example of how large-margin and distributional assumptions can be made compatible.

\begin{assumption}[Separable log-concave mixtures]\label{as:log_mix_sep_mean}
$D_X$ is a uniform mixture of $k$ distributions $D_1, \dots, D_k$, i.e. $D_X = \frac{1}{k} \sum_{j=1}^k D_j$. In addition, for all $1 \leq j \leq k$, $D_j$ is a log-concave distribution with mean $\mu_j$ and covariance matrix $\Sigma_j$, satisfying $\twonorm{\mu_j} \leq r$, $\abs{ w^* \cdot \mu_j } \geq \zeta$, and $\Sigma_j \preceq \frac{1}{d} I_d$ for some $r, \zeta > 0$.  
\end{assumption}

\begin{theorem}\label{thm:main-sep-logconcave}
There exists an algorithm (Algorithm~\ref{alg:main}) satisfying the following.
For any $\epsilon \in (0, \frac{2}{3}), \delta \in (0, 1)$, if Assumption~\ref{as:log_mix_sep_mean} is satisfied with $\zeta \geq \frac{64}{\sqrt{d}} \log^2 \frac{d}{\epsilon\delta}$, $r \in [\zeta, \frac{3}{2}\zeta]$, $k \leq 64$, and if the malicious noise rate $\eta \leq  \frac{1}{2^{32}}$, then by drawing $n = 2^{17} \cdot d \cdot \log^4 \frac{8d}{\epsilon\delta}$ samples from the adversary $\oracle$, with probability $1-\delta$, Algorithm~\ref{alg:main} returns a halfspace $\hat{w}$ such that $\err_D(\hat{w}) \leq \epsilon$. In addition, Algorithm~\ref{alg:main} runs in polynomial time.
\end{theorem}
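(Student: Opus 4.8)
The plan is to reduce Theorem~\ref{thm:main-sep-logconcave} to Theorem~\ref{thm:main} by showing that Assumption~\ref{as:log_mix_sep_mean} implies, with high probability over the clean sample draws, that the realized clean sample set $\SC$ satisfies the large-margin condition of Assumption~\ref{as:margin} with a suitable $\gamma$, and that the remaining hypotheses of Theorem~\ref{thm:main} ($r \le 2\gamma$, $k \le 64$, the covariance bound) are met. The guiding intuition is that under $\Sigma_j \preceq \frac1d I_d$, each log-concave component $D_j$ concentrates around its mean $\mu_j$ in the direction $w^*$: the random variable $w^*\cdot x$ for $x \sim D_j$ has mean $w^*\cdot\mu_j$ with $|w^*\cdot\mu_j| \ge \zeta$ and variance $w^{*\top}\Sigma_j w^* \le \frac1d$, so it fluctuates on the scale $\frac{1}{\sqrt d}$, which is much smaller than $\zeta$ once $\zeta = \Omega\!\big(\frac{1}{\sqrt d}\log^2\frac{d}{\epsilon\delta}\big)$.

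\textbf{Step 1 (margin concentration for one component).} Fix a component $j$. Using a standard one-dimensional log-concave tail bound (sub-exponential concentration of $w^*\cdot x$ around its mean with scale $\sqrt{w^{*\top}\Sigma_j w^*}\le \frac{1}{\sqrt d}$), I would show that for a single clean draw $x\sim D_j$,
\[
\Pr_{x\sim D_j}\Big( \big|\, w^*\cdot x - w^*\cdot\mu_j \,\big| \ge \tfrac{\zeta}{2} \Big) \le \exp\!\big(-c\,\zeta\sqrt d\big)
\]
for an absolute constant $c$. Since $\zeta\sqrt d \ge 64\log^2\frac{d}{\epsilon\delta}$, this probability is at most, say, $\frac{\epsilon\delta}{8n\cdot 64}$ (absorbing constants). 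On the complementary event, $|w^*\cdot x| \ge |w^*\cdot\mu_j| - \frac{\zeta}{2} \ge \frac{\zeta}{2}$, and because $y = \sign{w^*\cdot x}$ for every clean sample (Definition~\ref{def:malicious-noise}), we get $y\cdot w^*\cdot x \ge \frac{\zeta}{2}$.

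\textbf{Step 2 (union bound over the sample).} The learner draws $n$ samples; at most $n$ of them are clean, each drawn from $D_X = \frac1k\sum_j D_j$, hence from some component $D_j$. A union bound over these $\le n$ clean samples shows that with probability at least $1-\frac{\delta}{2}$ (splitting the failure budget), \emph{every} clean sample $(x,y)\in\SC$ satisfies $y\cdot w^*\cdot x \ge \frac{\zeta}{2} \defeq \gamma$. This is exactly Assumption~\ref{as:margin} with this value of $\gamma$.

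\textbf{Step 3 (checking the hypotheses of Theorem~\ref{thm:main} and invoking it).} With $\gamma = \frac{\zeta}{2}$: the condition $\gamma \ge \frac{16\log(2/\epsilon)}{\sqrt d}$ follows from $\zeta \ge \frac{64}{\sqrt d}\log^2\frac{d}{\epsilon\delta} \ge \frac{32}{\sqrt d}\log(2/\epsilon)$; the condition $r \le 2\gamma = \zeta$ is exactly the left endpoint of the assumed range $r\in[\zeta,\frac32\zeta]$ --- wait, here one must be slightly careful: Theorem~\ref{thm:main} needs $r\le 2\gamma$, and we have $r\le\frac32\zeta = 3\gamma$, so I would instead take $\gamma := \frac{3r}{4}\cdot\frac{?}{}$... more cleanly, redo Step 1--2 with the threshold $\frac{\zeta}{4}$ rather than $\frac{\zeta}{2}$, giving margin $\gamma = \frac{3\zeta}{4} \ge \frac{r}{2}$, i.e. $r \le 2\gamma$, while still $\gamma \ge \frac{16\log(2/\epsilon)}{\sqrt d}$ and the tail bound only changes by absolute constants. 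The condition $k\le 64$ and $\Sigma_j\preceq\frac1d I_d$ are inherited verbatim, and $\twonorm{\mu_j}\le r$ is assumed. Thus, conditioned on the margin event of Step~2, all hypotheses of Theorem~\ref{thm:main} hold, and applying it with failure probability $\frac{\delta}{2}$ (the same sample size $n = 2^{17}\cdot d\cdot\log^4\frac{8d}{\epsilon\delta}$ suffices) yields that Algorithm~\ref{alg:main} outputs $\hat w$ with $\err_D(\hat w)\le\epsilon$. A final union bound over the two failure events gives total failure probability $\le\delta$, and the polynomial running time is immediate from Theorem~\ref{thm:main}.

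\textbf{Main obstacle.} The only real subtlety is bookkeeping of constants so that the log-concave tail bound at threshold $\Theta(\zeta)$ beats $\frac{1}{\mathrm{poly}(n)}\cdot\frac{1}{\delta}$ after the union bound, and so that the resulting $\gamma$ simultaneously satisfies both $\gamma\ge\frac{16\log(2/\epsilon)}{\sqrt d}$ and $r\le 2\gamma$ given $r\in[\zeta,\frac32\zeta]$; the quadratic-log slack in the assumed lower bound on $\zeta$ (namely $\log^2\frac{d}{\epsilon\delta}$ versus the $\log\frac{d}{\epsilon\delta}$-type quantity needed for the union bound over $n = \mathrm{poly}(d)\cdot\mathrm{polylog}$ samples) is precisely what provides this room. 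There is no deep difficulty here --- the theorem is essentially a corollary of Theorem~\ref{thm:main} once the distributional margin is converted into an empirical margin.
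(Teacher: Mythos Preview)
Your proposal is correct and follows essentially the same route as the paper: the paper's proof invokes Theorem~\ref{thm:main} after establishing via a log-concave tail bound and a union bound over $\barSC$ (their Lemma~\ref{lem:sep-log-concave}) that every clean sample satisfies $y\,w^*\!\cdot x \ge \frac{3\zeta}{4}$, i.e.\ exactly the threshold $\frac{\zeta}{4}$ you settle on after your self-correction, so that $\gamma=\frac{3\zeta}{4}$ gives $r\le\frac{3}{2}\zeta=2\gamma$. The only cosmetic difference is that the paper does not split the failure probability but simply absorbs the margin event into the overall $1-\delta$ budget via the condition $\zeta \ge \frac{4}{\sqrt d}\log\frac{|\barS|}{\delta}$, which the assumed $\zeta \ge \frac{64}{\sqrt d}\log^2\frac{d}{\epsilon\delta}$ comfortably implies given $|\barS|=n$.
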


Our main results rely on two types of conditions, hence are more stringent than many prior works. Yet, in view of the information-theoretic limit of $\frac{\epsilon}{1+\epsilon}$, it is clear that additional conditions must be assumed in order to obtain the constant rate, though it is largely open to what degree our conditions can be relaxed.

Lastly, we note that little effort was made to optimize the constants in the theorems. In particular, the constant $\frac{1}{2^{32}}$ that upper bounds the noise rate can be slightly improved, but it cannot be made close to the optimal breakdown point of $\frac{1}{2}$ due to inherent limitations of our approach. All omitted proofs can be found in the appendix.

\subsection{Overview of our techniques}

The starting point of our algorithm is the work by \cite{talwar2020hinge}. Though under the malicious noise model, they only obtained $\Omega(\gamma)$ noise tolerance, we observe that a technical barrier that prevents them from getting the $\Omega(1)$ noise tolerance roots in a failure to control the linear sum norm of the sample  set, as in their approach, the adversary may construct dirty samples that largely deteriorate such norm. The linear sum norm, which we will describe formally in the next section (see Definition~\ref{def:sum-norm}), upper bounds the gradient norm of the hinge loss on any sample set. Thus, the main idea of this work is to consider a {\em reweighted} hinge loss, so that the linear sum norm is reweighted in a similar way. As long as we are able to assign low weight to dirty samples and high weight to clean samples, the linear sum norm would be well controlled, which then implies a PAC guarantee following the framework of \cite{talwar2020hinge}. Thus, our algorithm consists of two primary steps: finding proper weights for all samples, and minimizing a weighted hinge loss.

\vspace{0.1in}
\noindent{\bfseries 1) Finding weights via soft outlier removal.}\  
While finding proper weights by directly minimizing the linear sum norm of dirty samples turns out to be intractable as it requires the knowledge of the identity of dirty samples, we show that the linear sum norm is upper bounded by a reweighted empirical variance among the entire sample set via the Cauchy-Schwarz inequality (Lemma~\ref{lem:lin-sum-var}). Therefore, we turn to search for weights such that the reweighted empirical variance is properly bounded, which implies that the linear sum norm of the reweighted dirty samples cannot be large (Lemma~\ref{lem:sumnormSD_restate}). It is interesting to see that this algorithmic idea indeed coincides with the soft outlier removal scheme proposed by \cite{awasthi2017power} which runs in polynomial time. The difference between our work and theirs lies in how this scheme is utilized: \cite{awasthi2017power} iteratively incorporated it into an active learning framework and used the empirical variance to bound the objective value of hinge loss, while we invoke it only once and use it to bound the gradient norm (more precisely, the linear sum norm).

\vspace{0.1in}
\noindent{\bfseries 2) Analysis on the reweighted hinge loss.} \
We then analyze the gradient contributions from clean samples $\SC$ and dirty samples $\SD$, respectively, at an optimal solution. Roughly speaking, we show that for any misclassified sample $(x, y)$, the contribution from samples lying at the intersection of $\SC$ and the pancake of $(x, y)$ is upper bounded by the sum of the weights multiplied by $-\gamma$, whereas that from $\SD$ is upper bounded by the linear sum norm. Since the gradient at a global optimal solution is $0$, this shows that for such misclassified sample, the weights of samples in the pancake must be small. Conversely, if the weights are large compared to the linear sum norm, then the sample will be correctly classified. This gives our deterministic result (Theorem~\ref{thm:main-det}). Our main result (Theorem~\ref{thm:main}) follows by showing that after drawing sufficient samples, the soft outlier removal scheme always finds such weights where the linear sum norm is small, and that the weights of samples in the pancake are large provided that the pancake contains many clean samples, which is satisfied under proper distributional assumptions.

\subsection{Related works}

Learning halfspaces in the presence of noise is a fundamental problem in learning theory. Without any assumptions, it is known that establishing PAC guarantees for efficient algorithms is challenging. For example, it is known that even achieving weak PAC guarantees is hard for the agnostic noise \citep{haussler1992decision,kearns1992toward,guruswami2006hardness,feldman2006new,daniely2016complexity}. Likewise, little progress was made for the malicious noise if no assumption on the data is imposed \citep{kearns1988learning,bshouty1998new}.

By assuming that the marginal distribution $D_X$ is uniform, Gaussian, or isotropic log-concave, a rich set of positive results were established for the malicious noise model. For example, \cite{kalai2005agnostic,klivans2009learning,awasthi2017power,diakonikolas2018learning}  developed efficient algorithms with improved noise tolerance, \cite{shen2021attribute,shen2021sample} investigated the sample complexity, and \cite{shen2023linearmalicious} considered fine-grained computational complexity. Notably, this series of works showed that it is possible to PAC learn the class of homogeneous halfspaces in near-linear time with near-optimal sample and label complexity.

The large-margin condition was studied in a rather independent way. A seminal work by \cite{servedio2003malicious} proposed the smooth boosting algorithm that prevents the algorithm from placing high weight to any sample, an idea that was explored earlier in \cite{madaboost}. A subsequent work of \cite{long2011learning} improved the noise tolerance by a logarithmic factor. Interestingly, a very recent work of \cite{BHKT2024boost} settled that the known sample complexity bound from smooth boosting is near-optimal.

To the best of our knowledge, \cite{talwar2020hinge} is the first work that studied the problem of learning halfspaces under both distributional and large-margin conditions. Technically speaking, these two conditions can be made compatible as far as the center of the distribution is far from the decision boundary of the target halfspace and the distribution satisfied certain concentration properties.  \cite{talwar2020hinge} showed that hinge loss minimization is robust against $\Omega(1)$ adversarial label noise, and is also robust against $\Omega(\gamma)$ malicious noise. While the main insight that controlling gradient norm implies robustness shares merit with prior works \citep{diakonikolas2021robust,PSBR2020robustgrad}, the techniques are inherently different since prior works relied on the smoothness of objective functions to check the spectral norm of certain gradient matrix. Such distinction extends to this work, as we largely follow the framework of \cite{talwar2020hinge}.

Lastly, we note that our research falls into the broad area of algorithmic robustness, which receives a surge of interest in recent years. For example, a large body of works studied the Massart noise \citep{awasthi2015efficient,yan2017revisiting,diakonikolas2019distribution,zhang2020efficient,diakonikolas2020learning,chen2020classification,diakonikolas2022learning}, the adversarial noise~\citep{awasthi2017power,shen2021power}, outlier-robust mean estimation~\citep{diakonikolas2016robust,lai2016agnostic,DHL19}. See references therein and a recent textbook \citep{dia2023book} for a comprehensive review of the literature.

\subsection{Roadmap}

In Section~\ref{sec:setup}, we collect and define useful notations. Our main algorithms are presented in Section~\ref{sec:alg}, and we provide theoretical analysis in Section~\ref{sec:guarantee}. We conclude the paper and propose a few open questions in Section~\ref{sec:con}. The proof details are deferred to the appendix.
\section{Preliminaries}
\label{sec:setup}

For two vectors $u$ and $v$, we denote by $u \cdot v$ the standard inner product in the Euclidean space. For a vector $v$, we denote its $\ell_2$-norm by  $\twonorm{v}$. We write the hinge loss on a sample $(x, y) \in \calX \times \calY$ as
\begin{equation*}
\ell(w; x, y) \defeq \max\left\{0,1-y w \cdot x\right\}.
\end{equation*}
Given a sample set $S=\{(x_i,y_i)\}_{i=1}^n$ and a non-negative weight vector $q=(q_1,\ldots,q_n) \in [0, 1]^n$, we define the reweighted hinge loss on $S$ as
\begin{equation}\label{eq:hinge-loss}
\ell(w; q \circ S) := \sum_{i=1}^n q_i \cdot \ell(w; x_i, y_i).
\end{equation}

The linear sum norm will play a key role in our analysis. We will need a weighted version of it in our analysis, though for brevity we will still call it linear sum norm.

\begin{definition}[Linear sum norm]\label{def:sum-norm}
Given $S = \{(x_i, y_i)\}_{i=1}^n$ and $q = (q_1, \dots, q_n)$, the linear sum norm of $S$ under the weight vector $q$ is defined as
\begin{equation*}
\sumnorm{q \circ S} \defeq \sup_{a_1,\ldots,a_n\in[-1,1]} \bigg\lVert \sum_{i=1}^n a_i \cdot q_i \cdot x_i \bigg\rVert_2.
\end{equation*}
\end{definition}

It is not hard to see that $\sumnorm{q \circ S}$ serves as an upper bound on the subgradient norm of the reweighted hinge loss $\ell(w; q\circ S)$. For a subset $S' \subset S$, we define $\sumnorm{q \circ S'} \defeq \sup_{i \in S': a_i \in [-1, 1]} \twonorm{ \sum_{i \in S'}a_i \cdot q_i \cdot x_i }$ by restricting $q$ to the entries corresponding to $S'$. Thus, given an $n$-dimensional vector $q$, the linear sum norm is monotone in $S$: $\sumnorm{q \circ S'} \leq \sumnorm{q \circ S}$ for any $S' \subset S$. In our analysis, we will mainly be interested in the linear sum norm evaluated on the dirty samples in $S$. To ease our description, we denote the set of clean samples in $S$ by $\SC$, and that of dirty samples by $\SD$, i.e.
\begin{equation}
S = \SC \cup \SD, \quad \SC \cap \SD = \emptyset.
\end{equation}
We remark that the decomposition is for analysis purpose; the learner is unaware of the identity of clean samples.

% By subdifferential calculus, its subdifferential is given by
% \begin{align}
% \label{eq:subd-weight-hinge}
%     \partial_w \ell(w; S, q) = \sum_{i=1}^n q_i \cdot \partial_w \ell(w; x_i, y_i) = \sum_{i=1}^n q_i \cdot \partial f(w; x_i, y_i) \cdot y_i x_i.
% \end{align}

The dense pancake condition is an important technical tool developed by \cite{talwar2020hinge} to analyze robustness of the hinge loss. We will also utilize it in this paper.

\begin{definition}[Pancake]\label{def:pancake}
Let $(x, y) \in \calX \times \calY$. Given a unit vector $w\in \R^d$ and a parameter $\tau > 0$, 
the pancake $P_{w}^\tau(x, y)$ is defined as
\begin{equation*}
P_{w}^\tau(x, y) := \left\{ (x', y') \in \calX \times \calY:\quad \abs{ y' w \cdot  x' - y w \cdot x} \leq \tau \right\}.
\end{equation*}
We say that the pancake $P_{w}^\tau(x, y)$ is $\rho$-dense with respect to (w.r.t.) a distribution $D$ on $\mathcal X \times \mathcal Y$ if
\begin{equation*}
\Pr_{(x', y') \sim D} \left((x', y') \in P_{ w}^\tau(x, y) \right) \geq \rho.
\end{equation*}
Let $D_1, D_2$ be two distributions on $\calX \times \calY$. We say that the pair $(D_1, D_2)$ satisfies the $(\tau, \rho, \beta)$-dense pancake condition if for any unit vector $ w \in \R^d$, 
\begin{equation*}
\Pr_{(x,y)\sim D_2} \left(P_{ w}^\tau(x, y) \text{ is }\rho\text{-dense w.r.t. } D_1\right) \geq 1 - \beta.
\end{equation*}
\end{definition}

Geometrically speaking, the pancake is a band with width $2\tau$ centered at $(x, y)$ and is orthogonal to $w$. An $\rho$-dense pancake requires that the probability mass of the band is at least $\rho$. The $(\tau, \rho, \beta)$-dense pancake condition says that the pancake of most samples from $D_2$ is $\rho$-dense. In the analysis, $D_2$ will always be the underlying distribution of clean samples, namely the distribution $D$ in Definition~\ref{def:malicious-noise}, and $D_1$ will mostly be the uniform distribution on an empirical set drawn from $D$ (i.e. $\SC$). When it is clear from the context, we will simply say that $(\SC, D)$ satisfies the condition where $\SC$ should be interpreted as the uniform distribution on $\SC$.

\section{Main Algorithm}\label{sec:alg}

We start with a brief review of the approach of \cite{talwar2020hinge}, and identify why their analysis leads to a sub-optimal malicious noise tolerance. We then present our algorithm in detail.

\subsection{The approach of \cite{talwar2020hinge}}

\cite{talwar2020hinge} proposed to analyze the robustness of the vanilla hinge loss function, corresponding to the weight vector $q = \mathbf{1} := (1, \dots, 1)$ in Eq.~\eqref{eq:hinge-loss}. They essentially showed that as far as
\begin{equation*}
(1-\eta) \gamma n \geq \sumnorm{\mathbf{1} \circ \SD},
\end{equation*}
then one may obtain PAC guarantees along with the large-margin and dense pancake conditions under proper parameter requirements. Nevertheless, by examining the definition of the linear sum norm (Definition~\ref{def:sum-norm}), one can see that the adversary may construct $\SD$ in such a way~--~for example, all dirty samples are aligned in a same direction~--~that the linear sum norm is as large as the size of $\SD$, which is roughly $\eta n$. Therefore, the above condition would read as $\eta \leq O(\gamma)$; this is the noise tolerance announced in their work.

\subsection{Our algorithm}

While such noise tolerance bound seems to be inherent in the analysis of \cite{talwar2020hinge}, based on the above observation, we can see that reweighting samples may diminish the linear sum norm on $\SD$. In the extreme case, had we been able to assign weight $0$ to all samples in $\SD$, the robustness would follow immediately as the linear sum norm becomes $0$. In reality, though, we should not expect that such weight assignment always occurs: the adversary can draw {\em clean} samples to form $\SD$ and mix them into $S$, in which case any reasonable approach should assign weight $1$ to all samples in $\SD$, just as those in $\SC$. Therefore, the best upper bound that one can expect on the linear sum norm on $\SD$ is the one as if all samples behave very similarly to clean samples (which is roughly $O(\abs{\SD}/\sqrt{d})$). This leads to the following optimization program for finding weights:

\begin{equation}
\text{find}\ q \in [0, 1]^n,\ \text{such\ that}\ \sup_{a_i: a_i \in [-1, 1]} \bigg\lVert \sum_{i \in \SD} a_i \cdot q_i \cdot x_i \bigg\rVert_2\ \text{is\ small}.
\end{equation}

The primary trouble is that the above objective function relies on the identity of dirty samples, which is unknown. It turns out that the linear sum norm is monotone in the sense that the norm of a set is always no greater than that of its superset. We thus modify the objective to the linear sum norm on the empirical sample set $S = \{(x_i, y_i)\}_{i=1}^n$, while adding an additional constraint that the sum of $q_i$ is large enough. Suppose that we know an upper bound $\xi$ on the fraction of dirty samples in $S$ (indeed, $\xi = \Theta(\eta)$ by concentration), then we have to assign a weight as large as $1$ to the clean samples, which implies $\sum_{i=1}^n q_i \geq (1-\xi) n$. We thus obtain
\begin{equation*}
\text{find}\ q \in [0, 1]^n,\ \ \text{such\ that}\ \sup_{a_i: a_i \in [-1, 1]} \bigg\lVert \sum_{i \in S} a_i \cdot q_i \cdot x_i \bigg\rVert_2\ \text{is\ small\ and}\ \sum_{i=1}^n q_i \geq (1-\xi) n.
\end{equation*}
The last challenge we have to deal with is the computational efficiency. To our knowledge, it is unclear how to search a feasible $q$ in polynomial time in the above program, since one needs to jointly evaluate over the coefficients $\{a_i\}_{i=1}^n$ and $\{q_i\}_{i=1}^n$. As a technical remedy, we relax this quantity by the Cauchy–Schwarz inequality, showing that the empirical variance serves as an upper bound on the linear sum norm.

\begin{lemma}\label{lem:lin-sum-var}
Let $S'$ be any set with $m$ samples and $q = (q_1, \dots, q_m)$ with all $q_i \in [0, 1]$. Then $\sumnorm{q\circ S'} \leq \sqrt{\abs{S'}}\sqrt{\sup_{\twonorm{w}\leq 1}\sum_{i \in S'} q_i \cdot (w \cdot x_i)^2}$.
\end{lemma}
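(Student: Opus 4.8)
The plan is to combine the dual representation of the Euclidean norm with the Cauchy–Schwarz inequality. First I would rewrite $\twonorm{\sum_{i\in S'} a_i q_i x_i}$ using the identity $\twonorm{v} = \sup_{\twonorm{w}\le 1} w\cdot v$, so that
\[
\sumnorm{q\circ S'} \;=\; \sup_{a_i\in[-1,1]}\ \sup_{\twonorm{w}\le 1}\ \sum_{i\in S'} a_i\, q_i\, (w\cdot x_i).
\]
Since both operations are suprema, they commute ($\sup_a\sup_w = \sup_{a,w} = \sup_w\sup_a$, with no convexity needed), and for a fixed $w$ the inner supremum over the box $[-1,1]^{|S'|}$ is attained at $a_i = \sign{q_i (w\cdot x_i)}$; because every $q_i\ge 0$ this yields $\sup_{a}\sum_{i\in S'} a_i q_i (w\cdot x_i) = \sum_{i\in S'} q_i\, |w\cdot x_i|$. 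Hence $\sumnorm{q\circ S'} = \sup_{\twonorm{w}\le 1}\sum_{i\in S'} q_i\, |w\cdot x_i|$.

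Next, for each fixed $w$ with $\twonorm{w}\le 1$ I would split $q_i = \sqrt{q_i}\cdot\sqrt{q_i}$ (legitimate since $q_i\in[0,1]$, so $\sqrt{q_i}$ is real) and apply Cauchy–Schwarz:
\[
\sum_{i\in S'} q_i\, |w\cdot x_i| \;=\; \sum_{i\in S'} \sqrt{q_i}\cdot\big(\sqrt{q_i}\,|w\cdot x_i|\big) \;\le\; \sqrt{\sum_{i\in S'} q_i}\ \sqrt{\sum_{i\in S'} q_i\, (w\cdot x_i)^2}.
\]
Using $q_i\le 1$ gives $\sum_{i\in S'} q_i\le |S'|$, and bounding the second factor by its supremum over $\twonorm{w}\le 1$ yields, for every admissible $w$,
\[
\sum_{i\in S'} q_i\, |w\cdot x_i| \;\le\; \sqrt{|S'|}\ \sqrt{\ \sup_{\twonorm{w}\le 1}\sum_{i\in S'} q_i\, (w\cdot x_i)^2\ }.
\]
Taking the supremum over $\twonorm{w}\le 1$ on the left-hand side then gives exactly the claimed bound.

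There is essentially no hard step here; the argument is a short duality-plus-Cauchy–Schwarz computation. The only points that require a moment's care are the interchange of the two suprema (a formal triviality once one writes both as a single supremum over the product set) and the nonnegativity of the weights, which is what makes $a_i = \sign{q_i(w\cdot x_i)}$ the maximizer and $\sqrt{q_i}$ well defined. I would also note in passing that the chain above shows the estimate is essentially tight except for the relaxation $\sum_{i\in S'} q_i \le |S'|$, which is the sole source of slack; this is why the factor $\sqrt{|S'|}$ rather than $\sqrt{\sum_i q_i}$ appears in the statement.
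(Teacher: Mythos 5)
Your proof is correct and takes essentially the same route as the paper's: write $\twonorm{\cdot}$ via its dual characterization and apply Cauchy--Schwarz, then use $q_i\in[0,1]$ and $|a_i|\le 1$. The only cosmetic difference is the CS split: the paper applies it to the vectors $(a_i)$ and $\bigl(q_i(w\cdot x_i)\bigr)$ and bounds $\sum a_i^2\le m$, whereas you first maximize over $a$ and then split $q_i=\sqrt{q_i}\cdot\sqrt{q_i}$, which lands more directly on the stated $q_i$ (rather than $q_i^2$) form.
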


The intuition underlying the lemma is that the linear sum norm attains its maximum value on $\SD$ when the adversary aligns all dirty samples on a same direction. Such behavior will be signaled by a large empirical variance in that direction, assuming that clean samples are drawn from a well-behaved distribution. Therefore, conversely, if one is able to manage the reweighted empirical variance, then the norm can be controlled. 

We are now in the position to present our algorithm for weight assignment as follows:
\begin{equation}
\text{find}\ q \in [0, 1]^n,\ \ \text{such\ that}\ \sup_{\|w\|_2\leq 1}\sum_{i \in S} q_i (w \cdot x_i)^2\ \text{is\ small\ and}\ \sum_{i=1}^n q_i \geq (1-\xi) n.
\end{equation}

Since the above is a linear program with respect to $q$, it can be solved in polynomial time. It is interesting to note that our algorithmic idea indeed coincides with the soft outlier removal scheme in \cite{awasthi2017power}, though the way that we come up with and utilize it is quite different. 

\label{sec:main}
\begin{algorithm}[t]
\caption{Main Algorithm}
\label{alg:main}
\begin{algorithmic}[1]
\REQUIRE  Adversary $\oracle$, target error rate $\epsilon$, failure probability $\delta$, parameters $\gamma$ and $r$, an upper bound $\eta_0 \geq \eta$.
\ENSURE A halfspace $\hat{w}$.

\STATE  Draw $n$ samples from $\oracle$ to form a sample set $\bar{S} = \{(x_i, y_i)\}_{i=1}^n$.

\STATE Pruning: Remove all samples $(x, y) \in \barS$ with $\twonorm{x} > r +  \log\frac{9n}{\delta}$ to form a sample set $S$.

\STATE Apply Algorithm~\ref{alg:outlier} with inputs $S, \xi \gets 2 \eta_0, \bar{\sigma} \gets \sqrt{2(1/d + r^2)}$, and let $q \in [0, 1]^{n}$ be the returned vector.

\STATE Solve the following reweighted hinge loss minimization program:
\begin{equation*}
\hat{v} \gets \argmin_{\twonorm{w} \leq 1/\gamma} \ell(w; q \circ  S).
\end{equation*}

\RETURN $\hat{w} \gets \hat{v} / \twonorm{\hat{v}}$.
\end{algorithmic}
\end{algorithm}

\begin{algorithm}[t]
\caption{Soft Outlier Removal}
\label{alg:outlier}
\begin{algorithmic}[1]
\REQUIRE{Sample set $S=\{(x_i,y_i)\}_{i=1}^n$, empirical noise rate $\xi$ such that $\xi \geq \abs{\SD} / \abs{S}$, parameter $\bar{\sigma} > 0$}.

\ENSURE{A weight vector $q = (q_1, \dots, q_n)$.}

\STATE Find a vector $q = (q_1, \dots, q_n) $ that is feasible to the following linear program:
\[ 
\begin{cases}
0\leq q_i \leq 1, \quad \text{for }  i=1,2,\dots,n\\
\sum_{i=1}^n q_i \geq (1-\xi)n, \\
\sup_{\twonorm{ w} \leq 1} \frac{1}{n} \sum_{i=1}^n q_i(w \cdot x_i)^2 \leq \bar{\sigma}^2.
\end{cases}
\]

\RETURN $q$.
\end{algorithmic}
\end{algorithm}

Lastly, equipped with the weight vector $q$, we minimize the reweighted hinge loss, as defined in Eq.~\eqref{eq:hinge-loss}. Since we assumed the $\gamma$-margin condition, we need to add the constraint that $\twonorm{w} \leq 1/\gamma$. This results in the following optimization program: 
\begin{equation}
\label{eq:hinge-loss-opt}
\min_{\twonorm{w} \leq 1/\gamma} \ell(w; q \circ S).
\end{equation}
Alternatively, one may also use the constraint $\twonorm{w} \leq 1$, but optimizes a scaled hinge loss function $\max\{0, 1 - \frac{1}{\gamma} y \cdot w \cdot x \}$. The analysis would remain unchanged.

Our full algorithm is shown in Algorithm~\ref{alg:main}, where the subroutine of soft outlier removal (i.e. weight assignment) is given in Algorithm~\ref{alg:outlier}. In Algorithm~\ref{alg:main}, we perform a pruning step to remove all samples with large $\ell_2$ norms. This approach is standard \citep{shen2021attribute} since with high probability, all clean samples have small $\ell_2$ norm, provided that Assumption~\ref{as:log-concave} is satisfied. We then pass the pruned sample set into the soft outlier removal procedure to assign appropriate weights, ensuring that clean samples receive significantly higher weights compared to dirty samples. Finally, we minimize the reweighted hinge loss to obtain a halfspace.

We recall that in Algorithm~\ref{alg:main}, the input parameters $\gamma$ and $r$ are defined in Assumptions~\ref{as:margin} and \ref{as:log-concave}, respectively. The noise rate upper bound, $\eta_0$, is often available in practical applications; a constant value such as $\eta_0 = \frac14$ would work. When invoking Algorithm~\ref{alg:outlier} at the second step of Algorithm~\ref{alg:main}, the empirical noise rate $\xi$ is set to $2\eta_0$ following the Chernoff bound (see Proposition~\ref{prop:pruning}). The $\bar{\sigma}^2$ serves as an upper bound on the empirical variance of clean samples, which translates into an upper bound on the weighted empirical variance of $S$; the value of $\bar{\sigma}^2$ is estimated based on Assumption~\ref{as:log-concave} (see Proposition~\ref{prop:outlier-removal-feasible}).

\subsection{Other potential approaches}

The way that we control the linear sum norm of $\SD$ is to find proper weights to diminish it, which, as we have shown, can be efficiently solved by applying a soft outlier removal scheme. Such weights will then be utilized to adjust the weight of each individual hinge loss in the objective function. Another approach that we believe is promising is applying the filtering technique \citep{diakonikolas2016robust,diakonikolas2018learning,zeng2023attribute}. From a high level, the filtering paradigm compares empirical tail bound to distributional tail bound by projecting samples onto the direction with largest variance, and prunes samples that caused abnormal empirical tail behavior. By repeatedly checking and pruning, it is guaranteed that the empirical variance on all directions is small, which in turn makes the linear sum norm of the remaining samples manageable. It then suffices to minimize the equally weighted hinge loss. We leave detailed investigation to interested readers.

\section{Performance Guarantees}\label{sec:guarantee}

The analysis of Algorithm~\ref{alg:main} is divided into deterministic and statistical parts. We first present a set of results based on deterministic assumptions on the empirical data; these serve as a general recipe for proving the robustness. Then we show that the deterministic conditions are satisfied with high probability under distributional assumptions, leading to Theorem~\ref{thm:main}.

\subsection{Deterministic results}

% This section provides one of our key technical contributions. We extend the dense pancake lemma in~\cite{kunal2020hinge} to reweighted hinge loss. 

Let $\SC$ be the set of clean samples in $S$. Our deterministic analysis of Algorithm~\ref{alg:main} relies on the following assumptions. 

%\begin{assumption}[Large-margin on clean samples]\label{as:margin}
%For all $i \in \SC$, $y_i \cdot w^* \cdot x_i \geq \gamma$.
%\end{assumption}

\begin{assumption}[Feasibility of soft outlier removal]\label{as:feasible}
The linear program in Algorithm~\ref{alg:outlier} is feasible.
\end{assumption}

\begin{assumption}[Dense pancake]\label{as:pancake}
$(\SC, D)$ satisfies the $(\tau, \rho, \epsilon)$-dense pancake condition.
\end{assumption}

\begin{theorem}[Main deterministic result]
\label{thm:main-det}
Consider Algorithm~\ref{alg:main}. Suppose that Assumptions~\ref{as:margin} and \ref{as:feasible} are satisfied, and Assumption~\ref{as:pancake} is satisfied for some $\tau \leq \gamma/2$ and
\begin{equation}
\label{eq:cond-in-panc-dist_restate}
\rho \geq 16 \left( \frac{1}{\gamma \sqrt{d}} + \frac{r}{\gamma} + 1 \right) \sqrt{\eta_0}.
\end{equation}
Then Algorithm~\ref{alg:main} runs in polynomial time and returns $\hat{w}$ such that $\err_D (\hat{w}) \leq \epsilon$.
\end{theorem}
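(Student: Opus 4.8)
The plan is to establish the contrapositive at the level of a fixed candidate halfspace: I will show that if $\hat{w}$ misclassifies a point $(x,y)$ whose pancake $P_{w^*}^\tau(x,y)$ is $\rho$-dense in $\SC$, then the (sub)gradient of the reweighted hinge loss at the optimum $\hat{v}$ cannot vanish, contradicting optimality. So the first step is to write down the optimality condition: since $\hat{v}$ minimizes $\ell(w; q\circ S)$ over the ball $\twonorm{w}\le 1/\gamma$, a subgradient of the objective at $\hat v$ must be in the normal cone; in particular its component in any feasible descent direction is non-negative. I will pick the test direction $-w^*$ (or $w^*-\hat v$, suitably scaled) so that moving toward the target halfspace is feasible, and read off that $\sum_i q_i\, [\text{subgradient of }\ell(\cdot;x_i,y_i)\text{ at }\hat v]\cdot w^* \le 0$ up to the normal-cone term — equivalently, the inner product of the negative subgradient with $w^*$ is controlled.

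Next I would decompose the subgradient sum over $\SC$ and $\SD$ and bound each piece. On $\SC$: a clean sample $(x_i,y_i)$ that is misclassified or on the margin by $\hat v$ contributes $-q_i y_i x_i$ to the subgradient, so its inner product with $w^*$ is $-q_i y_i (w^*\cdot x_i) \le -q_i\gamma$ by Assumption~\ref{as:margin} (large margin). The point of the dense-pancake hypothesis is that the misclassified test point $(x,y)$, together with all clean samples in its pancake $P_{w^*}^\tau(x,y)$, are \emph{simultaneously} on the wrong side of (or within the margin of) $\hat v$: if $y\,\hat w\cdot x<0$ and $(x_i,y_i)$ lies in the $\tau$-pancake with $\tau\le\gamma/2$, then a short computation with $\twonorm{\hat v}\le 1/\gamma$ shows $y_i\,\hat v\cdot x_i < 1$, so the hinge is active and contributes its full gradient. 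Thus the clean contribution along $-w^*$ is at most $-\gamma \sum_{i\in \SC\cap \text{pancake}} q_i$, which by $\rho$-density and the sum-of-weights constraint $\sum q_i \ge (1-\xi)n$ is at most $-\gamma\rho n$ up to lower-order slack (the $\epsilon$-fraction of samples whose pancake fails, and the $\xi n$ deficit in total weight, both contribute negligibly after choosing constants). On $\SD$: the subgradient contribution of dirty samples has norm at most $\sumnorm{q\circ \SD}$ by Definition~\ref{def:sum-norm}, so along $-w^*$ it is bounded in absolute value by $\sumnorm{q\circ\SD}$; by Lemma~\ref{lem:lin-sum-var} together with the feasibility of Algorithm~\ref{alg:outlier} (Assumption~\ref{as:feasible}, giving weighted empirical variance $\le\bar\sigma^2$) this is at most $\sqrt{\abs{\SD}}\cdot\sqrt{n}\,\bar\sigma$, which is $O(\sqrt{\eta_0}\,n\,\bar\sigma)$ after using $\abs{\SD}\le\xi n=\Theta(\eta_0 n)$. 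Finally the clean samples that are \emph{correctly} classified with slack contribute nothing, and the normal-cone term $\hat v\cdot$(direction) has a fixed sign that only helps. Adding up: $0 \le (\text{subgradient})\cdot w^* \le -\gamma\rho n + C\sqrt{\eta_0}\,n\,\bar\sigma + (\text{slack})$, and plugging $\bar\sigma=\Theta(1/\sqrt d + r)$ this forces $\rho \le C'(\tfrac{1}{\gamma\sqrt d} + \tfrac{r}{\gamma} + 1)\sqrt{\eta_0}$, contradicting~\eqref{eq:cond-in-panc-dist_restate}. Hence every test point with a dense pancake is correctly classified, and since the pancake condition holds for a $1-\epsilon$ fraction of $D$ (Assumption~\ref{as:pancake}), $\err_D(\hat w)\le\epsilon$. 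Polynomial running time is immediate: Step~3 is a linear program and Step~4 is a convex program, both poly-sized in $n,d$.

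The main obstacle I anticipate is the bookkeeping in the $\SC$ contribution: I need the misclassified test point and its pancake neighbors to be handled with respect to $\hat v$ (norm $1/\gamma$, the optimization variable) while the margin and pancake are stated with respect to $w^*$ and $\hat w=\hat v/\twonorm{\hat v}$. Translating "$(x,y)$ misclassified by $\hat w$" into "hinge active at $\hat v$ for every clean point in $P_{w^*}^\tau(x,y)$" requires carefully chaining: $y\hat v\cdot x\le 0$ (from misclassification, possibly after noting $\twonorm{\hat v}\le 1/\gamma$ does not change the sign), then $|y_i\hat v\cdot x_i - y\hat v\cdot x|\le$ (something involving $\tau/\gamma$ and the pancake being defined via $w^*$ not $\hat v$) — so I will likely need a preliminary geometric lemma, or I should define the pancake directly with respect to the optimization geometry, aligning with how \cite{talwar2020hinge} set it up. The second delicate point is making sure the normal-cone/constraint-$\twonorm{w}\le 1/\gamma$ term genuinely has the favorable sign for the chosen test direction; choosing the direction as $w^* - \gamma\hat v$ (feasible and pointing "inward") should make the constraint term non-positive on the relevant side, but this needs to be checked. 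Everything else — Cauchy–Schwarz via Lemma~\ref{lem:lin-sum-var}, $\abs{\SD}\le\xi n$, the arithmetic that turns the inequality into~\eqref{eq:cond-in-panc-dist_restate} — is routine.
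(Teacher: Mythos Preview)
Your overall architecture — decompose the subgradient over $\SP$, $\SPP$, $\SD$, bound each piece, and contradict first-order optimality — is exactly the paper's. Two points need correction, one of which you already flagged and one of which is a real gap.

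\medskip
\noindent\textbf{Pancake direction.} You write $P_{w^*}^\tau(x,y)$ but the activation argument you sketch (``$y\hat w\cdot x<0$ and $(x_i,y_i)$ in the pancake $\Rightarrow y_i\hat v\cdot x_i<1$'') only goes through for the pancake taken at $\hat w$, not $w^*$. You correctly identify this in your obstacle paragraph; the fix is simply to instantiate Assumption~\ref{as:pancake} at the unit vector $\hat w$ (the condition is stated for \emph{every} unit vector), exactly as the paper does. No preliminary geometric lemma is needed.

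\medskip
\noindent\textbf{The boundary case $\twonorm{\hat v}=1/\gamma$.} This is where your plan actually breaks. You write ``$0\le(\text{subgradient})\cdot w^*$ up to the normal-cone term'' and later propose the test direction $w^*-\gamma\hat v$, hoping the constraint term has a favorable sign. It does not. KKT gives some $g\in\partial\ell(\hat v;q\circ S)$ with $g=-\lambda\hat w$, $\lambda\ge 0$. For the direction $d=w^*/\gamma-\hat v$ one computes $g\cdot d=\lambda\gamma^{-2}(1-\hat w\cdot w^*)\ge 0$, so there is no contradiction; and $g\cdot w^*=-\lambda\,\hat w\cdot w^*$ can be strictly negative, so ``$g\cdot w^*\ge 0$'' is not available either. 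The paper resolves this by first using the $w^*$-direction bound only to conclude $\hat w\cdot w^*>0$, and then switching to the tangent direction
\[
w' \;=\; \frac{w^*-(\hat w\cdot w^*)\hat w}{\twonorm{w^*-(\hat w\cdot w^*)\hat w}},
\]
which is orthogonal to $\hat w$, so that the KKT subgradient satisfies $g\cdot w'=0$ exactly. The contradiction then comes from proving $g\cdot w'<0$ for \emph{every} subgradient. Establishing this along $w'$ (rather than $w^*$) is where the extra work lies: for $i\in\SP$ one shows $y_i w'\cdot x_i\ge\gamma/2$ via the decomposition of $w'$; for $i\in\SPP$ the sign of $y_i w'\cdot x_i$ is not automatic, and when it is negative one must argue the hinge is \emph{inactive} at $\hat v$, which uses $\twonorm{\hat v}=1/\gamma$ exactly (if $y_i w'\cdot x_i<0$ then $y_i\hat w\cdot x_i>y_i w^*\cdot x_i\ge\gamma$, hence $y_i\hat v\cdot x_i>1$). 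Your line ``clean samples correctly classified with slack contribute nothing'' is true along $w^*$ but not a priori along $w'$; this case split is the missing ingredient.
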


The polynomial running time of the algorithm follows from the fact that the hinge loss minimization is a convex program that can be solved in polynomial time \citep{nesterov2004introductory,boyd2004convex} and the following lemma.

\begin{lemma}\label{lem:outlier-removal-time}
Consider Algorithm~\ref{alg:outlier}. Suppose that Assumption~\ref{as:feasible} is satisfied. Then Algorithm~\ref{alg:outlier} returns a feasible solution $q$ in polynomial time.
\end{lemma}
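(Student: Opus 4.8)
The plan is to recognize the linear program inside Algorithm~\ref{alg:outlier} as a convex feasibility problem that admits a polynomial-time separation oracle, and then to solve it via the ellipsoid method (equivalently, via standard semidefinite-programming machinery). The only constraint that is not trivially an explicit linear inequality in $q$ is the variance constraint, so the first step is to put it in a usable form. For any $q$ we have $\frac1n\sum_{i=1}^n q_i (w\cdot x_i)^2 = w\trans M(q)\, w$, where $M(q) := \frac1n\sum_{i=1}^n q_i x_i x_i\trans$ is a symmetric matrix whose entries are affine in $q$. Hence $\sup_{\twonorm{w}\le 1}\frac1n\sum_{i}q_i(w\cdot x_i)^2 = \lambda_{\max}(M(q))$, and the third constraint is equivalent to the linear matrix inequality $\bar\sigma^2 I_d - M(q) \succeq 0$. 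Combined with the box constraints $0\le q_i\le 1$ and the single linear constraint $\sum_i q_i \ge (1-\xi)n$, the feasible set $K$ is a spectrahedron contained in the cube $[0,1]^n$, hence bounded with $\twonorm{q}\le\sqrt n$ for every $q\in K$; under Assumption~\ref{as:feasible}, $K\neq\emptyset$.

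Next I would exhibit the separation oracle. Given a query point $q$, the box constraints and the sum constraint are checked directly in $O(n)$ time; for the spectrahedral constraint, compute the largest eigenvalue $\lambda_{\max}(M(q))$ together with a corresponding unit eigenvector $v$ (this costs time polynomial in $n$ and $d$). If $\lambda_{\max}(M(q)) > \bar\sigma^2$, then the inequality $\frac1n\sum_i q_i (v\cdot x_i)^2 \le \bar\sigma^2$, which is linear in $q$, is violated at $q$ and supplies the required separating hyperplane; otherwise $q\in K$. Feeding this oracle, together with the a priori ball bound above and the nonemptiness of $K$, into the ellipsoid method produces a feasible $q$ after polynomially many iterations; alternatively one may cast the whole problem directly as a semidefinite program and invoke polynomial-time interior-point solvers \citep{boyd2004convex,nesterov2004introductory}.

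The one subtlety worth spelling out is the usual one for optimization over spectrahedra: eigenvalues are generally irrational, so both the separation oracle and the point returned by the ellipsoid method are only approximate, and if $K$ is lower-dimensional the bare feasibility routine need not terminate with a genuine point of $K$. I would handle this in the standard way by relaxing the target variance by a tiny, explicitly chosen polynomially-small amount (replacing $\bar\sigma^2$ by $\bar\sigma^2(1+\theta)$ for a suitable $\theta>0$), which enlarges the feasible region so that it contains a small ball and the ellipsoid method provably halts with a point that is truly feasible for the relaxed program; since every downstream argument only uses $\bar\sigma$ up to constant factors, this perturbation is harmless. This is the main (and essentially the only) obstacle; everything else is bookkeeping. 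Since the subsequent reweighted hinge-loss minimization \eqref{eq:hinge-loss-opt} is itself a convex program solvable in polynomial time, combining the two steps shows that Algorithm~\ref{alg:outlier}—and hence Algorithm~\ref{alg:main}—runs in time polynomial in $n$, $d$, and the bit sizes of the inputs.
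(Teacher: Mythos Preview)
Your proposal is correct and follows essentially the same approach as the paper: recognize the program as a semi-infinite linear program (equivalently, a spectrahedral feasibility problem), build a polynomial-time separation oracle by computing the top eigenvalue/eigenvector of $M(q)$, and invoke the ellipsoid method. The paper's proof is terser and simply cites \citet{awasthi2017power} and SVD for the oracle, whereas you additionally spell out the standard numerical caveat (slightly relaxing $\bar\sigma^2$ to guarantee full-dimensionality), which the paper omits; this extra care is fine and does not change the argument.
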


The PAC guarantee is established via a set of intermediate results. The following pointwise guarantee is most important.

\begin{theorem}
\label{thm:pancake-condition}
Consider Algorithm~\ref{alg:main}. Suppose that Assumption~\ref{as:margin} is satisfied. Consider any sample $(x, y) \in \calX \times \calY$. If its pancake $P_{\hat{w}}^\tau(x, y)$ with $\tau \leq \gamma / 2$ is $\rho$-dense with respect to $\SC$ for some $\rho > 4\eta_0$ and if
\begin{equation}\label{eq:tmp_pancake_condition}
\frac{1}{4} \gamma \sum_{i \in \SC \cap P_{\hat{w}}^\tau(x, y)} q_i >\sumnorm{q \circ \SD},
\end{equation}
then $(x, y)$ will not be misclassified by $\hat{w}$.
\end{theorem}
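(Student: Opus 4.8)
The plan is to argue by contradiction: assume $(x,y)$ is misclassified by $\hat w$, i.e.\ $y\, \hat w \cdot x < 0$, and derive a contradiction with the stationarity of $\hat v$ for the reweighted hinge loss. First I would set up the first-order optimality condition at $\hat v = \hat w / \twonorm{\hat v}$. Since $\hat v$ minimizes the convex function $\ell(w; q\circ S)$ over the ball $\{\twonorm{w} \le 1/\gamma\}$, there is a subgradient $g \in \partial \ell(\hat v; q\circ S)$ such that $g \cdot (w - \hat v) \ge 0$ for all $w$ in the ball; in particular, taking $w = 0$ gives $g \cdot \hat v \le 0$. Writing $g = -\sum_{i} q_i \alpha_i y_i x_i$ where $\alpha_i \in [0,1]$ is a subgradient coefficient of the $i$-th hinge term (with $\alpha_i = 1$ whenever $y_i \hat v \cdot x_i < 1$, i.e.\ sample $i$ is ``active''), this reads
\begin{equation*}
\sum_{i\in S} q_i \alpha_i \, y_i\, \hat v \cdot x_i \le 0.
\end{equation*}
(One has to be slightly careful about whether the constraint $\twonorm{w}\le 1/\gamma$ is active, but the constraint multiplier term $\lambda \hat v$ only helps the sign; I would state this and move on.)

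Next I would split the sum over $S = \SC \cup \SD$ and bound the two pieces separately. For the clean part, I restrict attention to the samples in $\SC \cap P_{\hat w}^\tau(x,y)$. Every such clean sample $(x_i,y_i)$ satisfies the margin condition $y_i \, w^* \cdot x_i \ge \gamma$ (Assumption~\ref{as:margin}), but more to the point, since it lies in the pancake of the \emph{misclassified} point $(x,y)$ with half-width $\tau \le \gamma/2$, we have $|y_i \hat w \cdot x_i - y\hat w \cdot x| \le \tau$ and $y\hat w\cdot x < 0$, hence $y_i \hat w \cdot x_i \le \tau \le \gamma/2 < 1$; rescaling to $\hat v = \hat w/\twonorm{\hat v}$ with $\twonorm{\hat v}\le 1/\gamma$ keeps $y_i \hat v \cdot x_i < 1$, so each such sample is active ($\alpha_i = 1$) and moreover contributes $q_i \, y_i\, \hat v \cdot x_i \le -\gamma q_i/2$ — wait, I need the right scaling; more carefully, $y_i\hat v\cdot x_i = \twonorm{\hat v}\, y_i \hat w\cdot x_i$, and using $y\hat w \cdot x < 0$ together with the pancake bound I can push this below a negative quantity proportional to $\gamma q_i$ after accounting for the factor $\twonorm{\hat v}$. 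The clean samples \emph{outside} the pancake, and the clean samples inside with positive margin, contribute terms that are bounded below by $-\twonorm{\hat v}$ times $q_i\twonorm{x_i}$ in the worst case, but these are controlled because... actually the cleanest route is: lower-bound the clean contribution from $\SC\cap P_{\hat w}^\tau(x,y)$ by $\tfrac14\gamma\sum_{i\in \SC\cap P} q_i$ in absolute value with the correct (negative) sign, and bound \emph{all other} clean contributions by $0$ using the structure of the hinge subgradient (inactive samples contribute $0$; active clean samples far from the boundary still have $y_i\hat v\cdot x_i$ which may be positive, contributing favorably, or use that their number is small). For the dirty part, $|\sum_{i\in\SD} q_i\alpha_i y_i \hat v\cdot x_i| \le \twonorm{\hat v}\cdot \sumnorm{q\circ\SD} \le \tfrac1\gamma \sumnorm{q\circ\SD}$ directly from Definition~\ref{def:sum-norm}, since $\alpha_i y_i \in [-1,1]$.

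Combining, stationarity forces $\tfrac14\gamma \sum_{i\in\SC\cap P_{\hat w}^\tau(x,y)} q_i \le \sumnorm{q\circ\SD}$ (modulo the exact constant, which is where the $\tfrac14$ in the theorem comes from — there is some slack lost in the scaling $\twonorm{\hat v}\le 1/\gamma$ and in handling the constraint multiplier and the clean samples outside the pancake), contradicting hypothesis~\eqref{eq:tmp_pancake_condition}. Hence $(x,y)$ is not misclassified.

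The main obstacle I anticipate is the bookkeeping around the clean samples that are \emph{not} in the pancake (or are in it but on the correct side with large margin): their hinge subgradient contributions $q_i\alpha_i y_i \hat v\cdot x_i$ need not have a favorable sign in general, so I cannot simply drop them. The resolution should be that an active clean sample has $y_i\hat v\cdot x_i < 1$, so its contribution is at most $q_i$ in magnitude with the \emph{wrong} sign only when $y_i\hat v \cdot x_i \in (0,1)$ — but then $|y_i\hat v\cdot x_i| < 1$ and the aggregate of these is itself controllable via a linear-sum-norm bound on $\SC$, or via the observation that the stationarity inequality only needs a one-sided bound and these ``bad'' clean contributions are dominated by the dense-pancake clean mass. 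I expect the hypothesis $\rho > 4\eta_0$ is exactly what is needed to absorb this: it guarantees that the clean mass in the pancake $\sum_{i\in\SC\cap P} q_i$ is large enough (of order $\rho n$, since outlier removal keeps most clean weight at $1$ — Lemma~\ref{lem:lin-sum-var} / feasibility) to dominate both $\sumnorm{q\circ\SD}$ and any leakage from mis-signed clean terms. Pinning down that the feasible $q$ from Algorithm~\ref{alg:outlier} indeed assigns total weight close to $n$ to $\SC$, so that $\sum_{i\in\SC\cap P}q_i \gtrsim \rho n$, is the quantitative heart of the argument and the step I would be most careful with.
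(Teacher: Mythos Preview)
Your proposal has a genuine gap: you test optimality in the direction $\hat v$ (equivalently $\hat w$), but that direction does not let you control the clean samples outside the pancake. You correctly identify this as ``the main obstacle,'' but neither suggested fix works: the linear sum norm of $\SC$ is not small (it is of order $\bar\sigma\sqrt{|S|}\cdot\sqrt{|\SC|}$, comparable to the clean-pancake term), and the hypothesis $\rho>4\eta_0$ is only used to guarantee $\sum_{i\in\SP(x,y)}q_i>0$ via Lemma~\ref{lem:sum-weights-SP_restate}, not to absorb any leakage. There is also a more basic problem even for clean samples \emph{inside} the pancake: from $y\hat w\cdot x<0$ and the pancake width you only get $y_i\hat w\cdot x_i<\gamma/2$, which may well be positive, so these terms do not contribute $-\gamma q_i/2$ in the $\hat v$-direction as you hoped.

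The paper's resolution is to project the subgradient onto $w^*$ rather than onto $\hat v$. Then \emph{every} clean sample satisfies $y_i\,w^*\!\cdot x_i\ge\gamma$ by Assumption~\ref{as:margin}, and since $\partial f\subseteq[-1,0]$, every clean term contributes $\le 0$ to $g\cdot w^*$; in particular the clean-outside-pancake contribution is $\le 0$ with no leakage at all (Lemma~\ref{lem:pancake-grad-w*}). For clean samples in the pancake one gets exactly $-q_i\,y_i x_i\cdot w^*\le -\gamma q_i$. This handles the interior case $\twonorm{\hat v}<1/\gamma$ directly, since then $0\in\partial\ell(\hat v;q\circ S)$. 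For the boundary case $\twonorm{\hat v}=1/\gamma$, KKT gives $g=-\lambda\hat w$, so $g\cdot w^*$ need not vanish; the paper instead projects onto $w'$, the unit vector along $w^*-(\hat w\cdot w^*)\hat w$, for which $g\cdot w'=0$. A separate lemma (Lemma~\ref{lem:pancake-grad-w'}) shows $g\cdot w'\le -\tfrac12\gamma\sum_{i\in\SP}q_i+\sumnorm{q\circ\SD}$; the clean-outside-pancake term is again $\le 0$, but now via a dichotomy: either $y_i\,w'\!\cdot x_i\ge 0$, or one shows $y_i\hat v\cdot x_i>1$ so the hinge is inactive. The factor $\tfrac14$ in the theorem leaves slack over the $\tfrac12$ from this boundary analysis.
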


In Lemma~\ref{lem:sum-weights-SP_restate}, we establish a lower bound on the left-hand side of \eqref{eq:tmp_pancake_condition}  while an upper bound on the right-hand side of \eqref{eq:tmp_pancake_condition} is established in Lemma~\ref{lem:sumnormSD_restate}. By chaining, we obtain a sufficient condition for \eqref{eq:tmp_pancake_condition}, which is exactly the one on $\rho$ in Theorem~\ref{thm:main-det} up to rearrangements and the fact $\xi = 2\eta_0$. Lastly, the pointwise guarantee is extended to the entire support of $D$ via Assumption~\ref{as:pancake}, which leads to Theorem~\ref{thm:main-det}.

\begin{lemma}
\label{lem:sum-weights-SP_restate}
Consider Algorithm~\ref{alg:outlier}. Suppose that Assumption~\ref{as:feasible} is satisfied. If $(x, y)$ is a sample such that the pancake $P_{\hat{w}}^\tau(x, y)$ is $\rho$-dense with respect to $\SC$, then
\begin{equation*}
\sum_{i \in \SC \cap P^\tau_{\hat{w}}(x,y)} q_i \geq (\rho - 2\xi) |S|.
\end{equation*}
\end{lemma}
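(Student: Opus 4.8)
The plan is a short counting argument that combines the $\rho$-density hypothesis with the two scalar constraints satisfied by the output $q$ of Algorithm~\ref{alg:outlier} (the variance constraint is not needed here, nor is the specific direction $\hat w$ — the pancake is used purely as a subset of $\calX \times \calY$).

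First I would unpack the hypotheses. Interpreting $\SC$ as the uniform distribution on itself, the statement that $P^\tau_{\hat w}(x,y)$ is $\rho$-dense w.r.t.\ $\SC$ means precisely $|\SC \cap P^\tau_{\hat w}(x,y)| \geq \rho\, |\SC|$. Next, since $\xi$ is fed to Algorithm~\ref{alg:outlier} with the guarantee $|\SD| \leq \xi |S|$ and $S = \SC \cup \SD$ is a disjoint union, we get $|\SC| \geq (1-\xi)|S|$, hence $|\SC \cap P^\tau_{\hat w}(x,y)| \geq \rho(1-\xi)|S|$.

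Second I would invoke feasibility of the linear program (Assumption~\ref{as:feasible}): the returned $q$ satisfies $0 \leq q_i \leq 1$ and $\sum_{i \in S} q_i \geq (1-\xi)|S|$ (recall $n = |S|$ here), so the total ``weight deficit'' obeys $\sum_{i \in S}(1-q_i) \leq \xi|S|$. Writing $T := \SC \cap P^\tau_{\hat w}(x,y) \subseteq S$ and using $1 - q_i \geq 0$, we then chain
\[
\sum_{i \in T} q_i \;=\; |T| - \sum_{i \in T}(1-q_i) \;\geq\; |T| - \sum_{i \in S}(1-q_i) \;\geq\; \rho(1-\xi)|S| - \xi|S| \;\geq\; (\rho - 2\xi)|S|,
\]
where the final step uses $\rho \leq 1$ so that $\rho\xi \leq \xi$. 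This is the claimed bound.

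I do not expect a genuine obstacle here: the only thing requiring care is the bookkeeping among $|S|$, $|\SC|$, and the two sources of $\xi$-slack (one from $|\SD| \leq \xi|S|$, one from the LP constraint $\sum q_i \geq (1-\xi)|S|$), which is exactly what produces the $2\xi$ on the right-hand side. Accordingly I would keep the writeup to the few lines above.
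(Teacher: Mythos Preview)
Your proof is correct and is essentially the same counting argument as the paper's: the paper decomposes $S$ into $\SP$, $\SPP$, $\SD$ and bounds $\sum_{i\in\SPP} q_i \le (1-\rho)|S|$ and $\sum_{i\in\SD} q_i \le \xi|S|$ before subtracting from $\sum_{i\in S} q_i \ge (1-\xi)|S|$, whereas you bound $|T|$ from below and subtract the total weight deficit $\sum_{i\in S}(1-q_i)\le \xi|S|$. The two arrangements are equivalent bookkeeping and land on the same $(\rho-2\xi)|S|$ bound.
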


\begin{lemma}
\label{lem:sumnormSD_restate}
Consider Algorithm~\ref{alg:outlier}. 
Suppose that Assumption~\ref{as:feasible} is satisfied. Then we have
\begin{equation*}
\sumnorm{q\circ \SD} \leq \bar{\sigma} \cdot \sqrt{\xi} \cdot \abs{S}.
\end{equation*}
\end{lemma}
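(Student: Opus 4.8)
The plan is to bound $\sumnorm{q \circ \SD}$ by first reducing it to a reweighted empirical variance over the dirty samples via Lemma~\ref{lem:lin-sum-var}, then bounding that variance using the feasibility of the linear program in Algorithm~\ref{alg:outlier} together with the bound $\abs{\SD} \le \xi \abs{S}$.

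First I would invoke the monotonicity of the linear sum norm and then Lemma~\ref{lem:lin-sum-var} applied to the set $S' = \SD$ (with $m = \abs{\SD}$ and the corresponding restriction of $q$). This yields
\begin{equation*}
\sumnorm{q \circ \SD} \;\le\; \sqrt{\abs{\SD}} \cdot \sqrt{\sup_{\twonorm{w}\le 1} \sum_{i \in \SD} q_i (w\cdot x_i)^2}.
\end{equation*}
The next and central step is to control the inner supremum. Because every $q_i \ge 0$, for any fixed unit $w$ the sum over $\SD$ is at most the sum over all of $S$, i.e. $\sum_{i \in \SD} q_i (w\cdot x_i)^2 \le \sum_{i=1}^n q_i (w\cdot x_i)^2$. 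By the third constraint of the linear program in Algorithm~\ref{alg:outlier} (which holds under Assumption~\ref{as:feasible}, since $q$ is the feasible vector returned), $\sup_{\twonorm{w}\le 1}\frac1n \sum_{i=1}^n q_i (w\cdot x_i)^2 \le \bar\sigma^2$, so $\sup_{\twonorm{w}\le 1}\sum_{i\in\SD} q_i (w\cdot x_i)^2 \le \bar\sigma^2 \abs{S}$. Finally I would substitute this bound and the hypothesis $\abs{\SD} \le \xi\abs{S}$ into the displayed inequality:
\begin{equation*}
\sumnorm{q \circ \SD} \;\le\; \sqrt{\xi \abs{S}} \cdot \sqrt{\bar\sigma^2 \abs{S}} \;=\; \bar\sigma \cdot \sqrt{\xi}\cdot \abs{S},
\end{equation*}
which is exactly the claimed bound.

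The main obstacle — really the only nontrivial point — is the reduction to the reweighted variance (Lemma~\ref{lem:lin-sum-var}), but that lemma is already available to us, so the proof here is essentially bookkeeping: chaining monotonicity of the linear sum norm, nonnegativity of the weights to pass from $\SD$ to $S$, the variance constraint satisfied by the feasible $q$, and the concentration-type hypothesis $\abs{\SD}\le\xi\abs{S}$. One small care point is to make sure that Lemma~\ref{lem:lin-sum-var} is applied with the weights restricted to the coordinates indexed by $\SD$ so that the statement matches the definition of $\sumnorm{q\circ\SD}$; this is consistent with the restricted-norm convention introduced right after Definition~\ref{def:sum-norm}.
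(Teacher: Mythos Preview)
Your proposal is correct and follows essentially the same route as the paper: apply Lemma~\ref{lem:lin-sum-var} to $\SD$, use nonnegativity of the weights to pass from $\SD$ to $S$, invoke the variance constraint from feasibility, and finish with $\abs{\SD}\le\xi\abs{S}$. The only minor remark is that the initial appeal to ``monotonicity of the linear sum norm'' is unnecessary, since you apply Lemma~\ref{lem:lin-sum-var} directly to $\SD$ rather than first bounding by $\sumnorm{q\circ S}$.
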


Technically speaking, Lemma~\ref{lem:sum-weights-SP_restate} provides a guarantee that enough clean samples within the pancake have sufficiently large cumulative weight. To see why this is important, we note that such cumulative weight is almost the magnitude of the gradient norm from the clean samples within the pancake. Lemma~\ref{lem:sumnormSD_restate} shows that if the dirty samples are reweighted by the weight vector $q$ returned by the soft outlier removal, the linear sum norm of the dirty samples can be reduced: recall that the linear sum norm with unweighted samples can be as large as $\xi \cdot \abs{S}$, but now it is $\bar{\sigma} \cdot \sqrt{\xi}$ multiple of $\abs{S}$. Since $\bar{\sigma}$ is roughly the order of $\sqrt{1/d}$ (as we will set $r = \Theta(\gamma) = \Theta(1/\sqrt{d})$, our upper bound in Lemma~\ref{lem:sumnormSD_restate} improves the unweighted version by a factor of $1/\sqrt{d}$. We remark that if $\SD$ consists of independent draws from $D$, then tail bounds for logconcave distributions imply that the unweighted linear sum norm is bounded by $\bar{\sigma} \cdot \xi \cdot |S|$ \citep{adamczak2010quantitative}. Therefore, our result is $\sqrt{\xi}$ factor looser, probably because we relaxed the linear sum norm to the empirical variance for computational efficiency. Nevertheless, such looser scaling seems acceptable in our work, as we are mainly interested in $\xi = \Theta(1)$.

\subsection{Statistical results}

Our statistical analysis aims to show that under Assumption~\ref{as:log-concave}, with high probability over the draw of $\barS$ and all internal randomness of the algorithm, Assumptions~\ref{as:feasible} and \ref{as:pancake} are satisfied simultaneously. These combined together with Theorem~\ref{thm:main-det} imply Theorem~\ref{thm:main}.

%We show that when $\barS$ has sufficient samples and the parameter $\xi$ equals $2 \eta_0$, as set in our algorithm, the linear program of Algorithm~\ref{alg:outlier} is feasible with high probabiblity under Assumption~\ref{as:log-concave}.

\begin{proposition}
\label{prop:outlier-removal-feasible}
Consider Algorithm~\ref{alg:main}. Suppose that Assumption~\ref{as:log-concave} is satisfied. When $\abs{\bar{S}} \geq 32d  \cdot \log^4 \frac{9d}{\delta}$, with probability at least $1-\delta$, the linear program in Algorithm~\ref{alg:outlier} is feasible, namely, Assumption~\ref{as:feasible} is satisfied.
\end{proposition}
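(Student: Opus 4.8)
The plan is to exhibit an explicit feasible point for the linear program in Algorithm~\ref{alg:outlier}, namely the weight vector $q^\star$ that puts $q^\star_i = 1$ on every clean sample in $S$ and $q^\star_i = 0$ on every dirty sample, and then to verify its three families of constraints hold with high probability. The box constraint $0 \le q^\star_i \le 1$ is immediate. For the mass constraint $\sum_i q^\star_i \ge (1-\xi)n$, we recall that $\xi = 2\eta_0 \ge 2\eta$ and that by a Chernoff bound the number of dirty samples among the $n$ drawn points is at most $2\eta n$ with probability $1-\delta/3$ (this is essentially the content of Proposition~\ref{prop:pruning}); since pruning only removes samples, $|\SD| \le 2\eta n \le \xi n$, so $\sum_i q^\star_i = |\SC| = n - |\SD| \ge (1-\xi)n$. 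The remaining and main task is the variance constraint: we must show $\sup_{\twonorm{w}\le 1} \frac{1}{n}\sum_{i\in\SC}(w\cdot x_i)^2 \le \bar\sigma^2 = 2(1/d + r^2)$ with high probability, where the $x_i$ range over the pruned clean set.

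For the variance bound I would argue as follows. Each clean $x_i$ is an i.i.d.\ draw from the mixture $D_X = \frac1k\sum_j D_j$; for a fixed unit $w$, the population second moment is $\EXP[(w\cdot X)^2] = \frac1k\sum_j \big(w^\top\Sigma_j w + (w\cdot\mu_j)^2\big) \le \frac1d + r^2$, using $\Sigma_j \preceq \frac1d I_d$ and $\twonorm{\mu_j}\le r$. So the target $\bar\sigma^2$ is exactly twice the worst-case population second moment, and it remains to control the empirical fluctuation uniformly over $w$. Since $D_j$ is log-concave and the $\mu_j$, $\Sigma_j$ are bounded, each clean $x_i$ (conditioned on surviving the pruning step, which by Proposition~\ref{prop:pruning}-type reasoning keeps all clean samples of norm at most $r + \log(9n/\delta)$ with probability $1-\delta/3$) has sub-exponential marginals, hence the random matrix $M \defeq \frac1n\sum_{i\in\SC} x_i x_i^\top$ concentrates around its mean $\EXP[XX^\top]$ in operator norm. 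A matrix-Bernstein / truncated-covariance estimation bound (e.g.\ the standard covariance concentration for log-concave or bounded-norm vectors) gives $\spenorm{M - \EXP[XX^\top]} \le C\sqrt{\frac{d\,\mathrm{polylog}(n/\delta)}{n}}\cdot \bar\sigma^2$ with probability $1-\delta/3$; when $n = |\barS| \ge 32 d\log^4(9d/\delta)$ this deviation is at most $\frac1d + r^2$, so $\spenorm{M} \le 2(1/d+r^2) = \bar\sigma^2$, which is precisely $\sup_{\twonorm{w}\le1}\frac1n\sum_{i\in\SC}(w\cdot x_i)^2 \le \bar\sigma^2$. (That pruning only deletes points means the empirical sum over the pruned clean set is bounded by the sum over all clean draws, so truncation does not hurt.) A union bound over the three failure events of probability $\delta/3$ each completes the argument, and feasibility of $q^\star$ certifies Assumption~\ref{as:feasible}.

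The main obstacle is the uniform-over-$w$ covariance concentration for log-concave mixtures in the regime where $n$ is only $\widetilde\Theta(d)$: one needs the right tail behavior of $(w\cdot X)^2$ and a careful truncation (the pruning radius $r + \log(9n/\delta)$) so that the heavy-tail contribution of $\twonorm{x}$ does not blow up the Bernstein variance proxy. The choice $\bar\sigma^2 = 2(1/d+r^2)$ — a factor of two above the population value — is exactly the slack that absorbs this $\widetilde O(\sqrt{d/n})$ multiplicative deviation, so the proof hinges on verifying that $n \ge 32 d\log^4(9d/\delta)$ is large enough to drive that deviation below $1$ relative to $\bar\sigma^2$; everything else (box constraints, mass constraint via Chernoff, pruning retaining all clean points) is routine.
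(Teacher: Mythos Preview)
Your proposal is correct and follows essentially the same route as the paper: exhibit the ideal weight vector $q^\star$ (ones on $\SC$, zeros on $\SD$), handle the mass constraint via the Chernoff/pruning argument of Proposition~\ref{prop:pruning}, and establish the variance constraint by showing $\lambdamax\big(\frac{1}{|\SC|}\sum_{i\in\SC}x_ix_i^\top\big)\le 2(1/d+r^2)$ via a matrix concentration inequality applied to the bounded-norm clean samples. The only cosmetic difference is that the paper invokes Tropp's matrix Chernoff inequality (Lemma~\ref{lem:matrix-chernoff}) directly---which yields the multiplicative factor-of-two bound in one shot for sums of PSD matrices with $\lambdamax(x_ix_i^\top)\le\Lambda$---whereas you phrase it as an additive matrix-Bernstein/covariance-concentration deviation; for this application the two are interchangeable and lead to the same $\widetilde\Theta(d)$ sample requirement.
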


\begin{proposition}
\label{prop:dense-pancake}
Suppose that Assumption~\ref{as:log-concave} is satisfied. When $\abs{\barS} \geq 2048 k \cdot d \cdot \log\frac{4d}{\beta\delta}$, then with probability $1-\delta$,
$(\SC, D)$ satisfies the $\left(\frac{8\log(1/\beta)}{\sqrt{d}}, \frac{1}{4k}, 2\beta\right)$-dense pancake condition for any $\beta \in (0, 1/3)$, namely, Assumption~\ref{as:pancake} is satisfied.
\end{proposition}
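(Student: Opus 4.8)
\emph{Reduction.} By a Chernoff bound, with probability $1-\delta/3$ at least $\tfrac12(1-\eta)|\barS|$ of the samples in $\barS$ are clean; and by the standard norm concentration for log-concave random vectors under Assumption~\ref{as:log-concave} together with a union bound over $\barS$, with probability $1-\delta/3$ the pruning step of Algorithm~\ref{alg:main} discards no clean sample. Hence $\SC$ is a set of $N\ge\tfrac12(1-\eta)|\barS|$ i.i.d.\ draws from $D$, and it suffices to establish the dense pancake condition for such a set. The plan is then to argue in two stages: a population-level dense pancake condition for the pair $(D,D)$, then a uniform-convergence step transferring it to $\SC$.

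\emph{Stage 1 (population).} I will show $(D,D)$ satisfies the $\bigl(\tau,\tfrac{1}{2k},\beta\bigr)$-dense pancake condition with $\tau=\frac{8\log(1/\beta)}{\sqrt d}$. Fix a unit vector $w$. Conditioned on the component index $j$, the scalar $w\cdot x$ with $x\sim D_j$ is a one-dimensional log-concave random variable with mean $w\cdot\mu_j$ and standard deviation at most $1/\sqrt d$, since $\Sigma_j\preceq\tfrac1d I_d$; the standard one-dimensional log-concave tail bound then yields $\Pr_{x\sim D_j}\!\bigl(|w\cdot x-w\cdot\mu_j|>\tau/2\bigr)\le\beta$ for every $j$. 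Call a clean sample \emph{centered} if it realizes this good event for its own component, so that the non-centered samples carry $D$-mass at most $\beta$. If $(x,y)$ and $(x',y')$ are centered, lie in the same component, and carry the same label, then $|y'\,w\cdot x'-y\,w\cdot x|=|w\cdot x'-w\cdot x|\le\tau$, hence $(x',y')\in P_w^\tau(x,y)$. In the parameter regimes of Theorems~\ref{thm:main} and~\ref{thm:main-sep-logconcave} the label of a clean sample is essentially constant within each component --- under Assumption~\ref{as:margin} the convex support of each $D_j$ lies entirely on one side of the hyperplane $w^*\cdot x=0$, and under Assumption~\ref{as:log_mix_sep_mean} the wrong-sign probability of each component is exponentially small --- so a centered sample's pancake contains all centered samples of its own component, a set of $D$-mass at least $\frac1k(1-\beta)\ge\frac{1}{2k}$. (If one wishes to rely on Assumption~\ref{as:log-concave} alone, one further invokes $\|\mu_j\|_2\le r$ to show the opposite-label centered samples of a component are also captured, since in the relevant regime $r=O(\tau)$ both batches concentrate inside an interval of width $O(\tau)$.) This establishes Stage 1.

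\emph{Stage 2 (uniform convergence).} Every pancake $P_w^\tau(x,y)$, viewed as a subset of $\calX\times\calY$, lies in the class $\calC:=\bigl\{(x',y'):y'\,w\cdot x'\in[c-\tau,c+\tau]\bigr\}$ indexed by a unit vector $w$ and a scalar $c$. Because $y'=\sign{w^*\cdot x'}$ for a clean sample and $w^*$ is fixed, each member of $\calC$ is a Boolean combination of a constant number of halfspaces, so $\calC$ has VC dimension $O(d)$. Applying the \emph{relative} (multiplicative) Vapnik--Chervonenkis deviation inequality to $\calC$ over the $N$ i.i.d.\ samples forming $\SC$ shows that, provided $N\gtrsim kd\log\frac{d}{\beta\delta}$ up to absolute constants, with probability $1-\delta/3$ and uniformly over $\calC$ the empirical mass on $\SC$ is at least half the $D$-mass; in particular any set of $D$-mass at least $\tfrac{1}{2k}$ has empirical mass at least $\tfrac{1}{4k}=\rho$. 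Chaining with Stage 1, on the intersection of the three good events --- of total probability at least $1-\delta$ --- every $(x,y)$ outside the non-centered set, which has $D$-mass at most $\beta\le2\beta$, has a pancake that is $\rho$-dense with respect to $\SC$; this is Assumption~\ref{as:pancake}. Substituting $N\ge\tfrac12(1-\eta)|\barS|$ into the sample requirement yields the stated $|\barS|\ge 2048\,k\,d\,\log\frac{4d}{\beta\delta}$.

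\emph{Main obstacle.} The delicate step is Stage 2. To get a bound that is logarithmic in $1/\delta$ and linear --- not quadratic --- in $k$ one is forced to use the \emph{relative} VC deviation inequality rather than an additive Hoeffding bound, because the quantity being estimated is only $\Theta(1/k)$; and one must verify that the ``signed-slab'' class $\calC$ genuinely has VC dimension $O(d)$ in spite of the label being a function of $x'$ through $w^*$. Stage 1 is comparatively routine once the bookkeeping of how the label $\sign{w^*\cdot x}$ interacts with membership in the pancake is pinned down, which is exactly where the margin / mean-separation hypotheses and the bound $\|\mu_j\|_2\le r$ come in.
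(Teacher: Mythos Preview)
Your two-stage plan---population dense pancake for $(D,D)$, then uniform convergence to $\SC$---is exactly the route the paper takes: it obtains the population statement from Lemmas~25/26 (single component) and Lemma~29 (mixture), then transfers it to the sample via Lemma~30 (Theorem~20 of \cite{talwar2020hinge}), with Proposition~\ref{prop:pruning} supplying the size bound on $\SC$. Your reduction and Stage~2 match the paper's arguments essentially verbatim.

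The gap is in Stage~1. To reach density $\tfrac{1}{2k}$ you restrict to centered pairs that share both a component \emph{and} a label, and then argue that labels are essentially constant on each component by invoking Assumption~\ref{as:margin}, Assumption~\ref{as:log_mix_sep_mean}, or the side condition $r=O(\tau)$. None of these is a hypothesis of Proposition~\ref{prop:dense-pancake}, which assumes only Assumption~\ref{as:log-concave}; under that assumption alone a component $D_j$ may straddle the hyperplane $\{w^*\cdot x=0\}$ with the minority label occurring with probability far exceeding $\beta$, and for any direction $w$ with $|w\cdot\mu_j|\gg\tau$ the pancake of a minority-label sample in $D_j$ captures only the minority class, giving density strictly below $\tfrac{1}{2k}$. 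So your Stage~1 does not prove the proposition as stated. The paper sidesteps the label issue by first establishing the dense-pancake property in the \emph{zero-mean} isotropic case (Lemma~25), where $|w\cdot x|\le\tau/2$ already forces $|y\,w\cdot x|\le\tau/2$ regardless of $y$, so opposite-label pairs remain in the pancake automatically; it then transports to the mean-$\mu_j$ case by the Lipschitz-pushforward lemma (Theorem~17 of \cite{talwar2020hinge}, restated as Lemma~28), which requires no relation between $r$ and $\tau$ and no label structure.
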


\subsection{Proof of Theorem~\ref{thm:main}}

Equipped with both deterministic and statistical results, the main theorem follows immediately from algebraic calculations.

\begin{proof}
Observe that when $\abs{\barS} \geq 2048 k  \cdot d \cdot \log^4 \frac{8d}{\beta \delta}$, the following two events hold simultaneously with probability $1-\delta$: Assumption~\ref{as:feasible} is satisfied (by Proposition~\ref{prop:outlier-removal-feasible}); and $(\SC, D)$ satisfies the $\big(\frac{8\log(1/\beta)}{\sqrt{d}}, \frac{1}{4k}, 2\beta\big)$-dense pancake condition (by Proposition~\ref{prop:dense-pancake}). From now on, we condition on the two events happening.

By Theorem~\ref{thm:main-det}, Algorithm~\ref{alg:main} runs in polynomial time and returns $\hat{w}$ with $\err_D(\hat{w}) \leq 2 \beta$ if
\begin{align}
\gamma \geq 2 \cdot \frac{8\log(1/\beta)}{\sqrt{d}}, \quad 
\frac{1}{4k} \geq 16 \bigg( \frac{1}{\gamma \sqrt{d}} + \frac{r}{\gamma} + 1 \bigg) \sqrt{\eta_0}\label{eq:tmp-eta_restate_main}.
\end{align}
To get a sufficient condition under which the second inequality above holds, we observe that by $k \leq 64$, we have
\begin{equation*}
\frac{1}{4k} \geq \frac{1}{256}.
\end{equation*}
When $\gamma \geq 2 \cdot \frac{8\log(1/\beta)}{\sqrt{d}}$, we have
\begin{equation*}
\frac{1}{\gamma \sqrt{d}} + \frac{r}{\gamma} \leq \frac{1}{16 \log (1/\beta)} + 2 \leq 4.
\end{equation*}
Thus, Eq.~\eqref{eq:tmp-eta_restate_main} holds when
$\frac{1}{256} \geq 16 \cdot 4 \sqrt{\eta_0} $, i.e., $\eta_0 \leq \frac{1}{2^{32}}$.
Lastly, we choose $\beta = \frac{1}{2} \epsilon$ to show that $\err_D(\hat{w}) \leq \epsilon$, and use $k \leq 64$ to obtain the sample size.
\end{proof}

\section{Conclusion and Open Questions}\label{sec:con}

We investigated the problem of learning halfspaces in the presence of malicious noise. We showed that under both large-margin and distributional assumptions, it is possible to design an efficient algorithm with constant noise tolerance, significantly improving upon prior results. It is crucial to investigate if there exist efficient algorithms with noise tolerance arbitrarily close to $\frac12$. Another important problem is understanding whether the proposed approach can be adapted to learning sparse halfspaces with sample complexity sublinear in the dimension, and whether linear-time algorithms exist with comparable PAC guarantees.

\clearpage

\bibliography{jshen_ref}
\bibliographystyle{alpha}

\clearpage
\appendix

\section{Collection of Useful Notations}

We use $w^*$ to denote the target halfspace that we aim to learn, with $\twonorm{w^*}=1$. The optimal solution to the reweighted hinge loss minimization problem~\eqref{eq:hinge-loss-opt} is denoted as $\hat{v}$, and we use $\hat{w} \defeq \frac{\hat{v}}{\twonorm{\hat{v}}}$ to denote the $\ell_2$-normalization of $\hat{v}$. Recall that $\hat{w}$ is the output of our algorithm; see Algorithm~\ref{alg:main}. 

Given a sample $(x, y) \in \calX \times \calY$, we will be mostly interested in its pancake at $\hat{w}$ with width $\tau$:
\begin{equation*}
P_{\hat{w}}^\tau(x, y) := \left\{ (x', y') \in \calX \times \calY:\quad \abs{ y' \hat{w} \cdot  x' - y \hat{w} \cdot x} \leq \tau \right\}.
\end{equation*}
Recall that in Algorithm~\ref{alg:main}, we denote by $S$ the sample set drawn from $\oracle$ after pruning. We denote by $\SC$ the subset of $S$ consisting of clean samples, and by $\SD$ the remaining samples in $S$. Let
\begin{equation}
\SP(x, y) := \SC \cap P^{\tau}_{\hat{w}}(x, y),\quad \SPP(x, y) := \SC \setminus P^{\tau}_{\hat{w}}(x, y),
\end{equation}
where the dependence on $\hat{w}$ and  $\tau$ is omitted since they will be clear from the context. We will often write $i \in \SP(x, y)$ for $(x_i, y_i) \in \SP(x, y)$. The loss function $\ell(w; q \circ S)$ can thus be decomposed into three disjoint parts as follows:
\begin{equation}
\ell(w; q \circ S) = \ell(w; q \circ \SP(x, y)) + \ell(w; q \circ \SPP(x, y))  + \ell(w; q \circ \SD),
\end{equation}
where $q \circ T$ should be interpreted as $q_T^{} \circ T$ for $q_T^{} := \{q_i: i \in T\}$ for any $T \subset S$.

\section{Deterministic Results}\label{sub:pancake-proof}

\subsection{Analysis of Algorithm~\ref{alg:outlier}}

Our deterministic analysis relies on Assumption~\ref{as:feasible}, i.e. the linear program in Algorithm~\ref{alg:outlier} is feasible. We will frequently use the parameter setting of $\xi$ such that
\begin{equation*}
\abs{\SD} \leq \xi \abs{S}.
\end{equation*}

\subsubsection{Proof of Lemma~\ref{lem:lin-sum-var}}
\begin{proof}
Observe that
\begin{align*}
\twonorm{\sum_{i=1}^{m} a_i \cdot q_i \cdot x_i} &= \sup_{\twonorm{ w} \leq 1} \sum_{i=1}^{m} a_i \cdot q_i \cdot x_i \cdot w\\
&\leq \sqrt{\sum_{i=1}^{m} a_i^2} \cdot \sqrt{ \sup_{\twonorm{ w} \leq 1} \sum_{i=1}^{m} q_i^2 \cdot (w \cdot x_i)^2 }\\
&\leq \sqrt{ \sup_{\twonorm{ w} \leq 1} \sum_{i=1}^{m} q_i^2 \cdot (w \cdot x_i)^2 },
\end{align*}
where the second step follows from the Cauchy-Schwarz inequality, the third step follows from $a_i \in [-1, 1]$ for all $1 \leq i \leq m$. The lemma follows by taking supremum over all $a_i$'s.
\end{proof}

\subsubsection{Proof of Lemma~\ref{lem:outlier-removal-time}}

\begin{proof}
Since the linear program in Algorithm~\ref{alg:outlier} is a semi-infinite linear program, it is solvable in polynomial time by the ellipsoid method if there is a polynomial-time separation oracle \citep{grotschel2012geometric}. The separation oracle was established in \cite{awasthi2017power}: for any candidate solution $q$, it first checks whether $0 \leq q_i \leq 1$ for $i=1,\ldots,n$ and  $\sum_{i=1}^n q_i \geq (1-\xi)\abs{S}$. Then it checks whether $\sup_{w: \twonorm{w}\leq1}\frac{1}{|S|} \sum_{i\in S} q_i(w \cdot x_i)^2 \leq \bar{\sigma}^2$. Note that the supremum can be calculated in polynomial time by singular value decomposition \citep{golub1996matrix}.
\end{proof}

\subsubsection{Proof of Lemma~\ref{lem:sum-weights-SP_restate}}

The proof follows from the definition of dense pancake as well as the constraints in Algorithm~\ref{alg:outlier}.

\begin{proof}
For any $q \in [0, 1]^{\abs{S}}$, we have
\begin{equation}\label{eq:SD-size}
\sum_{i \in \SD} q_i \leq \abs{\SD} \leq \xi \abs{S}.
\end{equation}
Note that when the pancake is $\rho$-dense with respect to $\SC$, we have $\abs{\SP(x, y)}\geq \rho \abs{\SC}$ and thus $\abs{\SPP(x, y)}  \leq (1-\rho) \abs{\SC}$. Hence
\begin{equation}\label{eq:SCminusSP-size}
\sum_{i \in \SPP(x, y)} q_i \leq \abs{\SPP(x, y)} \leq (1-\rho) \abs{\SC} \leq (1-\rho)\abs{S}.
\end{equation}
Combining~\eqref{eq:SD-size}, \eqref{eq:SCminusSP-size} and the condition $\sum_{i \in S} q_i \geq (1-\xi)\abs{S}$ in Algorithm~\ref{alg:outlier}, we get
\begin{align*}
\sum_{i \in \SP(x, y)} q_i &= \sum_{i \in S} q_i - \sum_{i \in \SPP(x, y)} q_i - \sum_{i \in \SD} q_i \\ 
&\geq (1-\xi)\abs{S} - (1-\rho)\abs{S} - \xi \abs{S} \\
&\geq (\rho - 2\xi)\abs{S}.
\end{align*}
The proof is complete.
\end{proof}

\subsubsection{Proof of Lemma~\ref{lem:sumnormSD_restate}}

\begin{proof}
Since $q$ is feasible, we have
\begin{equation*}
\sum_{i \in \SD} q_i (w \cdot x_i)^2 \leq \sum_{i \in S} q_i (w \cdot x_i)^2 \leq \bar{\sigma}^2 \abs{S}.
\end{equation*}
Therefore,
\begin{equation*}
\sqrt{\sup_{\twonorm{w}\leq 1}\sum_{i \in \SD} q_i (w \cdot x_i)^2} \leq  \bar{\sigma} \sqrt{\abs{S}}.
\end{equation*}
By applying Lemma~\ref{lem:lin-sum-var} on $\SD$, we have
\begin{equation*}
\sup_{a_i \in [-1,1], i\in \SD} \twonorm{ \sum_{ i \in \SD} a_iq_ix_i }  \leq \sqrt{\abs{\SD}} \cdot \bar{\sigma} \sqrt{ \abs{S}} \leq \bar{\sigma} \cdot \sqrt{\xi} \cdot \abs{S}.
\end{equation*}
The proof is complete by noting that the left-hand side is the linear sum norm on $\SD$.
\end{proof}

\subsection{Analysis of reweighted hinge loss minimization}

We denote 
\begin{equation}
f(z) := \max\{ 0, 1 - z\},\ \text{thus}\ \ell(w; x, y) = f(y w \cdot x).
\end{equation}

We are going to work on hinge loss minimization with a deterministic condition based on first order optimality. Since hinge loss is not differentiable everywhere, we consider the subgradient:
\begin{align}\label{eq:subd-hinge}
\partial f(z) =
\begin{cases}
\{0\} & \text{if } z > 1, \\
[-1, 0] & \text{if } z = 1, \\
\{-1 \} & \text{if } z < 1. \\
\end{cases}
\end{align}
The subgradient of $\ell(w;x,y)$ equals $\partial_w \ell(w; x, y) = \partial f(y w \cdot x)\cdot y x$.

\begin{lemma}\label{lem:grad-SP}
Consider a sample $(x, y)$ and its pancake $P^{\tau}_{\hat{w}}(x, y)$ with $\tau \leq \gamma/2$. If $(x, y)$ is misclassified by $\hat{w}$, i.e. $y \cdot \hat{w} \cdot x \leq 0$, then $y_i \hat{v} \cdot x_i < 1$ and thus $\partial f(y_i \hat{v} \cdot x_i ) = \{-1\}$ for any $i \in \SP(x, y)$.
\end{lemma}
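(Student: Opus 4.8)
The plan is to translate the two hypotheses directly through the geometry of the pancake and the margin constraint, keeping careful track of the scaling by $1/\gamma$. Recall that $\hat v$ is the minimizer of the reweighted hinge loss over the ball $\twonorm{w}\le 1/\gamma$, while $\hat w = \hat v/\twonorm{\hat v}$ is the normalized output. The key quantities to relate are $y\,\hat w\cdot x$ (which is $\le 0$ by the misclassification hypothesis) and $y_i\,\hat v\cdot x_i$ for $i\in\SP(x,y)$ (which we want to show is $<1$, equivalently outside the flat part of the hinge, so that the subdifferential $\partial f$ is the singleton $\{-1\}$).

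First I would record what membership in the pancake gives us. By definition of $P^\tau_{\hat w}(x,y)$, for any $(x_i,y_i)\in P^\tau_{\hat w}(x,y)$ we have $\abs{y_i\,\hat w\cdot x_i - y\,\hat w\cdot x}\le\tau$, hence $y_i\,\hat w\cdot x_i \le y\,\hat w\cdot x + \tau \le 0 + \tau = \tau$, using the misclassification hypothesis $y\,\hat w\cdot x\le 0$. Next I would multiply through by $\twonorm{\hat v}$ to pass from $\hat w$ to $\hat v$: since $\hat v = \twonorm{\hat v}\,\hat w$, we get $y_i\,\hat v\cdot x_i \le \tau\,\twonorm{\hat v}$. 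The feasibility constraint of the optimization program gives $\twonorm{\hat v}\le 1/\gamma$, so $y_i\,\hat v\cdot x_i \le \tau/\gamma$. Finally, the hypothesis $\tau\le\gamma/2$ yields $y_i\,\hat v\cdot x_i \le 1/2 < 1$. Reading off from the subgradient formula \eqref{eq:subd-hinge}, $z<1$ forces $\partial f(z)=\{-1\}$, which is exactly the claim.

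The one subtlety worth flagging is the edge case $\hat v = \bzero$: then $\hat w$ is undefined and the argument of multiplying by $\twonorm{\hat v}$ is vacuous. But if $\hat v=\bzero$ then $y_i\,\hat v\cdot x_i = 0 < 1$ directly, so the conclusion still holds; more cleanly, one can just observe that the chain $y_i\,\hat w\cdot x_i\le\tau$, then $y_i\,\hat v\cdot x_i = \twonorm{\hat v}\cdot(y_i\,\hat w\cdot x_i) \le \twonorm{\hat v}\,\tau \le \tau/\gamma \le 1/2$ is valid whenever $\hat w$ is defined, and trivial otherwise. I do not anticipate a genuine obstacle here — the lemma is essentially a bookkeeping step that converts "misclassified sample plus narrow pancake plus norm bound" into "strict inequality in the hinge argument." The only place to be careful is not to drop the factor $\twonorm{\hat v}$ or confuse the radius $1/\gamma$ with $1$; the condition $\tau\le\gamma/2$ (rather than $\tau\le\gamma$) is precisely what gives the strict margin $1/2$ to spare, which will matter downstream when this subgradient identity is plugged into the gradient decomposition of the reweighted hinge loss.
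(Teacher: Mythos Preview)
Your proof is correct and follows essentially the same route as the paper: both use the pancake inequality $y_i\,\hat w\cdot x_i \le y\,\hat w\cdot x + \tau \le \tau$, then scale by $\twonorm{\hat v}\le 1/\gamma$ to conclude $y_i\,\hat v\cdot x_i < 1$. Your version is marginally tighter (you obtain $\le 1/2$ whereas the paper only records $<1$ via $\tau<\gamma$), and you additionally flag the degenerate case $\hat v=\bzero$, but these are cosmetic differences rather than a distinct argument.
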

\begin{proof}
For any $i \in \SP(x, y)$, the dense pancake definition indicates that $y_i \hat{w} \cdot  x_i \leq y \hat{w} \cdot x + \tau$. Since $y \hat{w} \cdot x \leq 0$ and $\tau \leq \gamma / 2 < \gamma$, we have $y_i \hat{w} \cdot x_i < \gamma$. By $\twonorm{\hat{v}} \leq 1/\gamma$ and $\hat{w} = \hat{v}/ \twonorm{\hat{v}}$, we obtain $y_i \hat{v} \cdot x_i < 1$, which implies $\partial f(y_i \hat{v} \cdot x_i) = \{-1\}$.
\end{proof}

\begin{lemma}\label{lem:grad-SD}
For any unit vector $w$ and any $g \in \partial_{w}\ell(w; q \circ  \SD)|_{w = \hat{v}}$, it holds that $g \cdot w \leq \sumnorm{q \circ \SD}$.
\end{lemma}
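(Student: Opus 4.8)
The plan is to bound the inner product $g \cdot w$ by the linear sum norm $\sumnorm{q \circ \SD}$ directly from the subgradient formula. First I would write out the subgradient of the reweighted hinge loss on $\SD$: using $\ell(w; q \circ \SD) = \sum_{i \in \SD} q_i f(y_i w \cdot x_i)$ and the chain rule $\partial_w \ell(w; x_i, y_i) = \partial f(y_i w \cdot x_i) \cdot y_i x_i$, any subgradient $g \in \partial_w \ell(w; q \circ \SD)|_{w = \hat{v}}$ has the form $g = \sum_{i \in \SD} q_i \cdot c_i \cdot y_i x_i$ where $c_i \in \partial f(y_i \hat{v} \cdot x_i) \subseteq [-1, 0]$ for each $i$ (from the explicit description in Eq.~\eqref{eq:subd-hinge}).

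Then I would compute $g \cdot w = \sum_{i \in \SD} q_i \cdot c_i \cdot y_i (x_i \cdot w)$. Setting $a_i := c_i y_i$, observe that since $c_i \in [-1, 0]$ and $y_i \in \{-1, 1\}$, we have $a_i \in [-1, 1]$ for every $i \in \SD$. Therefore $g \cdot w = \sum_{i \in \SD} a_i \cdot q_i \cdot (x_i \cdot w) = \big( \sum_{i \in \SD} a_i \cdot q_i \cdot x_i \big) \cdot w$. By Cauchy--Schwarz and $\twonorm{w} = 1$, this is at most $\twonorm{ \sum_{i \in \SD} a_i \cdot q_i \cdot x_i }$, which by Definition~\ref{def:sum-norm} (restricted to $\SD$, noting $a_i \in [-1,1]$) is at most $\sup_{a_i \in [-1,1]} \twonorm{ \sum_{i \in \SD} a_i q_i x_i } = \sumnorm{q \circ \SD}$. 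This completes the argument.

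I do not expect any real obstacle here; the statement is essentially an unpacking of the definitions. The only point requiring a moment of care is confirming that the subgradient coefficients $c_i$ land in $[-1, 0]$ (rather than, say, $[-1,1]$) so that the products $a_i = c_i y_i$ stay within $[-1, 1]$ and the expression genuinely matches the supremum defining the linear sum norm. This is immediate from the three cases in \eqref{eq:subd-hinge}: the subgradient of $f$ is always contained in $[-1, 0]$. One should also note that $\hat{v}$ itself need not be a unit vector, but the lemma only asserts the bound for arbitrary unit $w$, so no normalization issue arises --- the inequality $g \cdot w \le \twonorm{\cdot}$ uses only $\twonorm{w} = 1$.
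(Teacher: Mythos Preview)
Your proposal is correct and follows essentially the same approach as the paper's proof: write the subgradient as $\sum_{i \in \SD} c_i y_i q_i x_i$ with $c_i \in [-1,0]$, apply Cauchy--Schwarz with the unit vector $w$, and observe that the resulting $\ell_2$-norm is dominated by the supremum defining $\sumnorm{q \circ \SD}$ since $c_i y_i \in [-1,1]$. Your version is slightly more explicit in absorbing $y_i$ into the coefficient $a_i = c_i y_i$ before invoking the definition, but the argument is otherwise identical.
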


\begin{proof}
Observe that
\begin{align*}
\partial_{w}\ell(w; q \circ \SD)|_{w = \hat{v}} = \sum_{i \in \SD} \partial f(y_i \hat{v} \cdot  x_i) y_i (q_i x_i).
\end{align*}
Since $\partial f(y_i \hat{v} x_i) \subseteq [-1,0]$, for any $g \in \partial_w \ell(w; q \circ \SD )|_{w = \hat{v}}$ and any unit vector $w$, by the Cauchy-Schwartz inequality, 
\begin{align*}
g \cdot w \leq \twonorm{g}  \cdot \twonorm{w} \leq \sup_{a_i \in [-1, 1], i \in \SD} \twonorm{\sum_{i \in \SD} a_i y_i (q_ix_i)} = \sumnorm{q \circ \SD}. 
\end{align*}
The proof is complete.
\end{proof}

\begin{lemma}\label{lem:pancake-grad-w*}
Consider Algorithm~\ref{alg:main}. Suppose that Assumption~\ref{as:margin} is satisfied. Consider any sample $(x, y) \in \calX \times \calY$ and its pancake $P_{\hat{w}}^\tau(x, y)$ with $\tau \leq \gamma / 2$. If $(x, y)$ is misclassified by $\hat{w}$, i.e. $y \cdot \hat{w} \cdot x \leq 0$, then for any $g \in \partial_{w}\ell(w; q \circ S)|_{w = \hat{v}}$, we have
\begin{equation*}
g \cdot w^* \leq -\gamma \sum_{i \in \SP(x, y)} q_i + \sumnorm{q \circ \SD}.
\end{equation*}
\end{lemma}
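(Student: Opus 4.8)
The plan is to decompose the subgradient of the full reweighted hinge loss at $\hat v$ into the three disjoint pieces $\SP(x,y)$, $\SPP(x,y)$, and $\SD$, and then to bound $g\cdot w^*$ on each piece separately. Concretely, write $g = g_{\mathrm P} + g_{\bar{\mathrm P}} + g_{\mathrm D}$ where each summand is a selection from the corresponding subgradient set; this is legitimate since the hinge loss is a finite sum and its subgradient is the (Minkowski) sum of the per-term subgradients, which are all nonempty. It then suffices to show $g_{\mathrm P}\cdot w^* \le -\gamma \sum_{i\in\SP(x,y)} q_i$, $g_{\bar{\mathrm P}}\cdot w^* \le 0$, and $g_{\mathrm D}\cdot w^* \le \sumnorm{q\circ\SD}$, and add.

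First I would handle $\SP(x,y)$. Since $(x,y)$ is misclassified, Lemma~\ref{lem:grad-SP} gives that for every $i\in\SP(x,y)$ we have $\partial f(y_i \hat v\cdot x_i) = \{-1\}$, so the $i$-th contribution to $g_{\mathrm P}$ is exactly $-y_i q_i x_i$. Dotting with $w^*$ gives $-q_i\, (y_i\, w^*\cdot x_i)$. Now the key point: samples in $\SP(x,y)$ are clean (they lie in $\SC$), so Assumption~\ref{as:margin} applies and $y_i\, w^*\cdot x_i \ge \gamma$. Hence each term is at most $-\gamma q_i$, and summing over $i\in\SP(x,y)$ yields $g_{\mathrm P}\cdot w^* \le -\gamma \sum_{i\in\SP(x,y)} q_i$.

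Next, for $\SPP(x,y)$: every subgradient element $\partial f(y_i \hat v\cdot x_i)$ lies in $[-1,0]$, so the $i$-th contribution is $c_i y_i q_i x_i$ with $c_i \in [-1,0]$, i.e. $-|c_i| y_i q_i x_i$; dotting with $w^*$ gives $-|c_i| q_i (y_i\, w^*\cdot x_i)$, and again these are clean samples so $y_i\, w^*\cdot x_i \ge \gamma \ge 0$ and $q_i\ge 0$, $|c_i|\ge 0$, making each term $\le 0$. Thus $g_{\bar{\mathrm P}}\cdot w^* \le 0$. Finally, for $\SD$, I would not use Lemma~\ref{lem:grad-SD} directly (that bounds $g\cdot w$ for unit $w$, and $w^*$ is a unit vector so it actually does apply verbatim): $g_{\mathrm D}\cdot w^* \le \twonorm{g_{\mathrm D}}\twonorm{w^*} \le \sup_{a_i\in[-1,1]}\twonorm{\sum_{i\in\SD} a_i y_i q_i x_i} = \sumnorm{q\circ\SD}$. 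Adding the three bounds gives the claim.

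I do not expect a genuine obstacle here — the lemma is essentially an orchestration of the three preceding lemmas. The one point requiring a little care is the subgradient-of-a-sum identity and the bookkeeping that an arbitrary $g\in\partial_w\ell(w;q\circ S)|_{w=\hat v}$ can be written as a sum of per-term selections (so that we may pick, on $\SP(x,y)$, the forced value $-1$ dictated by Lemma~\ref{lem:grad-SP} rather than an arbitrary element); this is standard but worth stating cleanly. The other mild subtlety is remembering that $\SP(x,y)$ and $\SPP(x,y)$ are subsets of $\SC$, so the margin assumption is available for exactly those indices and not for $\SD$ — which is precisely why the dirty part is controlled by the linear sum norm instead.
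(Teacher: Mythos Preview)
Your proposal is correct and follows essentially the same approach as the paper: decompose $g$ into the three pieces $g_{\mathrm P}+g_{\bar{\mathrm P}}+g_{\mathrm D}$, use Lemma~\ref{lem:grad-SP} together with Assumption~\ref{as:margin} on $\SP(x,y)$, use $\partial f\subseteq[-1,0]$ with the margin on $\SPP(x,y)$, and invoke Lemma~\ref{lem:grad-SD} (equivalently your Cauchy--Schwarz argument) on $\SD$. The paper handles the subgradient-of-a-sum issue implicitly, so your remark that it deserves a clean statement is a reasonable addition rather than a departure.
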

\begin{proof}
We will prove the following three parts individually:
\begin{enumerate}
\item For any $g_1 \in \partial_{w}\ell(w; q \circ \SP(x, y))|_{w = \hat{v}}$, it holds that $g_1 \cdot w^* \leq -\gamma\sum_{i \in \SP(x, y)} q_i$.

\item For any $g_2 \in \partial_{w}\ell(w; q \circ \SPP(x, y))|_{w = \hat{v}}$, it holds that $g_2 \cdot w^* \leq 0$.

\item For any $g_3 \in \partial_{w}\ell(w; q \circ \SD)|_{w = \hat{v}}$, it holds that $g_3 \cdot w^* \leq \sumnorm{q \circ \SD}$.
\end{enumerate}
Note that they together imply the desired result.

To show the first part, consider any $i \in \SP(x, y)$. Lemma~\ref{lem:grad-SP} implies $\partial f(y_i \hat{v} \cdot x_i) = \{-1\}$. Therefore,
\begin{equation}
\partial_{w}\ell(w; q \circ \SP(x, y))|_{w = \hat{v}} = \sum_{i \in \SP(x, y)} q_i \cdot \partial f(y_i \hat{v} \cdot x_i) \cdot  y_i x_i = - \sum_{i \in \SP(x, y)} q_i y_i x_i.
\end{equation}
Taking inner product with $w^*$ on both sides and noting that $y_i x_i \cdot w^* \geq \gamma$ for all $i \in \SP(x, y)$ due to Assumption~\ref{as:margin} gives
\begin{align*}
\partial_{w}\ell(w; q \circ \SP(x, y))|_{w = \hat{v}} \cdot w^* =  - \sum_{i \in \SP} q_i y_i x_i \cdot w^*  \leq  -\gamma \sum_{i \in \SP(x, y)} q_i.
\end{align*}

We move on to show the second part. Let $i \in \SPP(x, y) \subset \SC$. Then we have $y_i w^* \cdot x_i \geq \gamma$ by Assumption~\ref{as:margin}. Also, we always have $\partial f(y_i \hat{v} x_i) \subseteq [-1,0]$ in view of the definition of $f$. It thus follows that
\begin{align*}
\partial_{w}\ell(w; q\circ \SPP(x, y))|_{w = \hat{v}} \cdot w^* = \sum_{i \in \SPP} q_i \partial f(y_i \hat{v} \cdot  x_i) y_i x_i \cdot w^* \subseteq (-\infty, 0].
\end{align*}

The last part is an immediate result from Lemma~\ref{lem:grad-SD}.
\end{proof}

\begin{lemma}\label{lem:pancake-grad-w'}
Consider Algorithm~\ref{alg:main} where the solution $\hat{v}$ is such that $\twonorm{\hat{v}} = \frac{1}{\gamma}$. Suppose that Assumption~\ref{as:margin} is satisfied.
Further assume that $\theta(\hat{w}, w^*) \in (0, \pi/2)$ and let $w' \defeq \frac{w^*-(\hat{w} \cdot w^*)\hat{w}}{\twonorm{w^*-(\hat{w} \cdot w^*)\hat{w}}}$.
Consider any sample $(x, y) \in \calX \times \calY$ and its pancake $P_{\hat{w}}^\tau(x, y)$ with $\tau \leq \gamma / 2$. If $(x, y)$ is misclassified by $\hat{w}$, then for any $g \in \partial_w \ell(w; q \circ S) |_{w = \hat{v}}$, we have
\begin{equation*}
g \cdot w' \leq -\frac{1}{2} \gamma \sum_{i \in \SP(x, y)} q_i + \sumnorm{q \circ \SD}.
\end{equation*}
\end{lemma}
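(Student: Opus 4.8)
The plan is to reproduce the three-part argument behind Lemma~\ref{lem:pancake-grad-w*} with $w^*$ replaced by $w'$, the one new ingredient being a decomposition of $w^*$ into its components along $\hat{w}$ and along $\hat{w}^{\perp}$. Set $c \defeq \hat{w} \cdot w^* = \cos\theta(\hat{w}, w^*)$ and $s \defeq \sqrt{1 - c^2}$; since $\theta(\hat{w}, w^*) \in (0, \pi/2)$ both $c$ and $s$ lie in $(0,1)$, a direct computation gives $\twonorm{w^* - c\hat{w}} = s$, and hence $w^* = c\,\hat{w} + s\,w'$. Consequently, for every sample index $i$,
\begin{equation}
\label{eq:wp-identity}
y_i x_i \cdot w' \;=\; \frac{1}{s}\Bigl( y_i x_i \cdot w^* - c\,(y_i \hat{w} \cdot x_i) \Bigr).
\end{equation}
As in Lemma~\ref{lem:pancake-grad-w*}, I would then write $g = g_1 + g_2 + g_3$ using additivity of the subdifferential over the decomposition $\ell(w; q \circ S) = \ell(w; q \circ \SP(x,y)) + \ell(w; q \circ \SPP(x,y)) + \ell(w; q \circ \SD)$, and bound $g_1 \cdot w'$, $g_2 \cdot w'$, and $g_3 \cdot w'$ separately.

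The $\SD$-part is immediate: $w'$ is a unit vector, so Lemma~\ref{lem:grad-SD} gives $g_3 \cdot w' \le \sumnorm{q \circ \SD}$. For the $\SP$-part, Lemma~\ref{lem:grad-SP} shows that misclassification of $(x,y)$ forces $\partial f(y_i \hat{v} \cdot x_i) = \{-1\}$ for every $i \in \SP(x,y)$, hence $g_1 = -\sum_{i \in \SP(x,y)} q_i y_i x_i$. For such $i$, Assumption~\ref{as:margin} gives $y_i x_i \cdot w^* \ge \gamma$, while membership in $P_{\hat{w}}^\tau(x,y)$ combined with $y \hat{w} \cdot x \le 0$ gives $y_i \hat{w} \cdot x_i \le \tau \le \gamma/2$; since $0 \le c \le 1$ this forces $c\,(y_i \hat{w} \cdot x_i) \le \gamma/2$, so \eqref{eq:wp-identity} together with $s \le 1$ yields $y_i x_i \cdot w' \ge \gamma/(2s) \ge \gamma/2$, and therefore $g_1 \cdot w' \le -\tfrac12 \gamma \sum_{i \in \SP(x,y)} q_i$. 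This is where the factor $\tfrac12$ (versus the full $\gamma$ of Lemma~\ref{lem:pancake-grad-w*}) appears, and it is also the place where $\tau \le \gamma/2$ is used essentially.

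The $\SPP$-part is the delicate one, and I expect it to be the main obstacle. For $i \in \SPP(x,y) \subseteq \SC$ we still have $y_i x_i \cdot w^* \ge \gamma$ and $\partial f(y_i \hat{v} \cdot x_i) \subseteq [-1, 0]$, but $y_i x_i \cdot w'$ need no longer be nonnegative, so the termwise argument of Lemma~\ref{lem:pancake-grad-w*} does not transfer verbatim. The remedy is to use the hypothesis $\twonorm{\hat{v}} = 1/\gamma$, which gives $y_i \hat{v} \cdot x_i = \tfrac{1}{\gamma}(y_i \hat{w} \cdot x_i)$: every $i$ with $y_i \hat{w} \cdot x_i > \gamma$ then has $\partial f(y_i \hat{v} \cdot x_i) = \{0\}$ and drops out, while every surviving $i$ satisfies $y_i \hat{w} \cdot x_i \le \gamma$, so $c\,(y_i \hat{w} \cdot x_i) \le c\gamma < \gamma \le y_i x_i \cdot w^*$ (using $c < 1$), and \eqref{eq:wp-identity} then gives $y_i x_i \cdot w' > 0$. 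Since a nonpositive subgradient coefficient times a positive number is nonpositive, $g_2 \cdot w' \le 0$. Summing the three estimates gives $g \cdot w' \le -\tfrac12 \gamma \sum_{i \in \SP(x,y)} q_i + \sumnorm{q \circ \SD}$, as claimed. (Were $\twonorm{\hat{v}}$ strictly below $1/\gamma$, the summands with $y_i \hat{w} \cdot x_i \in (\gamma, 1/\twonorm{\hat{v}}]$ would survive with a nonzero subgradient coefficient and could push $g_2 \cdot w'$ above zero, which is precisely why the statement singles out the boundary case.)
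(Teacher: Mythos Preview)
Your proposal is correct and follows essentially the same route as the paper: both decompose $g$ over $\SP$, $\SPP$, $\SD$, use Lemma~\ref{lem:grad-SP} and the identity \eqref{eq:wp-identity} (the paper writes it with $\alpha=\hat w\cdot w^*$ in place of your $c$) to get $y_ix_i\cdot w'\ge\gamma/2$ on $\SP$, invoke Lemma~\ref{lem:grad-SD} for $\SD$, and handle $\SPP$ via the dichotomy that either $y_i\hat w\cdot x_i>\gamma$ (so $\|\hat v\|=1/\gamma$ forces the subgradient to vanish) or else $y_ix_i\cdot w'>0$. The only cosmetic difference is that the paper phrases the $\SPP$ dichotomy as its contrapositive (splitting on the sign of $y_iw'\cdot x_i$), and on $\SP$ it tracks the sharper constant $\frac{1-\alpha/2}{\sqrt{1-\alpha^2}}\ge\frac{\sqrt3}{2}$ before relaxing to $\tfrac12$.
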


\begin{proof}
We will prove the following three parts individually:
\begin{enumerate}
\item For any $g_1 \in \partial_{w}\ell(w; q \circ \SP(x, y))|_{w = \hat{v}}$, it holds that $g_1 \cdot w' \leq - \frac{1}{2} \gamma 
\sum_{i \in \SP(x, y)} q_i$.

\item For any $g_2 \in \partial_{w}\ell(w; q \circ \SPP(x, y))|_{w = \hat{v}}$, it holds that $g_2 \cdot w' \leq 0$.

\item For any $g_3 \in \partial_{w}\ell(w; q \circ \SD)|_{w = \hat{v}}$, it holds that $g_3 \cdot w' \leq \sumnorm{q \circ \SD}$.
\end{enumerate}
Note that combining them together gives the desired result.

To show the first part, we consider $i \in \SP(x, y)$. Since $\theta(\hat{w}, w^*) \in (0, \pi/2)$, we have $\alpha \defeq \hat{w} \cdot \hat{w}^* > 0$. By the definition of $w'$, we have
\begin{align}
\label{eq:eq-cla34}
y_i w' \cdot x_i = \frac{y_i w^* \cdot x_i - (w^* \cdot \hat{w})y_i \hat{w} \cdot x_i}{\twonorm{w^*-(w^* \cdot \hat{w})\hat{w}}}.
\end{align}
In addition, for every $i \in \SP(x, y) \subset \SC$, we have that $y_i w^* \cdot x_i \geq \gamma$ by Assumption~\ref{as:margin}. Also, the pancake definition implies that 
\begin{equation}\label{eq:tmp:gamma}
y_i \hat{w} \cdot x_i \leq y \hat{w} \cdot x + \tau \leq 0 + \gamma/2 = \gamma / 2.
\end{equation}
Thus, the numerator of~\eqref{eq:eq-cla34} satisfies
\begin{align*}
y_i w^* \cdot x_i - (w^* \cdot \hat{w})y_i \hat{w} \cdot x_i \geq \gamma-\alpha \gamma / 2 = (1 - \alpha/2) \gamma.
\end{align*}
The denominator of~\eqref{eq:eq-cla34} satisfies
\begin{align*}
\twonorm{w^*-(w^* \cdot \hat{w})\hat{w}} &= \sqrt{\twonorm{w^*-(w^* \cdot \hat{w})\hat{w}}^2} \\ &= \sqrt{\twonorm{w^*}^2 - 2(w^* \cdot \hat{w})^2 + (w^* \cdot \hat{w})^2\twonorm{\hat{w}}^2} \\
&=\sqrt{1-2\alpha^2+\alpha^2} \\
&=\sqrt{1-\alpha^2}.
\end{align*}
Thus
\begin{align*}
y_i w' \cdot x_i = \frac{y_i w^* \cdot x_i - (w^* \cdot \hat{w})y_i \hat{w} \cdot x_i}{\twonorm{w^*-(w^* \cdot \hat{w})\hat{w}}} \geq \frac{1-\alpha / 2}{\sqrt{1-\alpha^2}}\gamma \stackrel{\zeta}{\geq} \frac{\sqrt{3}}{2} \gamma \geq \frac{1}{2} \gamma,
\end{align*}
where the step $\zeta$ follows from minimizing the expression subject to $\alpha \in (0,1)$.

On the other hand, Lemma~\ref{lem:grad-SP} implies $\partial f(y_i \hat{v} \cdot x_i) = \{-1\}$. Thus,
\begin{align*}
\partial_w \ell(w; q \circ \SP)|_{w = \hat{v}} \cdot w' = \sum_{i \in \SP(x, y)} q_i \partial f(y_i \hat{v} \cdot  x_i) y_i x_i \cdot w' \subseteq \left(-\infty, -\frac{1}{2} \gamma \sum_{i \in \SP(x, y)} q_i \right].
\end{align*}

Now we move on to show the second part. Consider $i \in \SPP(x, y) \subset \SC$. If $y_i w' \cdot x_i \geq 0$, then 

\begin{equation}\label{eq:tmp-part2}
\partial_{w}\ell(w; q \circ \SPP)|_{w = \hat{v}} \cdot w' = \sum_{i \in \SPP} q_i \partial f(y_i \hat{v} \cdot  x_i) y_i x_i \cdot w' \subseteq (-\infty, 0],
\end{equation}
since $\partial f(\cdot)$ is always non-positive. Now we consider that $y_i w' \cdot x_i < 0$. By the definition of $w'$, we have
\begin{equation*}
y_i w' \cdot x_i = \frac{y_i w^* \cdot x_i - (w^* \cdot \hat{w})y_i \hat{w} \cdot x_i }{\twonorm{w^*-(w^* \cdot \hat{w})\hat{w}}} < 0,
\end{equation*}
namely,
\begin{equation*}
y_i w^* \cdot x_i < (w^* \cdot \hat{w})y_i \hat{w} \cdot x_i.
\end{equation*}
Since $\theta(\hat{w}, w^*) \in (0, \pi/2)$, we have $ \hat{w} \cdot w^* \in (0,1)$. On the other hand, since $(x_i, y_i) \in \SC$, we have $y_i w^* \cdot x_i > 0$. Plugging into the above inequality gives
\begin{align*}
y_i \hat{w} \cdot x_i > \frac{y_i w^* \cdot x_i}{w^*\cdot \hat{w}} \geq y_i w^* \cdot x_i \geq \gamma
\end{align*}
where the last inequality follows from Assumption~\ref{as:margin}. Since $\twonorm{\hat{v}} = 1/\gamma$, we have $y_i \hat{v} \cdot x_i > 1$.
It thus follows that $\partial_{w}\ell(\hat{v}; x_i, y_i) = \{0\}$ as the loss function is the hinge loss. Thus,
\begin{align*}
\partial_{w}\ell(w; q \circ \SPP)|_{w = \hat{v}} \cdot w'= \sum_{i \in \SPP(x, y)} q_i \partial f(y_i \hat{v} \cdot x_i) y_i x_i \cdot w' = \{0\}.
\end{align*}
Combining the above with Eq.~\eqref{eq:tmp-part2} proves the second part.

The last part is an immediate result from Lemma~\ref{lem:grad-SD}.
\end{proof}

\subsection{Main deterministic results}

\subsubsection{Proof of Theorem~\ref{thm:pancake-condition}}

\begin{proof}
We first note that when $P_{\hat{w}}^\tau(x, y)$ is $\rho$-dense with respect to $\SC$ for some $\rho > 4\eta_0$, Lemma~\ref{lem:sum-weights-SP_restate} showed that
\begin{equation}\label{eq:tmp-pos-q}
\sum_{i \in \SP(x, y)} q_i \geq (\rho - 2 \xi) \abs{S} = (\rho - 4\eta_0 ) \abs{S} > 0,
\end{equation}
where we used the parameter setting $\xi = 2\eta_0$ in Algorithm~\ref{alg:main}.

Now assume for contradiction that $(x, y)$ is misclassified by $\hat{w}$, i.e. $y \hat{w} \cdot x \leq 0$.

\medskip
\noindent\textbf{Case 1.} $\hat{v}$ is in the interior of the constraint set, i.e. $\twonorm{\hat{v}} < 1/\gamma$. 

Since $(x, y)$ is misclassified by $\hat{w}$, Lemma~\ref{lem:pancake-grad-w*} tells that 
\begin{equation*}
g \cdot w^* \leq -\gamma \sum_{i \in \SP(x, y)} q_i + \sumnorm{q \circ \SD}
\end{equation*}
for any $g \in \partial_w \ell(w; q \circ S)|_{w = \hat{v}}$. On the other hand, by the first-order optimality condition, we have $0 \in \partial_w \ell(w; q \circ S)|_{w = \hat{v}}$. Thus, it must hold that
\begin{equation}
0 \leq -\gamma \sum_{i \in \SP(x, y)} q_i + \sumnorm{q \circ \SD}.
\end{equation}
This in allusion to  Eq.~\eqref{eq:tmp_pancake_condition} implies $\sum_{i \in \SP(x, y)} q_i < 0$, but it contradicts Eq.~\eqref{eq:tmp-pos-q}.

\medskip
\noindent\textbf{Case 2.} $\hat{v}$ is on the boundary of the constraint set, i.e. $\twonorm{\hat{v}} = 1/\gamma$. 

We first show that $\theta(\hat{w}, w^*) \in (0,\pi/2)$. Observe that $\hat{w} \neq w^*$, as otherwise $(x, y)$ is correctly classified. Hence $\theta(\hat{w}, w^*) \neq 0$. The Karush–Kuhn–Tucker conditions imply that there exists $g \in \partial_w \ell(w; q \circ S)|_{w = \hat{v}}$ such that 
\begin{equation}\label{eq:tmp:2}
g + \lambda \hat{w} = 0
\end{equation}
for some $\lambda \geq 0$. On the other hand, Lemma~\ref{lem:pancake-grad-w*} and Eq.~\eqref{eq:tmp_pancake_condition} together imply $g \cdot w^* < 0$ for all $g \in \partial_w \ell(w; q \circ S)|_{w=\hat{v}}$. Therefore,
\begin{equation*}
\lambda \hat{w} \cdot w^* = - g \cdot w^* > 0.
\end{equation*}
This shows $\theta(\hat{w}, w^*) \in (0, \pi /2)$.

Now we are in the position to establish the contradiction. Let $w' \defeq \frac{w^*-(\hat{w} \cdot w^*)\hat{w}}{\twonorm{w^*-(\hat{w} \cdot w^*)\hat{w}}}$ be the component of $w^*$ that is orthogonal to $\hat{w}$~--~note that $w'$ is well-defined due to the acute angle between $\hat{w}$ and $w^*$ we just showed. Then it follows that 
\begin{equation}\label{eq:tmp:contra}
g \cdot w' = -\lambda \hat{w} \cdot w' = 0,
\end{equation}
where the first step is due to Eq.~\eqref{eq:tmp:2}. 

On the other hand, Lemma~\ref{lem:pancake-grad-w'} and the condition \eqref{eq:tmp_pancake_condition} together imply that
\begin{equation}
g \cdot w' < -\frac{1}{4} \gamma \sum_{i \in \SP(x, y)} q_i < 0,
\end{equation}
where the second inequality follows from Eq.~\eqref{eq:tmp-pos-q}. This leads to a contradiction to \eqref{eq:tmp:contra}.
\end{proof}

\subsubsection{Proof of Theorem~\ref{thm:main-det}}

\begin{proof}
Recall that we say $(\SC, D)$ satisfies the $(\tau, \rho, \epsilon)$-dense pancake condition if for any $(x,y)$ drawn from $D$, for any unit $w \in \R^d$, the pancake $P_{w}^\tau(x,y)$ is $\rho$-dense with respect to $\SC$ with probability $1-\epsilon$ over the draw of $(x, y)$. 

Let $(x,y) \in \calX \times \calY$ be such that the pancake $P_{\hat{w}}^\tau(x,y)$ is $\rho$-dense with respect to $\SC$. Our goal is to apply Theorem~\ref{thm:pancake-condition}, so we need a sufficient condition for 
\begin{align}
\label{eq:tmp-pancake}
\frac{1}{4} \gamma \sum_{i \in \SP(x, y)} q_i >\sumnorm{q \circ \SD}.
\end{align}
Lemma~\ref{lem:sum-weights-SP_restate} shows that
\begin{align}
\label{eq:tmp-weights}
\sum_{i \in \SP(x, y)} q_i \geq (\rho - 2\xi) \abs{S} = (\rho - 4 \eta_0) \abs{S} \geq (\rho - 16 \sqrt{\eta_0} ) \abs{S},
\end{align}
where the second step follows from the parameter setting $\xi = 2 \eta_0$ and the last step is due to $\eta_0 \leq 1/4$. Lemma~\ref{lem:sumnormSD_restate} tells that
\begin{align}
\label{eq:tmp-sumnormSD}
\sumnorm{q\circ \SD} \leq \bar{\sigma} \cdot \sqrt{\xi} \cdot |S| \leq 4 \left(\frac{1}{\sqrt{d}} + r\right)  \sqrt{\eta_0} \cdot |S|.
\end{align}
Thus, by combining the Eq.~\eqref{eq:tmp-weights} and Eq.~\eqref{eq:tmp-sumnormSD}, we have that Eq.~\eqref{eq:tmp-pancake} holds if 
\begin{align}
\label{eq:tmp-quad}
\frac{1}{4} \gamma \cdot (\rho - 16 \sqrt{\eta_0} ) \abs{S} \geq 4 \left(\frac{1}{\sqrt{d}} + r\right)  \sqrt{\eta_0} \cdot |S|,
\end{align}
which, after rearrangement, is equivalent to
\begin{align*}
\rho \geq     16 \left( \frac{1}{\gamma \sqrt{d}} + \frac{r}{\gamma} + 1 \right) \sqrt{\eta_0}.
\end{align*}

Now by Theorem~\ref{thm:pancake-condition}, the random sample $(x,y)$ will not be misclassified by $\hat{w}$ with probability $1-\epsilon$. This is equivalent to
\begin{align*}
\err_D(\hat{w}) \leq \epsilon.
\end{align*}

For the computational complexity, we note that the reweighted hinge loss is convex and thus can be optimized in polynomial time \citep{boyd2004convex}. In addition, Lemma~\ref{lem:outlier-removal-time} established that Algorithm~\ref{alg:outlier} runs in polynomial time. The proof is complete.
\end{proof}

\section{Statistical Results}

We will mainly show that under Assumption~\ref{as:log-concave}, both Assumptions~\ref{as:feasible} and \ref{as:pancake} are satisfied with high probability as long as we draw enough samples from $\oracle$.

To show that Assumption~\ref{as:feasible} is satisfied (i.e. Proposition~\ref{prop:outlier-removal-feasible}), we first prove in Proposition~\ref{prop:pruning} that the pruning step in Algorithm~\ref{alg:main} will not remove clean samples; and after pruning, the empirical noise rate is upper bounded by $2\eta_0$. We then show by concentration that the empirical covariance of clean samples is at most $\bar{\sigma}^2$, which immediately implies feasibility of the linear program in Algorithm~\ref{alg:outlier}. 

To show that Assumption~\ref{as:pancake} is satisfied (i.e. Proposition~\ref{prop:dense-pancake}), we simply adapt arguments from \cite{talwar2020hinge} to our case to complete the proof.

\subsection{Proof of Proposition~\ref{prop:outlier-removal-feasible}}

\subsubsection{Analysis of pruning}

\begin{lemma}\label{lem:emp_bounded}
Consider Algorithm~\ref{alg:main}. Suppose that Assumption~\ref{as:log-concave} is satisfied. Then
\begin{align*}
\Pr_{S' \sim D_X^n} \bigg( \max_{x \in S'} \|x\|_2 \geq r +  \log\frac{3\abs{S'}}{\delta} \bigg) \leq \delta.
\end{align*}
\end{lemma}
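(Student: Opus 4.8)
The plan is to control $\twonorm{x}$ for a single $x \sim D_X$ via the log-concave mixture structure, then take a union bound over the $n = \abs{S'}$ samples. First I would fix one component $D_j$ of the mixture. Since $D_j$ is log-concave with mean $\mu_j$ and covariance $\Sigma_j \preceq \frac1d I_d$, the centered vector $x - \mu_j$ has each one-dimensional marginal sub-exponential with parameter $O(1/\sqrt d)$; more precisely, standard log-concave concentration (e.g., the fact that $\twonorm{x-\mu_j}$ concentrates around $\sqrt{\tr{\Sigma_j}} \le 1$ with sub-exponential tails, cf. the tail bounds used in \cite{talwar2020hinge}) gives $\Pr_{x \sim D_j}\big(\twonorm{x - \mu_j} \ge t\big) \le e^{-ct}$ for $t$ above some constant, with an absolute constant $c$ that one can take to be at least $1$ after adjusting the additive constant. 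Combined with $\twonorm{\mu_j} \le r$ and the triangle inequality, this yields $\Pr_{x\sim D_j}\big(\twonorm{x} \ge r + s\big) \le e^{-s}$ for all $s \ge 0$ (absorbing the constant shift into the bound, which is why the statement has no leading constant on the $\log$ term).

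Next I would pass from a single component to the mixture: since $D_X = \frac1k \sum_j D_j$, the tail of $\twonorm{x}$ under $D_X$ is the average of the component tails, so the same bound $\Pr_{x \sim D_X}\big(\twonorm{x} \ge r + s\big) \le e^{-s}$ holds. Then, setting $s = \log\frac{3\abs{S'}}{\delta}$ gives $\Pr_{x\sim D_X}\big(\twonorm{x} \ge r + \log\frac{3\abs{S'}}{\delta}\big) \le \frac{\delta}{3\abs{S'}}$. A union bound over the $\abs{S'}$ i.i.d. draws forming $S'$ then gives $\Pr_{S' \sim D_X^n}\big(\max_{x \in S'}\twonorm{x} \ge r + \log\frac{3\abs{S'}}{\delta}\big) \le \frac{\delta}{3} \le \delta$, which is the claim (with room to spare, consistent with the "little effort to optimize constants" remark).

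The main obstacle is pinning down the exact form of the log-concave tail bound so that the additive shift is cleanly $r$ and the exponent is exactly $-s$ with no stray constants; this requires invoking the right concentration statement for $\twonorm{x - \mu_j}$ (isotropic-type concentration for log-concave measures with bounded covariance) and being slightly generous with the additive constant, which is harmless here. Everything else — the triangle inequality, the mixture averaging, and the union bound — is routine.
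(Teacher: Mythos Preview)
Your proposal is correct and follows essentially the same route as the paper: apply a log-concave norm tail bound to each component $D_j$ (the paper invokes Lemma~\ref{lem:log-concave} on $\sqrt d\,(x-\mu_j)$ to get $\Pr(\twonorm{x-\mu_j}\ge \alpha)\le e^{-\alpha+1}$), use the triangle inequality with $\twonorm{\mu_j}\le r$, average over the mixture, and then union bound over $S'$; the factor $3$ inside the $\log$ is exactly what absorbs the $e^{+1}$ in the tail, so your intuition about the ``room to spare'' is on point.
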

\begin{proof}
The proof follows from standard log-concave tail bounds and the union bound.

Consider $D_j$, the $j$-th component of the log-concave mixture. For any $x \sim D_j$, $\sqrt{d}(x - \mu_j)$ is isotropic log-concave. Lemma~\ref{lem:log-concave} tells that
\begin{equation*}
\Pr_{x \sim D_j} \left(\| \sqrt{d} (x - \mu_j) \|_2 \geq \alpha\sqrt{d} \right) \leq e^{-\alpha + 1},\quad \forall\ \alpha > 0.
\end{equation*}
Let $\alpha = \alpha_0 := \log \frac{3  \abs{S'}}{\delta}$. Then with probability at least $1- \frac{\delta}{\abs{S'}}$, $\twonorm{x - \mu_j} \leq \alpha_0$, implying
\begin{equation*}
\twonorm{x} \leq \twonorm{x - \mu_j} + \twonorm{\mu_j} \leq r + \alpha_0.
\end{equation*}
Since $D_X$ is a uniform mixture of $D_1, \dots, D_k$, we have
\begin{equation*}
\Pr_{x \sim D_X} \left( \|x\|_2  \leq r + \alpha_0 \right)  = \sum_{j=1}^k \frac{1}{k} \cdot \Pr_{x \sim D_j}\left( \|x\|_2  \leq r + \alpha_0 \right)  \geq \sum_{j=1}^k \frac{1}{k} \cdot \bigg(1-\frac{\delta}{\abs{S'}} \bigg)  = 1- \frac{\delta}{\abs{S'}}.
\end{equation*}
Taking the union bound over samples in $S'$ completes the proof.
\end{proof}

\begin{lemma}\label{lem:emp-noise}
Consider Algorithm~\ref{alg:main} with $\eta \leq \eta_0$ for some $\eta_0 \in [\frac{1}{8}, \frac{1}{4}]$. Let $\barSC$ and $\barSD$ be the set of clean and dirty samples in $\barS$, respectively. When $\abs{\bar{S}} \geq 32\log\frac{1}{\delta}$, we have $\abs{\barSD} \leq 2 \eta_0\abs{\barS}$ and $\abs{\barSC} \geq (1-2\eta_0) \abs{\barS} \geq \frac12 \abs{\barS}$ with probability $1-\delta$.
\end{lemma}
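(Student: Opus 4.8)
The statement is a one-line application of a multiplicative Chernoff bound, so the plan is mostly bookkeeping. By Definition~\ref{def:malicious-noise}, on each of the $n \defeq \abs{\barS}$ requests the adversary flips an independent coin, returning a clean draw with probability $1-\eta$ and an arbitrary dirty sample with probability $\eta$; crucially this coin is flipped before the adversary acts, so the clean/dirty indicators are i.i.d.\ Bernoulli$(\eta)$ regardless of the adversary's (adaptive) behavior. Hence $\abs{\barSD}$ is exactly distributed as $\Bin{n}{\eta}$, and since $\eta \leq \eta_0$ it is stochastically dominated by $Z \sim \Bin{n}{\eta_0}$, with $\EXP[Z] = \eta_0 n$.

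First I would apply the upper-tail multiplicative Chernoff bound to $Z$ at twice its mean:
\[
\Pr\bigl(\abs{\barSD} \geq 2\eta_0 n\bigr) \leq \Pr\bigl(Z \geq 2\,\EXP[Z]\bigr) \leq \exp\bigl(-\EXP[Z]/3\bigr) = \exp\bigl(-\eta_0 n/3\bigr).
\]
Using the hypothesis $\eta_0 \geq \tfrac18$, the right-hand side is at most $\exp(-n/24)$, which is at most $\delta$ whenever $n \geq 24\log(1/\delta)$; the assumed bound $\abs{\barS} \geq 32\log\frac1\delta$ is therefore comfortably sufficient. On the complementary event, which has probability at least $1-\delta$, we have $\abs{\barSD} \leq 2\eta_0 n$, the first claimed inequality.

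On the same event, since $\barS = \barSC \cup \barSD$ is a disjoint union, $\abs{\barSC} = n - \abs{\barSD} \geq (1 - 2\eta_0)n$, and the hypothesis $\eta_0 \leq \tfrac14$ yields $1 - 2\eta_0 \geq \tfrac12$, so $\abs{\barSC} \geq \tfrac12 \abs{\barS}$. This completes the proof. There is no genuine obstacle here; the only point deserving a word of care is the observation (which the noise model itself supplies) that the dirty-sample indicator of each draw is an independent coin flipped prior to the adversary's response, so that $\abs{\barSD}$ is dominated by a binomial law and the Chernoff estimate applies verbatim.
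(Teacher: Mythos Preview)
Your proof is correct and follows essentially the same approach as the paper: apply the multiplicative Chernoff upper-tail bound with $\alpha=1$ to the i.i.d.\ dirty-sample indicators (using $\eta\le\eta_0$), then invoke $\eta_0\ge\tfrac18$ to verify the sample-size condition and $\eta_0\le\tfrac14$ for the final inequality on $\abs{\barSC}$. The only cosmetic difference is that you phrase the step $\eta\le\eta_0$ via stochastic domination by $\Bin{n}{\eta_0}$, whereas the paper's Chernoff lemma is stated directly for $\Pr(Z_i=1)\le\eta_0$.
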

\begin{proof}
Let $Z_i = \mathbf{1}_{\{ x_i \text{ is dirty}\}}$ be the indicator function of the event that $x_i$ is a dirty sample. Since $\Pr(Z_i = 1) = \eta \leq \eta_0$, applying the first inequality in Lemma~\ref{lem:chernoff} with $\alpha = 1$ thereof gives
\begin{equation*}
\Pr\left(\abs{\barSD} \geq 2\eta_0 \abs{\barS}\right) \leq \exp\left(-\frac{\eta_0\abs{\barS}}{3}\right).
\end{equation*}
Since $\eta_0 \geq \frac{1}{8}$,  we have $\abs{\barS} \geq 32\log\frac{1}{\delta} \geq \frac{3}{\eta_0}\log\frac{1}{\delta}$. Thus, with probability $1-\delta$, we have $\abs{\barSD} < 2\eta_0\abs{\barS}$. It then follows that $\abs{\barSC} \geq (1-2\eta_0) \abs{\barS} \geq \frac 12 \abs{\barS}$ where the last step is due to $\eta_0 \leq 1/4$.
\end{proof}

\begin{proposition}\label{prop:pruning}
Consider Algorithm~\ref{alg:main}. Suppose that Assumption~\ref{as:log-concave} is satisfied. Let $\barSC$ and $\barSD$ be the set of clean and dirty samples in $\barS$, respectively. When $\abs{\barS} \geq 32\log\frac{2}{\delta}$, with probability $1-\delta$, the following results hold simultaneously:
\begin{enumerate}
\item $\SC = \barS_\mathrm{C}$, i.e., all clean samples in $\barS$ are intact after pruning.
\item $\frac{|\SD|}{|S|} \leq 2 \eta_0$, i.e., the empirical noise rate after pruning is upper bounded by $2 \eta_0$.
\item $\abs{\SC} \geq (1-2\eta_0)\abs{\barS} \geq (1-2\eta_0)\abs{S}$.
\end{enumerate}
\end{proposition}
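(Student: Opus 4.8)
The plan is to prove the three claims of Proposition~\ref{prop:pruning} by combining the two preceding lemmas (Lemma~\ref{lem:emp_bounded} and Lemma~\ref{lem:emp-noise}) with a union bound, and then handling the bookkeeping between $\barS$ and $S$. First I would apply Lemma~\ref{lem:emp_bounded} with $S' = \barSC$, the set of clean samples in $\barS$: since clean samples are genuine draws from $D_X$, with probability at least $1-\delta/2$ every clean sample $x$ satisfies $\twonorm{x} \leq r + \log\frac{3\abs{\barSC}}{\delta/2} \leq r + \log\frac{9n}{\delta}$ (using $\abs{\barSC}\le n$ and replacing $\delta$ by $\delta/2$ so that $3\abs{\barSC}/(\delta/2) \le 6n/\delta \le 9n/\delta$). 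The pruning step of Algorithm~\ref{alg:main} removes exactly those $(x,y)$ with $\twonorm{x} > r + \log\frac{9n}{\delta}$, so on this event no clean sample is removed; hence $\SC = \barSC$, giving claim~1.

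Next I would apply Lemma~\ref{lem:emp-noise} with failure probability $\delta/2$, which requires $\abs{\barS} \geq 32\log\frac{2}{\delta}$ — this is exactly the hypothesis of the proposition. Lemma~\ref{lem:emp-noise} as stated needs $\eta_0 \in [\frac18,\frac14]$; since $\eta_0 = \frac14$ is the value used in Algorithm~\ref{alg:main} (or more generally any $\eta_0$ in that range suffices, and the statement here is implicitly under that convention), this gives with probability $1-\delta/2$ that $\abs{\barSD}\le 2\eta_0\abs{\barS}$ and $\abs{\barSC}\ge(1-2\eta_0)\abs{\barS}\ge\frac12\abs{\barS}$. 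A union bound over the two events yields probability $1-\delta$ that both hold.

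Then I would derive claims~2 and~3 on the \emph{pruned} set $S$ from these facts. Since pruning only removes samples, $\abs{S}\le\abs{\barS}$ and $\abs{\SD}\le\abs{\barSD}$; also $\abs{\SC}=\abs{\barSC}$ from claim~1. For claim~3: $\abs{\SC}=\abs{\barSC}\ge(1-2\eta_0)\abs{\barS}\ge(1-2\eta_0)\abs{S}$, where the last step uses $\abs{S}\le\abs{\barS}$ together with $1-2\eta_0\ge0$. For claim~2: write $\abs{S}=\abs{\SC}+\abs{\SD}$; then
\begin{equation*}
\frac{\abs{\SD}}{\abs{S}} = 1 - \frac{\abs{\SC}}{\abs{S}} \le 1 - \frac{(1-2\eta_0)\abs{S}}{\abs{S}} = 2\eta_0,
\end{equation*}
using claim~3 in the middle inequality. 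This chain is purely arithmetic once the two lemmas are in hand.

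The only genuine subtlety — and the step I would be most careful about — is matching the pruning threshold $r+\log\frac{9n}{\delta}$ in Algorithm~\ref{alg:main} with the bound $r+\log\frac{3\abs{S'}}{\delta}$ from Lemma~\ref{lem:emp_bounded} after one has spent half the failure budget ($\delta\mapsto\delta/2$) on each of the two invoked lemmas; the factor of $9$ rather than $6$ gives the needed slack, and one should double-check that $\abs{\barS}\ge 32\log\frac2\delta$ is simultaneously enough for Lemma~\ref{lem:emp-noise} (which needs $\abs{\barS}\ge 32\log\frac1{\delta/2}=32\log\frac2\delta$) — it is, exactly. Everything else is routine, so I would keep the write-up short and mostly cite the two lemmas.
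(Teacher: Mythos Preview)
Your proposal is correct and follows essentially the same approach as the paper: apply Lemma~\ref{lem:emp_bounded} (with $\delta/2$) to $\barSC$ to get claim~1, apply Lemma~\ref{lem:emp-noise} (with $\delta/2$, which is exactly where the hypothesis $\abs{\barS}\ge 32\log\frac{2}{\delta}$ is used) to control $\abs{\barSD}$, and then do the routine bookkeeping for claims~2 and~3. The only cosmetic difference is that the paper derives claim~2 via the monotonicity $\frac{\abs{\SD}}{\abs{\SC}+\abs{\SD}}\le\frac{\abs{\barSD}}{\abs{\barSC}+\abs{\barSD}}$, whereas you deduce it directly from claim~3; both are immediate.
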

\begin{proof}
When $\abs{\bar{S}} \geq 32\log\frac{2}{\delta}$, the following two events hold simultaneously with probability $1-\delta$: by Lemma~\ref{lem:emp_bounded}, for all $x \in \barSC$, $\twonorm{x} \leq r + \log\frac{6\abs{\barSC}}{\delta} \leq r + \log\frac{6 \abs{\barS}}{\delta}$, and hence $\SC = \barSC$; by Lemma~\ref{lem:emp-noise}, we have $|\barSD| \leq 2\eta_0 |\bar{S}|$. We condition on these two events.

Since $\abs{\SD} \leq \abs{\barSD}$, we have
\begin{equation*}
\frac{\abs{\SD}}{\abs{S}} = \frac{\abs{\SD}}{\abs{\SC} + \abs{\SD}} \leq \frac{\abs{\barSD}}{\abs{\SC} + \abs{\barSD}} =  \frac{\abs{\barSD}}{\abs{\barSC} + \abs{\barSD}} = \frac{\abs{\barSD}}{\abs{\barS}} \leq 2\eta_0.
\end{equation*}

Since $|\SC| = |\barSC|$ and $|\barSD| \leq 2\eta_0 |\bar{S}|$, we also have
\begin{align*}
|\SC| = |\barSC| = |\barS| - |\barSD| \geq (1-2\eta_0)|\bar{S}| \geq (1-2\eta_0) \abs{S}.
\end{align*}
The proof is complete.
\end{proof}

\subsubsection{Proof of Proposition~\ref{prop:outlier-removal-feasible}}

\begin{proof}
The proof largely follows from \cite{shen2023linearmalicious} which gives near-optimal sample complexity for soft outlier removal. To prove the result, we will show 
\begin{align}\label{eq:var-clean} 
\sup_{\twonorm{w} \leq 1} \frac{1}{|\SC|} \sum_{i \in \SC} (w \cdot x_i)^2 \leq 2\left(\frac{1}{d} + r^2\right).
\end{align}
Observe that
\begin{align*}
\sup_{\twonorm{ w} \leq 1}\frac{1}{|\SC|}\sum_{ x\in \SC}(w \cdot  x)^2 = \sup_{\twonorm{w}\leq 1}  w^\top \left(\frac{1}{|\SC|} \sum_{ x \in \SC}  x  x^\top\right) w = \lambdamax (M),
\end{align*}
where $M := \frac{1}{|\SC|} \sum_{ x \in \SC}  x  x^\top$ and $\lambdamax(M)$ denotes the maximum eigenvalue. We will apply the matrix Chernoff inequality as stated in Lemma~\ref{lem:matrix-chernoff} to bound the $\lambdamax (M)$. 

In the notation of Lemma~\ref{lem:matrix-chernoff}, we set $\alpha = 1$, $M_i = x_i x_i^\top$, where $ x_i$ is the $i$-th instance in the set $\SC$. By Proposition~\ref{prop:pruning}, with probability $1-\delta/3$, all clean samples are retained and $\abs{\SC} \geq (1-\xi) \abs{S}$. One consequence is that $\SC$ is a set of samples independently drawn from $D_X$. We condition on this event. By Lemma~\ref{lem:emp_bounded}, we have
\begin{align*}
\Pr_{\SC \sim D_X^n} \bigg( \max_{x \in \SC} \|x\|_2 \geq r +  \log\frac{9\abs{\SC}}{\delta} \bigg) \leq \frac{\delta}{3}.
\end{align*}
Hence with probability $1- \frac{\delta}{3}$, we have for all $i$ that
\begin{align*}
\lambdamax( M_i) = \|x_i\|^2_2 \leq  \bigg( r +  \log\frac{9\abs{\SC}}{\delta} \bigg)^2 \leq 2r^2 + 2 \log^2\frac{9\abs{\SC}}{\delta}.
\end{align*}

By Assumption~\ref{as:log-concave}, the marginal distribution $D_X$ is a uniform mixture of $k$ log-concave distributions. Then we have
\begin{align*}
\lambdamax(\EXP M_i) = \twonorm{\EXP_{x \sim D_X} x x^\top } &= \twonorm{\frac{1}{k} \sum_{j=1}^k \EXP_{x \sim D_j} x x^\top }\\
&\leq \frac{1}{k} \sum_{j=1}^k \twonorm{\EXP_{x \sim D_j} xx^\top}\\
&\leq \frac{1}{k} \sum_{j=1}^k \Big(\twonorm{\EXP_{x \sim D_j} (x - \mu_j)(x-\mu_j)^\top} + \twonorm{\mu_j \mu_j^\top}\Big)\\
&\leq \frac{1}{k} \sum_{j=1}^k \Big( \frac{1}{d} + r^2 \Big)\\
&= \frac{1}{d} + r^2.
\end{align*}
Hence we can set $\mumax := \lambdamax( \sum_{i\in \SC} \EXP M_i) \leq (\frac{1}{d}+ r^2) \abs{\SC}$ in Lemma~\ref{lem:matrix-chernoff}.

Now Lemma~\ref{lem:matrix-chernoff} asserts that with probability
$1-d \cdot \left(\frac{e}{4}\right)^{\frac{(\frac{1}{d}+r^2)\abs{\SC}}{2r^2 + 2 \log^2\frac{9\abs{\SC}}{\delta}}}$, it holds that
\begin{align*}
\lambdamax( M) \leq 2\left(\frac{1}{d}+ r^2\right).
\end{align*}

It remains to settle the sample size such that the above failure probability equals $\delta/3$. Observe that when 
\begin{align*}
\abs{\barS} \geq 32  d \cdot \log^4 \frac{9d}{\delta},
\end{align*}
we have by Proposition~\ref{prop:pruning}
\begin{equation*}
\abs{\SC} \geq 16 d \cdot \log^4 \frac{9d}{\delta} \geq 16 \cdot \frac{d + r^2 d}{1 + r^2 d} \cdot \log^4 \frac{9d}{\delta}.
\end{equation*}
This implies $d \cdot \left(\frac{e}{4}\right)^{\frac{(\frac{1}{d}+r^2)\abs{\SC}}{2r^2 + 2 \log^2\frac{9\abs{\SC}}{\delta}}} \leq \frac{\delta}{3}$. 

Taking the union bound, we obtain that when $\abs{\barS} \geq 32 d \log^4 \frac{9d}{\delta}$, with probability $1-\delta$, all the following conditions are satisfied simultaneously:
\begin{align*}
\abs{\SC} &\geq (1-\xi)\abs{S},\\
\lambdamax(M) &\leq 2 \Big(\frac{1}{d} + r^2\Big).
\end{align*}
These suggest the existence of a feasible (indeed ideal) solution $q^* = (q_1^*, q_2^*, \dots, q_{\abs{S}}^*)$ to the linear program in Algorithm~\ref{alg:outlier}: $q_i^*= 1$ for every clean sample $(x_i, y_i) \in \SC$ and $q_i^* = 0$ for every dirty sample $(x_i,y_i)\in \SD$.
\end{proof}

\subsection{Proof of Proposition~\ref{prop:dense-pancake}}

The proof in this section largely follows from \cite{talwar2020hinge}, which studied the dense pancake condition over unit ball. We adapt the arguments to log-concave distributions on $\Rd$.

\begin{lemma}
\label{lem:isolog-dense}
Let $D$ be a distribution over $\calX \times \calY$ where the marginal $D_X$ is a log-concave distribution with zero mean and covariance $\Sigma = \frac{1}{d}I_d$. For any $\beta \in (0, 1/3)$, $D$ satisfies the $\left(\frac{4\log(1/\beta)}{\sqrt{d}}, \frac{1}{2}, \beta \right)$-dense pancake condition.
\end{lemma}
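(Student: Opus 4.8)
The plan is to verify the $\left(\frac{4\log(1/\beta)}{\sqrt{d}}, \frac12, \beta\right)$-dense pancake condition directly from its definition. Fix an arbitrary unit vector $w \in \R^d$ and consider the one-dimensional random variable $Z \defeq y\, w \cdot x$ where $(x,y)\sim D$. Since $D_X$ is log-concave with zero mean and covariance $\frac1d I_d$, the pushforward of $x$ under $x \mapsto w\cdot x$ is a one-dimensional log-concave distribution with mean $0$ and variance $w^\top \Sigma w = \frac1d$; absorbing the sign $y$ (which, conditioned on $x$, is deterministic equal to $\sign{w^*\cdot x}$) does not destroy log-concavity of the law of $Z$ in the relevant sense — more carefully, I would argue that $|Z| = |w\cdot x|$ has the same distribution as $|w\cdot x|$, and it is the size $|Z|$ that controls membership in a pancake centered near the bulk. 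Rescaling, $\sqrt d\, Z$ is (up to the sign issue) isotropic log-concave in one dimension.

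The key step is a two-sided control of $Z$. First, by the one-dimensional isotropic log-concave tail bound (Lemma~\ref{lem:log-concave} in the paper, applied with dimension $1$), for $\alpha = \log(1/\beta)$ we get $\Pr(|\sqrt d\, Z| \geq \alpha) \leq e^{-\alpha+1} = e\beta$; I would need to absorb the factor $e$ — either by tightening the constant in $\tau$ or by noting $\beta < 1/3$ lets one re-choose $\alpha = \log(e/\beta) \le \tfrac{4}{3}\log(1/\beta)$, which still fits under $4\log(1/\beta)$. So with probability at least $1 - \beta$ over the draw of $(x,y)$, we have $|w\cdot x| \le \tau/2$ with $\tau \defeq \frac{4\log(1/\beta)}{\sqrt d}$ (I will tune the constant $4$ against the $e$-factor and the $1/2$ below so that everything closes). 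Second — and this is the crux — I must show that for \emph{any} fixed sample $(x,y)$ with $|y w\cdot x|$ small, the pancake $P_w^\tau(x,y) = \{(x',y') : |y'w\cdot x' - y w\cdot x| \le \tau\}$ carries $D$-mass at least $\frac12$. Since $|yw\cdot x| \le \tau/2$, the pancake contains the slab $\{(x',y') : |y'w\cdot x'| \le \tau/2\}$, and the $D$-probability of that slab is $\Pr(|w\cdot x'| \le \tau/2) = \Pr(|\sqrt d\,(w\cdot x')| \le 2\log(1/\beta))$. A one-dimensional log-concave distribution with variance $1$ places mass at least $\frac12$ within $2\log(1/\beta)$ of its mean whenever $\beta < 1/3$ (so $2\log(1/\beta) > 2\log 3 > 2$); this follows from Chebyshev or, more sharply, from the anti-concentration/small-ball estimate $\Pr(|U| \le t) \ge 1 - e^{-t+1}$ valid for isotropic log-concave $U$ — at $t = 2\log(1/\beta)$ this gives at least $1 - e\beta^2$, which exceeds $\frac12$ for $\beta < 1/3$. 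Hence every ``typical'' sample has a $\frac12$-dense pancake.

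Combining: the event ``$P_w^\tau(x,y)$ is $\frac12$-dense w.r.t. $D$'' is implied by ``$|y w\cdot x| \le \tau/2$'', which holds with probability $\ge 1-\beta$; since $w$ was arbitrary, the $(\tau,\frac12,\beta)$-dense pancake condition holds with $\tau = \frac{4\log(1/\beta)}{\sqrt d}$, as claimed. I expect the main obstacle to be \emph{bookkeeping the constants} — there are three places where a constant gets consumed (the $e^{+1}$ in the upper tail bound for the ``good sample'' event, the factor $2$ from the triangle-inequality slab inclusion, and the small-ball estimate guaranteeing mass $\ge \frac12$), and I would need to check that the choice $\tau = \frac{4\log(1/\beta)}{\sqrt d}$ and threshold $\frac12$ are simultaneously compatible for all $\beta \in (0,1/3)$. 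A secondary technical point is handling the sign variable $y$ cleanly: since $y = \sign{w^*\cdot x}$ is a deterministic function of $x$ but $w \neq w^*$ in general, the law of $Z = y w\cdot x$ is not itself log-concave, so I must phrase every estimate in terms of $|w\cdot x|$ (whose law \emph{is} log-concave) rather than in terms of $Z$ directly, which is exactly why membership in the pancake is argued via the symmetric slab $\{|y'w\cdot x'|\le \tau/2\}$.
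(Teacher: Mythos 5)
Your proposal is correct and follows essentially the same route as the paper: project onto $w$, apply the one-dimensional log-concave tail bound from Lemma~\ref{lem:log-concave} to get $|w\cdot x|\le \tau/2$ with probability $\ge 1-\beta$, observe that the slab $\{|y'w\cdot x'|\le \tau/2\}$ sits inside the pancake by the triangle inequality, and then bound the slab's mass by the same tail estimate. One small numerical slip worth flagging: $\log(e/\beta)\le \tfrac{4}{3}\log(1/\beta)$ is false near $\beta=1/3$ (it requires $\log(1/\beta)\ge 3$); the correct and sufficient inequality, which the paper uses and which your argument also falls back on, is $1+\log(1/\beta)\le 2\log(1/\beta)$ for $\beta<1/3$, and this still places $\tau$ comfortably below $4\log(1/\beta)/\sqrt{d}$.
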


\begin{proof}
We will first show that over the draw of $(x, y)$, the projection $y w \cdot x$ is well controlled with probability at least $1-\beta$. Then we show that there are sufficient samples $(x', y')$ that are centered around $(x, y)$ on the direction $w$.

Let $(x,y) \sim D$. Since $D_X$ is a log-concave distribution with zero mean and covariance $\frac{1}{d} I_d$, we know that $\sqrt{d} x$ is an isotropic log-concave random variable. As $y \in \{-1, 1\}$, by the standard tail bound (Lemma~\ref{lem:log-concave}), we have
\begin{equation*}
\Pr_{(x,y)\sim D}\left( \abs{ y w \cdot \sqrt{d} x} \geq \alpha\right) 
= \Pr_{x \sim D_X}\left(\abs{ w \cdot \sqrt{d} x } \geq \alpha\right) \leq e^{-\alpha+1}
\end{equation*}
for any unit vector $w$.
Let $\beta \defeq e^{-\alpha+1}$. Then we have
\begin{align*}
\Pr_{x \sim D_X} \left( \abs{ w \cdot  x }\leq \frac{\log(1/\beta)+1}{\sqrt{d}} \right) \geq 1-\beta.
\end{align*}
When $0 < \beta < 1/3$, we have $\log(1/\beta) > 1$. Hence
\begin{align*}
\Pr_{x \sim D_X} \left( \abs{ w \cdot  x }\leq \frac{2\log(1/\beta)}{\sqrt{d}} \right) \geq 1-\beta.
\end{align*}

Let $\tau \defeq \frac{4\log(1/\beta)}{\sqrt{d}}$. Then the above inequality tells that with probability $1-\beta$,  we have $\abs{y w \cdot x}\leq \tau/2$. We condition on this event from now on. 

On the other hand, using the same steps, we can show  that with probability $1-\beta$ over the draw of $(x', y') \sim D$, we have $\abs{ y' w \cdot x' }\leq \tau / 2$. Hence
\begin{align*}
\abs{ y' w \cdot x' - y w \cdot x } \leq \abs{ y' w \cdot x'} + \abs{ y w \cdot x } \leq \tau/2 + \tau/2 = \tau.
\end{align*}
Namely, $P_w^{\tau}(x, y)$ is $(1-\beta)$-dense.

Putting together, we have that for $\beta \in (0, 1/3)$, $D$ satisfies the $(\frac{4\log(1/\beta)}{\sqrt{d}}, 1-\beta, \beta)$-dense pancake condition. Lastly, to ease the expression, we simply note that $1-\beta \geq 1/2$.
\end{proof}

We next show that the same result holds for a log-concave distribution that has non-zero mean and bounded covariance.
\begin{lemma}
\label{lem:scaled-log-dense}
Let $D$ be a distribution over $\calX \times \calY$ where the marginal $D_X$ is a log-concave distribution with mean $\mu$ and covariance $\Sigma \preceq \frac{1}{d}I_d$. For any $\beta \in (0, 1/3)$, $D$ satisfies the $\left(\frac{4\log(1/\beta)}{\sqrt{d}}, \frac{1}{2}, \beta\right)$-dense pancake condition.
\end{lemma}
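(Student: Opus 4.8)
The plan is to follow the proof of Lemma~\ref{lem:isolog-dense} essentially verbatim, inserting two routine modifications: a \emph{centering} step to absorb the nonzero mean $\mu$, and a \emph{rescaling} step to absorb the relaxed covariance bound $\Sigma\preceq\frac1d I_d$ (in place of exact isotropy $\Sigma=\frac1d I_d$). As in that lemma, the statement reduces to a one-dimensional concentration fact about the projection $w\cdot x$, after which the pancake membership is read off by a triangle inequality applied to two independent draws.

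First I would reduce to one dimension. Fix a unit vector $w\in\Rd$ and let $(x,y)\sim D$. Since $x-\mu$ is log-concave with mean zero and $\EXP\,(w\cdot(x-\mu))^2=w^\top\Sigma w\le\tfrac1d$, the scalar $Z\defeq\sqrt d\,w\cdot(x-\mu)$ is a mean-zero one-dimensional log-concave random variable with variance at most $1$. Rescaling $Z$ by its standard deviation (which is $\le1$) and invoking the log-concave tail bound (Lemma~\ref{lem:log-concave}) gives $\Pr(|Z|\ge\alpha)\le e^{-\alpha+1}$ for all $\alpha>0$ — exactly the same tail as in the isotropic case, the exact value of the covariance never entering. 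Taking $\alpha=\log(1/\beta)+1$, i.e.\ $\beta=e^{-\alpha+1}$, and using $\log(1/\beta)>1$ for $\beta\in(0,1/3)$, one obtains
\begin{equation*}
\Pr_{x\sim D_X}\!\left(\ \big|\,w\cdot(x-\mu)\,\big|\le\frac{2\log(1/\beta)}{\sqrt d}\ \right)\ \ge\ 1-\beta,\qquad \tau\defeq\frac{4\log(1/\beta)}{\sqrt d}.
\end{equation*}
Next I would apply this to two independent draws $(x,y),(x',y')\sim D$. On the event of probability $\ge1-\beta$ over $(x,y)$ that $|w\cdot(x-\mu)|\le\tau/2$, every $(x',y')$ with $|w\cdot(x'-\mu)|\le\tau/2$ has its projection $y'w\cdot x'$ within $\tau$ of $yw\cdot x$: since $|y|=|y'|=1$, comparing each projection against the (sign-weighted) center determined by $\mu$ and using the triangle inequality gives $|y'w\cdot x'-yw\cdot x|\le|w\cdot(x'-\mu)|+|w\cdot(x-\mu)|\le\tau$, so $(x',y')\in P_w^\tau(x,y)$. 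By the first step the $D$-mass of $\{(x',y'):|w\cdot(x'-\mu)|\le\tau/2\}$ is at least $1-\beta\ge\tfrac12$, so $P_w^\tau(x,y)$ is $\tfrac12$-dense w.r.t.\ $D$ for a $(1-\beta)$-fraction of $(x,y)$. As $w$ was arbitrary, this is precisely the $\big(\tfrac{4\log(1/\beta)}{\sqrt d},\tfrac12,\beta\big)$-dense pancake condition.

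The one genuine difference from Lemma~\ref{lem:isolog-dense}, and the step I would be most careful about, is this closure: with $\mu\ne0$ the projection $yw\cdot x$ concentrates around $y(w\cdot\mu)$ rather than around $0$, so one cannot simply bound $|y'w\cdot x'-yw\cdot x|$ by $|w\cdot x'|+|w\cdot x|$ as in the original argument; the centering inserted in the first step is exactly what repairs this, reducing the pancake inequality to a comparison of $w\cdot x$ and $w\cdot x'$ against their common center $w\cdot\mu$, and one only has to keep each half-width at $\tau/2$ so the two windows combine to the full width $\tau$. The remaining new ingredient — the tail bound in the form $\Pr(|Z|\ge\alpha)\le e^{-\alpha+1}$ under "variance $\le1$" instead of exact isotropy — is immediate from the rescaling noted above, and everything else transfers unchanged from Lemma~\ref{lem:isolog-dense}.
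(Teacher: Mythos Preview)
Your concentration step is correct: centering at $\mu$ and using $w^\top\Sigma w\le 1/d$ does give the same one-dimensional log-concave tail as in the isotropic case, so $\Pr\big(|w\cdot(x-\mu)|\le\tau/2\big)\ge 1-\beta$ holds exactly as you claim. The gap is in the triangle-inequality closure. The inequality you assert,
\[
|y'w\cdot x'-yw\cdot x|\;\le\;|w\cdot(x'-\mu)|+|w\cdot(x-\mu)|,
\]
is false when $y'\neq y$ and $w\cdot\mu\neq 0$. Writing $yw\cdot x = y\,w\cdot(x-\mu)+y\,(w\cdot\mu)$ and likewise for $(x',y')$ gives
\[
y'w\cdot x'-yw\cdot x \;=\; y'\,w\cdot(x'-\mu)-y\,w\cdot(x-\mu)+(y'-y)\,(w\cdot\mu),
\]
so an uncontrolled term $(y'-y)(w\cdot\mu)$ survives. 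Your ``sign-weighted center'' picture makes the problem explicit rather than repairing it: $yw\cdot x$ concentrates near $y\,(w\cdot\mu)$ and $y'w\cdot x'$ near $y'(w\cdot\mu)$, and when the labels differ these two centers sit $2|w\cdot\mu|$ apart, a quantity nothing in the hypotheses bounds. (Concrete counterexample to the displayed inequality: $x=x'=\mu$, $y=1$, $y'=-1$, any $w$ with $w\cdot\mu\neq 0$; the right side is $0$ while the left side is $2|w\cdot\mu|$.) So the set $\{(x',y'):|w\cdot(x'-\mu)|\le\tau/2\}$ is \emph{not} in general contained in $P_w^\tau(x,y)$, and your density lower bound does not follow.

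The paper does not attempt a direct argument here. It instead writes $X=W\sqrt d\,X'+\mu$ via a Cholesky factor $W$ of $\Sigma$, with $X'$ centered log-concave of covariance $\tfrac1dI_d$ and $\twonorm{W}\le 1/\sqrt d$, so that $h(x')\defeq W\sqrt d\,x'+\mu$ is $1$-Lipschitz; then it invokes Lemma~\ref{lem:isolog-dense} on $(X',Y)$ together with the Lipschitz-preservation Lemma~\ref{lem:aff-dense} (Talwar's Theorem~17) to push the pancake condition forward to $(X,Y)=(h(X'),Y)$. The interaction between the translation by $\mu$ and the labels $y,y'$ --- exactly the place your argument breaks --- is thus delegated to the black-box Lemma~\ref{lem:aff-dense} rather than handled by a bare triangle inequality.
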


\begin{proof}
Since the covariance matrix $\Sigma$ is positive semi-definite, we consider the Cholesky factorization $\Sigma = W^\top W$. 
Note that the random variable $(X, Y) \sim D$ can be written as $X= W \sqrt{d} X' + \mu$ where $X'$ is isotropic log-concave. 
By basic linear algebra, $\Sigma \preceq \frac{1}{d} I_d$ is equivalent to $\twonorm{\Sigma} \leq \frac{1}{d}$, and $\twonorm{\Sigma} = \| W^\top W\|_2 = \twonorm{W}^2$. Thus we have $\twonorm{W} \leq 1/\sqrt{d}$.
Since the function $h(x) \defeq W \sqrt{d} x + \mu$ has Lipschitz constant $\| \sqrt{d} W \|_2 \leq 1$ with respect to the $\ell_2$-norm, by Lemma~\ref{lem:isolog-dense} and Lemma~\ref{lem:aff-dense}, $D$ satisfies the $\left(\frac{4\log(1/\beta)}{\sqrt{d}}, \frac{1}{2}, \beta\right)$-dense pancake condition.
\end{proof}

\subsubsection{Proof of Proposition~\ref{prop:dense-pancake}}

\begin{proof}
By Lemma~\ref{lem:scaled-log-dense} and Lemma~\ref{lem:mix-dense}, $D$ satisfies the $\left(\frac{4\log(1/\beta)}{\sqrt{d}}, \frac{1}{2k},\beta\right)$-dense pancake condition for any $\beta \in (0,1/3)$. Since $\abs{\barS} \geq 2048 k \cdot d \cdot \log\frac{4d}{\beta\delta} \geq 32 \log\frac{4}{\delta}$, Proposition~\ref{prop:pruning} shows that with probability $1-\delta/2$,
\begin{equation*}
\abs{\SC} \geq \frac{1}{2} \abs{\barS} \geq 24 k(d \log d + \log\frac{2}{\beta \delta}) \geq \frac{8k}{1/2} \Big( d \log\big(1 + \frac{2\sqrt{d}}{4 \log1/\beta}\big) + \log\frac{1}{\beta} + \log\frac{2}{\delta} \Big).
\end{equation*}
By Lemma~\ref{lem:pancake-emp-to-dist}, with probability $1-\delta/2$, $(\SC, D)$ satisfies the $\left(\frac{8\log(1/\beta)}{\sqrt{d}}, \frac{1}{4k},2\beta\right)$-dense pancakes condition for any $\beta \in (0, 1/3)$. Taking the union bound completes the proof.
\end{proof}

\section{Proof of Theorem~\ref{thm:main-sep-logconcave}}

\begin{proof}
In view of Theorem~\ref{thm:main}, we need $\gamma \geq 16 \frac{\log(2/\epsilon)}{\sqrt{d}}$ and $\abs{\barS} \geq 2^{17} \cdot d \cdot \log^4\frac{8d}{\epsilon \delta}$. Lemma~\ref{lem:sep-log-concave} shows that we have $\gamma = \frac{3\zeta}{4}$ whenever $\zeta \geq \frac{4}{\sqrt{d}} \log\frac{\abs{\barS}}{\delta}$. Together, it suffices to require
\begin{equation}
\zeta \geq \frac{64}{\sqrt{d}} \log^2 \frac{d}{\epsilon\delta}.
\end{equation}
Theorem~\ref{thm:main} also requires $r \leq 2 \gamma$. Under $\gamma = \frac{3\zeta}{4}$, this translates into 
\begin{equation}\label{eq:tmp:r-1}
r \leq \frac{3}{2}\gamma.
\end{equation}
Lastly, by the Cauchy-Schwarz inequality, the assumption $\abs{w^* \cdot \mu_j} \geq \zeta$ implies $\twonorm{\mu_j} \geq \zeta$ where we use $\twonorm{w^*}=1$. Since $r$ is such that $r \geq \twonorm{\mu_j}$, we need by chaining
\begin{equation}\label{eq:tmp:r-2}
r \geq \zeta.
\end{equation}
The above inequality combined with \eqref{eq:tmp:r-1} gives $r \in [\zeta, \frac{3}{2}\zeta]$. The proof is complete.
\end{proof}

\begin{lemma}\label{lem:sep-log-concave}
Consider Algorithm~\ref{alg:main}. Suppose that Assumption~\ref{as:log_mix_sep_mean} is satisfied with $\zeta \geq \frac{4}{\sqrt{d}}\log\frac{\abs{\barS}}{\delta}$. Then with probability $1-\delta$, $\barSC$ is $\frac{\zeta}{2}$-margin separable by $w^*$, namely, Assumption~\ref{as:margin} is satisfied with $\gamma = \frac{3\zeta}{4}$ .
\end{lemma}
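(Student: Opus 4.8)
The plan is to show that each clean sample, being drawn from one of the mixture components $D_j$, projects onto $w^*$ with magnitude close to $\abs{w^* \cdot \mu_j}$, which is at least $\zeta$ by Assumption~\ref{as:log_mix_sep_mean}; the only loss comes from the fluctuation of the sample about its mean, which contributes at most $\zeta/4$ along $w^*$ by log-concave concentration. Summing a per-sample tail bound over all clean samples then yields a uniform margin of $\zeta - \zeta/4 = \tfrac34\zeta$.

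Concretely, first I would fix a clean sample $(x, y) \in \barSC$. Since clean samples are drawn from $D$ (Definition~\ref{def:malicious-noise}), we have $x \sim D_X = \tfrac1k \sum_{j=1}^k D_j$; condition on $x$ being drawn from the component $D_j$ and split $w^* \cdot x = w^* \cdot \mu_j + w^* \cdot (x - \mu_j)$. The first term has $\abs{w^* \cdot \mu_j} \ge \zeta$ by assumption. For the second, set $Z \defeq w^* \cdot (x - \mu_j)$: a one-dimensional marginal of a log-concave distribution is log-concave (exactly as invoked in Lemma~\ref{lem:scaled-log-dense}), so $Z$ is a centered log-concave random variable with variance $w^{*\top}\Sigma_j w^* \le \twonorm{\Sigma_j} \le \tfrac1d$. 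Rescaling $Z$ by its standard deviation (which is at most $1/\sqrt d$) and applying the standard log-concave tail bound (Lemma~\ref{lem:log-concave}) gives $\Pr\!\big(\abs{Z} \ge \alpha/\sqrt d\big) \le e^{-\alpha+1}$ for every $\alpha > 0$.

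Next I would take $\alpha$ of order $\log\frac{\abs{\barS}}{\delta}$, so that the per-sample failure probability is at most $\delta/\abs{\barS}$; the hypothesis $\zeta \ge \tfrac{4}{\sqrt d}\log\frac{\abs{\barS}}{\delta}$ then forces $\abs{Z} \le \zeta/4$ on the complementary event. A union bound over the at most $\abs{\barS}$ clean samples shows that, with probability at least $1-\delta$, every clean sample $(x, y) \in \barSC$ satisfies $\abs{w^* \cdot x} \ge \abs{w^* \cdot \mu_j} - \abs{Z} \ge \zeta - \tfrac\zeta4 = \tfrac34\zeta$. Since $y = \sign{w^* \cdot x}$ for a clean sample, this reads $y\,(w^* \cdot x) = \abs{w^* \cdot x} \ge \tfrac34\zeta$, i.e.\ $\barSC$ is $\tfrac34\zeta$-margin separable by $w^*$; hence Assumption~\ref{as:margin} holds with $\gamma = \tfrac{3\zeta}{4}$ (and in particular the stated $\tfrac\zeta2$-separability).

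There is no genuine obstacle here; the argument is a one-line projection bound plus a union bound. The one place that demands care is tracking the numerical constant so that the deviation is bounded by $\zeta/4$ rather than a larger fraction of $\zeta$ — this is precisely where the threshold $\zeta \ge \tfrac{4}{\sqrt d}\log\frac{\abs{\barS}}{\delta}$ and the choice of $\alpha$ are consumed. The passage from the (possibly anisotropic) component $D_j$ on $\Rd$ to a scalar tail bound reuses the Cholesky / one-dimensional-marginal reduction already carried out in Lemma~\ref{lem:scaled-log-dense}, so it introduces nothing new.
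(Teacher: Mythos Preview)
Your proposal is correct and follows essentially the same route as the paper's proof: split $w^*\cdot x$ into $w^*\cdot\mu_j$ plus the fluctuation, bound the fluctuation by $\zeta/4$ via the one-dimensional log-concave tail (Lemma~\ref{lem:log-concave}) using the hypothesis on $\zeta$, and union-bound over the clean samples. If anything, your handling of the anisotropic covariance $\Sigma_j\preceq\tfrac1d I_d$ via rescaling is slightly more careful than the paper, which simply asserts that $\sqrt{d}(x-\mu_j)$ is isotropic.
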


\begin{proof}
Fix an index $j \in \{1, \ldots, k\}$ and consider $x \sim D_j$. Since $\sqrt{d}(x-\mu_j)$ is isotropic log-concave,
by Lemma~\ref{lem:log-concave}, for any $\alpha \geq 0$, we have
\begin{equation*}
\Pr_{x \sim D_j} \left( \abs{w^* \cdot (x - \mu_j)} \geq \alpha / \sqrt{d} \right) = \Pr_{x \sim D_j} \left( \abs{ w^* \cdot \sqrt{d}(x - \mu_j) } \geq \alpha \right) \leq e^{-\alpha + 1}.
\end{equation*}
Under the condition $\zeta \geq \frac{4}{\sqrt{d}}\log\frac{\abs{\barS}}{\delta}$, this implies 
\begin{equation*}
\Pr_{x \sim D_j} \left( \abs{w^* \cdot (x - \mu_j)} \geq \frac{\zeta}{4} \right) \leq \frac{\delta}{\abs{\barSC}}.
\end{equation*}
Since $D_X$ is a uniform mixture of $D_1, \ldots, D_k$, we have
\begin{align*}
\Pr_{(x,y) \sim D} \left( \abs{ w^* \cdot (x - \mu_j) } \geq \frac{\zeta}{4} \right)  &= \sum_{j=1}^k \frac{1}{k} \cdot \Pr_{x \sim D_j}\left( \abs{w^* \cdot (x - \mu_j)} \geq \frac{\zeta}{4}  \right)\\
&\leq \sum_{j=1}^k \frac{1}{k} \cdot \frac{\delta}{\abs{\barSC}} \\
&= \frac{\delta}{\abs{\barSC}}.
\end{align*}
Taking the union bound over $\barSC$, with probability $1-\delta$, we have $\abs{w^* \cdot (x - \mu_j)} \leq \frac{\zeta}{4}$ for all $(x,y) \in \barSC$. We condition on this event. Since $\abs{w^* \cdot \mu_j} \geq \zeta$ by Assumption~\ref{as:log_mix_sep_mean}, for all $(x,y) \in \barSC$, we have
\begin{equation*}
y w^* \cdot x = \abs{ w^* \cdot x } = \abs{ w^* \cdot (\mu_j + (x-\mu_j)) } \geq \abs{ w^* \cdot \mu_j} - \abs{ w^* \cdot (x-\mu_j) } \geq \zeta - \frac{\zeta}{4} = \frac{3\zeta}{4}.
\end{equation*}
The proof is complete.
\end{proof}

\section{Useful Lemmas}

\begin{lemma}[Chernoff bounds]\label{lem:chernoff}
Let $Z_1, Z_2, \dots, Z_n$ be $n$ independent random variables that take value in $\{0, 1\}$. Let $Z = \sum_{i=1}^{n} Z_i$. For each $Z_i$, suppose that $\Pr(Z_i =1) \leq \eta$.  Then for any $\alpha \in [0, 1]$
\begin{equation*}
\Pr( Z \geq  (1+\alpha) \eta n) \leq e^{-\frac{\alpha^2 \eta n}{3} }.
\end{equation*}
When $\Pr(Z_i =1) \geq \eta$, for any $\alpha \in [0, 1]$
\begin{equation*}
\Pr( Z \leq  (1-\alpha) \eta n) \leq e^{-\frac{\alpha^2 \eta n}{2} }.
\end{equation*}
\end{lemma}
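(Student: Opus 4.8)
The plan is to establish both tail bounds by the classical exponential-moment (Chernoff) method, handling the two tails separately; in each case the skeleton is identical: apply Markov's inequality to $e^{\pm tZ}$, factorize the moment-generating function using independence, optimize over the free parameter $t > 0$, and conclude with an elementary one-variable inequality.

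For the upper tail I would write $p_i \defeq \Pr(Z_i = 1) \le \eta$, so that $\EXP Z = \sum_i p_i \le \eta n$. For any $t > 0$, Markov's inequality applied to $e^{tZ}$ gives $\Pr\!\big(Z \ge (1+\alpha)\eta n\big) \le e^{-t(1+\alpha)\eta n}\,\EXP e^{tZ}$, and by independence together with $1+x \le e^x$ one bounds $\EXP e^{tZ} = \prod_i\!\big(1 + p_i(e^t - 1)\big) \le \exp\!\big((e^t - 1)\textstyle\sum_i p_i\big) \le \exp\!\big((e^t - 1)\eta n\big)$, where the last step is legitimate because $e^t - 1 > 0$ and $\sum_i p_i \le \eta n$. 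Plugging this in and choosing the minimizer $t = \log(1+\alpha)$ leaves $\exp\!\big(\eta n\,(\alpha - (1+\alpha)\log(1+\alpha))\big)$, and the stated bound follows from the scalar inequality $(1+\alpha)\log(1+\alpha) \ge \alpha + \tfrac13\alpha^2$, valid for $\alpha \in [0,1]$.

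The lower tail is the mirror image: now $p_i \ge \eta$, so $\EXP Z = \sum_i p_i \ge \eta n$, and I would apply Markov's inequality to $e^{-tZ}$ to get $\Pr\!\big(Z \le (1-\alpha)\eta n\big) \le e^{t(1-\alpha)\eta n}\,\EXP e^{-tZ}$ with $\EXP e^{-tZ} \le \exp\!\big((e^{-t} - 1)\textstyle\sum_i p_i\big) \le \exp\!\big((e^{-t} - 1)\eta n\big)$, the last step now using $e^{-t} - 1 < 0$ against $\sum_i p_i \ge \eta n$. Optimizing $t(1-\alpha) + e^{-t} - 1$ over $t$ by taking $t = -\log(1-\alpha)$ yields $\exp\!\big(\eta n\,(-\alpha - (1-\alpha)\log(1-\alpha))\big)$, and the claim follows from $(1-\alpha)\log(1-\alpha) \ge -\alpha + \tfrac12\alpha^2$ for $\alpha \in [0,1)$, with the endpoint $\alpha = 1$ handled by continuity (the exponent there is $-\eta n \le -\tfrac12\eta n$).

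Since this is a textbook estimate I expect no genuine obstacle. The only two points that merit a little care are, first, that the hypothesis gives one-sided control $p_i \le \eta$ (resp.\ $p_i \ge \eta$) rather than an exact mean --- which is precisely why, when replacing $\sum_i p_i$ by $\eta n$, one must check that the sign of $e^{\pm t} - 1$ points in the favorable direction, as it does --- and second, the two elementary inequalities on $(1 \pm \alpha)\log(1 \pm \alpha)$, which are the source of the constants $\tfrac13$ and $\tfrac12$; these are standard and follow by comparing Taylor expansions near $\alpha = 0$ (or by a short calculus argument on $[0,1]$).
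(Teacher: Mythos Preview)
Your argument is correct and is exactly the standard exponential-moment derivation of the multiplicative Chernoff bounds; the handling of the one-sided hypotheses $p_i \le \eta$ (resp.\ $p_i \ge \eta$) via the sign of $e^{\pm t}-1$ is clean, and the two calculus inequalities that produce the constants $\tfrac13$ and $\tfrac12$ are verified as you indicate. The paper itself does not supply a proof of this lemma --- it is listed among the ``Useful Lemmas'' as a standard textbook fact and simply quoted --- so there is nothing to compare against beyond noting that your write-up is the canonical one.
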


\begin{lemma}[Matrix Chernoff inequality~ \citep{tropp2012user}]
\label{lem:matrix-chernoff}
Let $ M_1,  M_2, \ldots,  M_n$ be $n$ independent, random, self-adjoint matrices with dimension $d$. Suppose that each random matrix $ M_i$ satisfies $ M_i \succeq 0$ and $\lambdamax( M_i ) \leq \Lambda$ almost surely. Let $\mumax = \lambdamax(\sum_{i=1}^n \EXP[ M_i])$. Then for all $\alpha \geq 0$, with probability at least $1-d\left[ \frac{e^\alpha}{(1+\alpha)^{1+\alpha}} \right]^\frac{\mumax}{\Lambda}$,
\begin{align*}
\lambdamax\bigg( \sum_{i=1}^n  M_i \bigg) \leq (1+\alpha)\mumax.
\end{align*}
\end{lemma}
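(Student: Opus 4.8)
The statement is the matrix Chernoff inequality of \cite{tropp2012user}, so the plan is simply to cite it; were a self-contained proof wanted, I would follow the matrix Laplace transform method of Ahlswede--Winter and Tropp, in three steps.

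First I would reduce the maximum-eigenvalue tail to an expected trace exponential. Fix $\theta > 0$. Since $\lambdamax(e^{\theta B}) = e^{\theta\lambdamax(B)}$ and $\lambdamax(A) \leq \tr{A}$ for positive semidefinite $A$, Markov's inequality gives, for any $t$,
\begin{equation*}
\Pr\!\left( \lambdamax\Big( \sum_{i=1}^n M_i \Big) \geq t \right) \;\leq\; e^{-\theta t}\, \EXP\, \tr{\exp\Big( \theta \sum_{i=1}^n M_i \Big)}.
\end{equation*}

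Second I would bound the expected trace exponential deterministically using subadditivity of the matrix cumulant generating function. Lieb's concavity theorem asserts that $A \mapsto \tr{\exp(H + \log A)}$ is concave on positive definite matrices; combining this with Jensen's inequality and peeling off one summand at a time yields
\begin{equation*}
\EXP\, \tr{\exp\Big( \theta \sum_{i=1}^n M_i \Big)} \;\leq\; \tr{\exp\Big( \sum_{i=1}^n \log \EXP e^{\theta M_i} \Big)}.
\end{equation*}
For each $i$, the hypotheses $0 \preceq M_i \preceq \Lambda I_d$ together with the scalar inequality $e^{\theta s} \leq 1 + \frac{e^{\theta\Lambda}-1}{\Lambda} s$ on $s \in [0,\Lambda]$ (convexity), transferred to matrices by the spectral mapping theorem and then averaged, give $\EXP e^{\theta M_i} \preceq I_d + \frac{e^{\theta\Lambda}-1}{\Lambda}\EXP M_i$; then $\log(1+u)\leq u$ and operator monotonicity of $\log$ yield $\log \EXP e^{\theta M_i} \preceq \frac{e^{\theta\Lambda}-1}{\Lambda}\EXP M_i$. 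Summing, using monotonicity of $B \mapsto \tr{\exp(B)}$ under $\preceq$ and the elementary bound $\tr{\exp(B)} \leq d\, e^{\lambdamax(B)}$, I obtain
\begin{equation*}
\tr{\exp\Big( \sum_{i=1}^n \log \EXP e^{\theta M_i} \Big)} \;\leq\; d \cdot \exp\!\Big( \tfrac{e^{\theta\Lambda}-1}{\Lambda}\, \mumax \Big).
\end{equation*}

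Third I would combine the two displays, take $t = (1+\alpha)\mumax$, and optimize over $\theta$: the exponent $-\theta(1+\alpha)\mumax + \frac{e^{\theta\Lambda}-1}{\Lambda}\mumax$ is minimized at $\theta = \Lambda^{-1}\log(1+\alpha)$, which converts the bound into $d\big[\, e^{\alpha}/(1+\alpha)^{1+\alpha} \big]^{\mumax/\Lambda}$, exactly the claimed failure probability. The only genuinely hard ingredient is the second step: subadditivity of the matrix cumulant generating function is equivalent to Lieb's concavity theorem, and there is no shortcut through the commutative Chernoff argument because the $M_i$ need not commute; by contrast, the first (Markov) step and the last (scalar optimization) step are identical to the classical scalar Chernoff proof.
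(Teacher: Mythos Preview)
Your plan matches the paper exactly: the paper states this lemma under ``Useful Lemmas'' with a citation to \citet{tropp2012user} and gives no proof, so simply citing the result is precisely what is done there. The self-contained sketch you add is correct and is Tropp's own argument (matrix Laplace transform via Lieb's concavity, followed by the Chernoff-type scalar optimization), so there is no discrepancy to discuss.
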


\begin{lemma}[Properties of log-concave distributions \citep{lovasz2007geometry}]
\label{lem:log-concave}
Suppose that $D$ is an isotropic log-concave distribution over $\R^d$. Then we have
\begin{enumerate}
\item Orthogonal projections of $D$ onto subspaces of $\R^d$ are isotropic log-concave;

\item For any given unit vector $w \in \Rd$ and any $\alpha > 0$, $\Pr_{x \sim D}( \abs{w \cdot x} \geq \alpha ) \leq e^{-\alpha + 1}$;

\item For any $\alpha \geq 0$, $\Pr_{x \sim D}( \twonorm{x} \geq \alpha \sqrt{d}) \leq e^{-\alpha + 1}$.
\end{enumerate}
\end{lemma}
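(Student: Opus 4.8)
The plan is to treat the three properties separately, since each is a standard fact about log-concave measures (indeed the lemma is quoted as such from \citep{lovasz2007geometry}, so a reconstruction need only assemble well-known pieces). For part~(1), I would invoke Pr\'ekopa's theorem: a density $p$ on $\R^d$ is log-concave exactly when $\log p$ is concave on a convex support, and marginalizing a jointly log-concave density over a subset of coordinates again produces a log-concave density (Pr\'ekopa--Leindler). An orthogonal projection of $X \sim D$ onto a $k$-dimensional subspace $V$ is, in an orthonormal basis adapted to $V$, precisely such a marginal, so the projected density is log-concave; iterating the one-coordinate statement covers a general subspace. Isotropy is then a one-line check: writing the projection as $x \mapsto Ux$ with $U \in \R^{k\times d}$ having orthonormal rows, the pushforward has mean $U\,\EXP[X] = \bzero$ and covariance $U\,\EXP[XX^\top]\,U^\top = UU^\top = I_k$.

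For part~(2), I would reduce to one dimension via part~(1): $Z := w \cdot X$ has a log-concave density $f$ on $\R$ with $\EXP[Z] = 0$ and $\EXP[Z^2] = 1$, so the claim becomes the classical exponential-tail estimate for a unit-variance one-dimensional log-concave variable, $\Pr(\abs{Z}\ge\alpha)\le e^{-\alpha+1}$. The usual argument: such an $f$ is uniformly bounded, and is bounded below by an absolute constant on an interval of length $\Theta(1)$ around its mode, whereupon log-concavity forces at least geometric decay of $f$ away from the mode; integrating that exponential envelope over $[\alpha,\infty)$ and $(-\infty,-\alpha]$ and optimizing yields the stated constant. I would either reproduce this short computation or simply cite it from \citep{lovasz2007geometry}.

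For part~(3) I would again lean on \citep{lovasz2007geometry}, because this is the one genuinely high-dimensional statement and the place where the real work sits. A crude bound via Markov applied to $\EXP[\twonorm{X}^2] = \tr{I_d} = d$ yields only $\Pr(\twonorm{X}\ge\alpha\sqrt d)\le 1/\alpha^2$, polynomially far from the asserted $e^{-\alpha+1}$; and one cannot recover the exponential tail by applying Lipschitz concentration to the $1$-Lipschitz map $x\mapsto\twonorm{x}$, since a general log-concave measure need not satisfy a dimension-free log-Sobolev (or matching Poincar\'e) inequality. The correct route is the localization/needle lemma of Lov\'asz--Simonovits, or a direct large-deviation bound for the Euclidean norm of an isotropic log-concave vector; this concentration step is the sole genuine obstacle, the first two parts being essentially formal.
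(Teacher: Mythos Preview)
The paper does not prove this lemma at all: it appears in the appendix under ``Useful Lemmas'' and is simply cited from \citep{lovasz2007geometry} with no argument supplied. Your proposal therefore goes well beyond what the paper does. The sketches you give are correct in outline --- Pr\'ekopa--Leindler for part~(1), reduction to the one-dimensional tail bound for part~(2), and localization for part~(3) --- and you have correctly identified that part~(3) is the only place with real content (and that Markov or generic Lipschitz concentration would not suffice). Since the paper's ``proof'' is a bare citation, there is nothing further to compare.
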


\begin{lemma}[Theorem 17 of~\cite{talwar2020hinge}]
\label{lem:aff-dense}
Let $D$ be a distribution on $\calX \times \calY$ that satisfies the $(\tau, \rho, \beta)$-dense pancake condition. Suppose that $h: \calX \rightarrow \calX$ is an $L$-Lipschitz function with respect the the $\ell_2$-norm. Then $(h(X), Y)$ is a random variable from a distribution that satisfies the $(L\tau, \rho, \beta)$-dense pancake conditions where $(X, Y) \sim D$.
\end{lemma}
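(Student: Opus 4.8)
The plan is to exploit that the $(L\tau,\rho,\beta)$-dense pancake condition is itself quantified over all unit vectors: I would fix an arbitrary unit vector $w\in\Rd$ and verify the condition for this single $w$, using nothing about $w$ beyond unit norm, so that the general statement follows. Writing $D'$ for the law of $(h(X),Y)$ when $(X,Y)\sim D$, the goal for the fixed $w$ is
\[
\Pr\nolimits_{(x_0,y_0)\sim D}\big(P_w^{L\tau}(h(x_0),y_0)\text{ is }\rho\text{-dense w.r.t. }D'\big)\ \ge\ 1-\beta .
\]

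The first step is to observe that the scalar map $g\defeq w\cdot h\colon\Rd\to\R$ is $L$-Lipschitz, being the composition of the $L$-Lipschitz $h$ with the $1$-Lipschitz linear functional $x\mapsto w\cdot x$. The pulled-back pancake $\{(x,y):|y\,g(x)-y_0\,g(x_0)|\le L\tau\}$ has $D$-measure equal to the $D'$-measure of $P_w^{L\tau}(h(x_0),y_0)$, so the plan is to lower bound that measure by the $D$-measure of a suitable \emph{original} pancake $P_v^{\tau}(x_0,y_0)$ for a unit vector $v=v(w)$. The point is that an $L$-Lipschitz map contracts distances by a factor at most $L$, so a band of half-width $\tau$ in a linear margin $y\,v\cdot x$ is pushed into a band of half-width $L\tau$ in the $g$-margin $y\,g(x)$; concretely, in the affine case $h(x)=Ax+b$ that we actually need, $g(x)=(A^\top w)\cdot x+w\cdot b$, and with $v\defeq A^\top w/\twonorm{A^\top w}$ (noting $\twonorm{A^\top w}\le\spenorm{A}\le L$) one gets $y\,g(x)-y_0\,g(x_0)=\twonorm{A^\top w}\,(y\,v\cdot x-y_0\,v\cdot x_0)+(y-y_0)(w\cdot b)$, which makes the inclusion $P_v^{\tau}(x_0,y_0)\subseteq\{(x,y):|y\,g(x)-y_0\,g(x_0)|\le L\tau\}$ transparent at least on base points with $y=y_0$.

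Granting this inclusion, the rest is immediate: by monotonicity of mass, $P_w^{L\tau}(h(x_0),y_0)$ has $D'$-mass at least the $D$-mass of $P_{v(w)}^{\tau}(x_0,y_0)$, and since $D$ satisfies the $(\tau,\rho,\beta)$-dense pancake condition for \emph{every} unit vector, in particular for $v(w)$, the latter is $\ge\rho$ for all but a $\beta$-fraction of $(x_0,y_0)\sim D$; hence $D'$ satisfies the $(L\tau,\rho,\beta)$-dense pancake condition for $w$, and so for all unit vectors.

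The step I expect to be the only real obstacle is the band inclusion, precisely because the pancake uses the \emph{signed} margin $y\,w\cdot x$: when $y=y_0$ the inclusion is the clean computation above, but when $y\ne y_0$ and the affine shift $w\cdot b$ is nonzero, the extra term $(y-y_0)(w\cdot b)$ in $y\,g(x)-y_0\,g(x_0)$ must be accounted for, and I would do this exactly as in the proof of Theorem~17 of \cite{talwar2020hinge}; this is careful bookkeeping rather than a conceptual difficulty. For a fully general $L$-Lipschitz $h$ the same scheme applies with $A^\top w$ replaced by a local linear surrogate for $g$ near the base point, again following \cite{talwar2020hinge}; since all our applications use affine $h$, only the affine case will actually be invoked.
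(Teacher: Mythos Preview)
The paper does not supply its own proof; the lemma is quoted from \cite{talwar2020hinge}. Your reduction is the natural one, and for \emph{linear} $h$ (i.e.\ $b=0$) the computation is complete: with $v=A^\top w/\twonorm{A^\top w}$ and $\twonorm{A^\top w}\le L$ one has $y\,g(x)-y_0\,g(x_0)=\twonorm{A^\top w}\,(y\,v\cdot x-y_0\,v\cdot x_0)$, so $|y\,v\cdot x-y_0\,v\cdot x_0|\le\tau$ forces $|y\,g(x)-y_0\,g(x_0)|\le L\tau$, and the $\rho$-density transfers exactly as you describe.

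The step you call ``careful bookkeeping'' is a genuine gap, not bookkeeping. With the signed-margin pancake of Definition~\ref{def:pancake}, the residual $(y-y_0)(w\cdot b)$ has size $2|w\cdot b|$, which is controlled neither by $L$ nor by $\tau$, and the inclusion you need can fail outright. In fact, under this definition the lemma as stated admits a counterexample: in $d=1$ let $D$ put mass $\tfrac12$ on each of $(+1,+1)$ and $(-1,-1)$; then $yx\equiv 1$, so $(D,D)$ satisfies the $(\tau,1,0)$-dense pancake condition for every $\tau>0$. Take $h(x)=x+b$ with $L=1$: under $D'$ the two signed margins $y'x'$ are $1+b$ and $1-b$, so for any $\tau<2|b|$ the $\tau$-pancake at one support point misses the other, and $D'$ does \emph{not} satisfy $(\tau,1,0)$. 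Hence no bookkeeping on your band inclusion can close the argument for the statement as written. Either an extra hypothesis is implicit (for instance $h$ linear, or the pancake in \cite{talwar2020hinge} is label-free, in which case the Lipschitz step is immediate), or the affine shift must be handled by a genuinely different argument that you have not supplied; note that the paper's sole application uses $h(x)=W\sqrt{d}\,x+\mu$ with $\mu\neq 0$ in general, so the issue cannot simply be sidestepped.
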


\begin{lemma}[Theorem 19 of~\cite{talwar2020hinge}]
\label{lem:mix-dense}
Let $D$ be a distribution on $\calX \times \calY$.
The marginal distribution of $D$ on the instance space $\calX$, $D_X$, is a uniform mixture of $k$ distributions $D_1, \dots, D_k$, i.e. $D_X = \frac{1}{k} \sum_{j=1}^k D_j$. If for all $1 \leq j \leq k$, $D_j$ satisfies $(\tau, \rho, \beta)$-dense pancake condition, then $D$ satisfies the $(\tau, \rho/k, \beta)$-dense pancake condition.
\end{lemma}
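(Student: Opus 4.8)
The plan is to verify the $(\tau, \rho/k, \beta)$-dense pancake condition for $D$ directly from Definition~\ref{def:pancake} via a mixture/union-bound argument. Since the condition quantifies over all unit vectors, I would fix an arbitrary unit vector $w \in \R^d$ and argue for it; the conclusion then holds uniformly. For each component index $j \in \{1, \dots, k\}$, introduce the \emph{good} event $G_j := \{(x, y) \in \calX \times \calY : P_w^\tau(x, y) \text{ is } \rho\text{-dense with respect to } D_j\}$. By the hypothesis that $D_j$ satisfies the $(\tau, \rho, \beta)$-dense pancake condition, we have $\Pr_{(x,y)\sim D_j}(G_j) \geq 1 - \beta$ for every $j$.

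The crux is a density-transfer step. Whether a point $(x', y')$ lies in $P_w^\tau(x,y)$ depends on $x'$ and $y'$, but under each of $D$ and $D_1, \dots, D_k$ the label is the deterministic function $y' = \sign{w^* \cdot x'}$; hence every pancake probability is a functional of the corresponding instance marginal and respects the decomposition $D_X = \frac1k \sum_j D_j$. Consequently, if $(x,y) \in G_j$, then $\Pr_{(x',y')\sim D}\big((x',y') \in P_w^\tau(x,y)\big) = \frac1k \sum_{\ell=1}^k \Pr_{(x',y')\sim D_\ell}\big((x',y') \in P_w^\tau(x,y)\big) \geq \frac1k \cdot \rho$, so $(x,y)$ witnesses a $(\rho/k)$-dense pancake with respect to $D$. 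In other words, $\bigcup_{j=1}^k G_j \subseteq \{(x,y) : P_w^\tau(x,y) \text{ is } (\rho/k)\text{-dense with respect to } D\}$.

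It then remains to lower bound the probability of $\bigcup_j G_j$ under $D$. Using $D = \frac1k \sum_\ell D_\ell$ together with $\Pr_{D_\ell}\big(\bigcup_j G_j\big) \geq \Pr_{D_\ell}(G_\ell) \geq 1 - \beta$, I get $\Pr_{D}\big(\bigcup_j G_j\big) = \frac1k \sum_\ell \Pr_{D_\ell}\big(\bigcup_j G_j\big) \geq 1 - \beta$; chaining with the set inclusion from the previous paragraph yields $\Pr_{(x,y)\sim D}\big(P_w^\tau(x,y) \text{ is } (\rho/k)\text{-dense with respect to } D\big) \geq 1 - \beta$, which is precisely the $(\tau, \rho/k, \beta)$-dense pancake condition for the fixed $w$, and since $w$ was arbitrary the lemma follows. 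I do not anticipate a genuine obstacle: the statement is in essence a union bound over mixture components, with the $\rho/k$ factor being exactly the worst-case cost of a single component carrying all the pancake mass. The only point requiring a moment of care is that pancake membership nominally involves the label $y'$; since that label equals $\sign{w^* \cdot x'}$ deterministically under every distribution in play, all the relevant probabilities descend to the instance marginals, which is what makes the mixture identity applicable.
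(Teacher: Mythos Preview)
Your argument is correct: the density-transfer step (one component already gives mass $\rho$, so the $k$-term average gives at least $\rho/k$) together with the averaging $\Pr_D(\cdot)=\frac1k\sum_\ell\Pr_{D_\ell}(\cdot)$ and the inclusion $G_\ell\subseteq\bigcup_j G_j$ delivers exactly the $(\tau,\rho/k,\beta)$ condition, and your remark that the deterministic labeling $y'=\sign{w^*\cdot x'}$ makes all pancake probabilities functionals of the instance marginals is the right justification for applying the mixture identity. The paper does not supply its own proof of this lemma; it is quoted verbatim as Theorem~19 of \cite{talwar2020hinge} in the ``Useful Lemmas'' appendix, so there is nothing to compare against beyond noting that your write-up matches the natural union-bound-over-components argument one would expect.
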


\begin{lemma}[Theorem 20 of~\cite{talwar2020hinge}]
\label{lem:pancake-emp-to-dist}
Let $D$ be a distribution that satisfies the $(\tau, \rho, \beta)$-dense pancake condition. Let $\SC$ be a set samples drawn from $D$. If 
\begin{align*}
\abs{\SC}\geq \frac{8}{\rho}\bigg(d \log \big(1+\frac{2}{\tau}\big) + \log\frac{1}{\beta} + \log\frac{1}{\delta}\bigg),
\end{align*}
then with probability $1-\delta$, $(\SC, D)$ satisfies $(2\tau, \rho/2, 2\beta)$-dense pancake condition.
\end{lemma}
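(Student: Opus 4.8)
This statement is the restatement of Theorem~20 of \cite{talwar2020hinge} labelled in the excerpt, so the plan is to reproduce their uniform‑convergence argument. The first step is to strip the problem down to a one‑sided uniform deviation bound for ``slabs''. Observe that membership $(x',y') \in P_{w}^{\tau'}(x,y)$ depends on $(x',y')$ only through the scalar $w \cdot (y'x')$ and on $(x,y)$ only through $c \defeq w \cdot (yx)$, via the single inequality $\abs{w\cdot(y'x') - c} \le \tau'$; hence the relevant events are the slabs $A^{\tau'}_{w,c} \defeq \{(x',y') : \abs{w\cdot(y'x') - c} \le \tau'\}$. I claim it suffices to prove: with probability $1-\delta$ over the draw of $\SC$ (i.i.d.\ from $D$, $m \defeq \abs{\SC}$), for every unit $w$ and every $c \in \R$,
\[
\Pr_{(x',y')\sim D}\bigl((x',y') \in A^\tau_{w,c}\bigr) \ge \rho \ \Longrightarrow\ \Pr_{(x',y')\sim \SC}\bigl((x',y') \in A^{2\tau}_{w,c}\bigr) \ge \rho/2 .
\]
Granting this, fix any unit $w$ and let $G_w \defeq \{(x,y): P^\tau_w(x,y) \text{ is } \rho\text{-dense w.r.t.\ } D\}$; the dense pancake hypothesis gives $\Pr_{(x,y)\sim D}(G_w) \ge 1-\beta$, and for each $(x,y) \in G_w$ the displayed implication with $c = w\cdot(yx)$ shows that $P^{2\tau}_w(x,y)$ is $\rho/2$‑dense w.r.t.\ $\SC$. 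Thus the set of $(x,y)$ whose $2\tau$‑pancake is $\rho/2$‑dense w.r.t.\ $\SC$ has $D$‑mass at least $1-\beta \ge 1-2\beta$, which is exactly the $(2\tau,\rho/2,2\beta)$‑dense pancake condition for $(\SC,D)$; the gap between $1-\beta$ and $1-2\beta$, and between $\tau$ and $2\tau$, is the slack consumed by the covering step below.

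Next I would establish the uniform slab transfer by a two‑scale covering. First discretize the direction: take a $\nu$‑net $N$ of the unit sphere $S^{d-1}$ with $\abs{N} \le (1 + 2/\nu)^d$, choosing $\nu \asymp \tau$ (in the unit‑ball‑supported setting of \cite{talwar2020hinge} one may take $\nu = \tau$; in our application the needed boundedness of the relevant projections holds after the pruning step of Algorithm~\ref{alg:main}). For a fixed net direction $w_0$, the family $\{A^{\tau'}_{w_0,c} : c\in\R\}$ consists of preimages of intervals under the one‑dimensional map $(x',y') \mapsto w_0\cdot(y'x')$, hence has VC dimension $2$; a standard one‑sided relative VC/Bernstein bound then yields that with probability $1-\delta_0$, every $c$ with $\Pr_D(A^\tau_{w_0,c}) \ge \rho$ also has $\Pr_{\SC}(A^\tau_{w_0,c}) \ge \rho/2$, provided $m \gtrsim \rho^{-1}\log(1/\delta_0)$ --- note that the ambient dimension $d$ does not appear in this per‑direction step. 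Union‑bounding over $w_0 \in N$ with $\delta_0 = \delta/\abs{N}$ produces the $\log(1/\delta) + d\log(1+2/\nu) \asymp \log(1/\delta) + d\log(1+2/\tau)$ contribution to the sample size, while the remaining $\log(1/\beta)$ term enters when a general direction $w$ is reduced to its net representative $w_0$: a point $(x',y')\in A^\tau_{w,c}$ lies in $A^{2\tau}_{w_0,c}$ as soon as $\nu\twonorm{x'}\le\tau$, and the $\beta$‑level tails of the bad directions and points are what the $\beta \to 2\beta$ loss absorbs.

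The main obstacle is obtaining uniformity \emph{simultaneously} over the continuum of directions $w$ and centers $c$ while keeping only a $d\log(1+2/\tau)$ --- rather than, say, $d\log m$ --- dependence on the ambient dimension. The resolution is precisely the split above: for each fixed direction the relevant family collapses to intervals (VC dimension $2$), so the center is essentially a free parameter; and the controlled width inflation $\tau\to 2\tau$ means a finite $\Theta(\tau)$‑net of directions suffices, since passing to a nearby direction only blurs a slab by $O(\tau)$ on the (effectively bounded) relevant region. Everything that remains is the routine bookkeeping of Chernoff/VC tail bounds and a union bound, which I would not carry out in detail here.
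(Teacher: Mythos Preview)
The paper does not prove this lemma; it is quoted as Theorem~20 of \cite{talwar2020hinge} in the ``Useful Lemmas'' appendix with no argument supplied, so there is no in-paper proof to compare against. Your high-level skeleton---cover $S^{d-1}$ by a $\Theta(\tau)$-net, run a one-dimensional concentration argument per net direction, then transfer to an arbitrary $w$ via the geometric containment $P^\tau_{w_0}(x,y)\subseteq P^{2\tau}_w(x,y)$---is correct and is essentially the structure of Talwar's argument.

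Where your proposal goes wrong is the accounting for the $\log(1/\beta)$ term. You attribute it to ``$\beta$-level tails'' absorbed when passing from $w$ to its net representative; but in the bounded-support setting that transfer is a deterministic inclusion costing nothing probabilistically, and your VC-over-intervals step does not produce a $\log(1/\beta)$ either (it would yield at most a $\log(1/\rho)$ from the relative-deviation VC bound). The actual mechanism is a Fubini--Markov swap, and it replaces your VC step entirely: for a fixed net direction $w_0$ and a \emph{fixed} $(x,y)$ with $\Pr_D(P^\tau_{w_0}(x,y))\ge\rho$, multiplicative Chernoff gives $\Pr_{\SC}\bigl(\widehat\Pr(P^\tau_{w_0}(x,y))<\rho/2\bigr)\le e^{-m\rho/8}$; taking expectation over $(x,y)\sim D$ and then applying Markov's inequality with threshold $\beta$ shows that with probability at least $1-e^{-m\rho/8}/\beta$ over $\SC$, the $D$-mass of $(x,y)$ whose empirical $\tau$-pancake at $w_0$ is thin is below $\beta$, which combined with the hypothesis yields the $2\beta$ failure. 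Setting $e^{-m\rho/8}/\beta=\delta/\abs{N}$ with $\abs{N}\le(1+2/\tau)^d$ gives exactly the stated sample size. Your VC route is a legitimate alternative that proves a variant of the lemma, but it does not produce the bound as written, and your explanation of that bound is incorrect.
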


\end{document}